%% file: CMRDY_neurips_2026.tex
\documentclass{article}

    \PassOptionsToPackage{numbers, compress}{natbib}
\usepackage[main, final]{neurips_2026}

\usepackage[utf8]{inputenc} % allow utf-8 input
\usepackage[T1]{fontenc}    % use 8-bit T1 fonts
\usepackage{hyperref}       % hyperlinks
\usepackage{url}            % simple URL typesetting
\usepackage{booktabs}       % professional-quality tables
\usepackage{amsfonts}       % blackboard math symbols
\usepackage{nicefrac}       % compact symbols for 1/2, etc.
\usepackage{microtype}      % microtypography
\usepackage{subcaption}
\usepackage{graphicx}
\usepackage{wrapfig}
\usepackage{amsmath}
\usepackage{amssymb}
\usepackage{mathtools}
\usepackage{amsthm}
\usepackage{multirow}
\usepackage{bm}
\usepackage[table]{xcolor}

\definecolor{linkgreen}{RGB}{0,135,85}
\hypersetup{
    colorlinks=true,
    allcolors=linkgreen,
    pdfborder={0 0 0}
}

\newtheorem{assumption}{Assumption}
\newtheorem{proposition}{Proposition}

\title{Tail Cues, Principal Corrections: Plug-and-Play \\ Rectification for Open-Set Test-Time Adaptation}

\author{%
  Yingkai Yang \qquad
  Chaoqi Chen\thanks{Corresponding author.} \qquad
  Hui Huang \\
  Guangdong Provincial Key Laboratory of Visual Media and Multidimensional Intelligence \\ College of Computer Science and Software Engineering, Shenzhen University \\
  \texttt{ekyleyang@gmail.com, cqchen1994@gmail.com, hhzhiyan@gmail.com}
}

\begin{document}

\maketitle

% \vspace{-10pt}
\begin{abstract}
Open-Set Test-Time Adaptation (OSTTA) operates on non-stationary target streams where covariate and semantic shifts coexist. Existing TTA methods often obtain adaptation signals by updating model parameters or maintaining mutable memories, which can be costly and vulnerable to unknown samples. We present \textbf{TALOR}, a lightweight, backward-free rectification module that uses the source linear head as a geometric anchor. The head-induced basis decomposes each normalized feature into high-gain principal coordinates and low-gain tail coordinates. TALOR estimates a soft-routed affine correction in the principal subspace, combining a \textit{regime-level} principal bias with a tail-conditioned slope that captures \textit{sample-level} tail-to-principal coupling. Using weighted ridge regression, it subtracts this drift estimate from the principal coordinates, reconstructs the feature and feeds it into the source head. On the ImageNet-C benchmark with six csOOD datasets, TALOR outperforms the next-best UniEnt by $\mathbf{2.2\%}$ H-score while running $\mathbf{2.5\times}$ faster and using only $\mathbf{12\%}$ of its GPU memory; as a post-correction plug-in, it further boosts COME to $\mathbf{68.4\%}$ H-score with only $\bm{2\%}$ additional memory overhead. Code is available at \url{https://github.com/ekyle0522/TALOR}.
\end{abstract}

\vspace{-5pt}
\section{Introduction}
\vspace{-5pt}
Modern recognition systems are increasingly deployed in non-stationary environments, where test inputs may drift away from the source distribution and may also contain previously unseen semantics. Test-Time Adaptation (TTA) \citep{chen2022contrastive,tang2024domain,yuan2024tea,lee2024entropy,boudiaf2022parameter,niu2024test,gui2024active,wang2025effortless,lee2025test, Gong20264DPC} aims to improve robustness using unlabeled test data at deployment time. Open-Set TTA further requires adapting to covariate-shifted in-distribution (csID) samples while simultaneously rejecting covariate-shifted out-of-distribution (csOOD) samples \citep{gao2024unified}. This setting is challenging for common TTA strategies: gradient updates and cache-based modules rely on online adaptation signals that can be distorted by unknown samples, while their additional optimization or memory overhead can be undesirable in real-time deployment. These limitations motivate a lightweight alternative that corrects the test-time representation itself without updating model parameters.

A domain shift is expressed not only in the input space but also in the latent space. When the feature direction deviates from the source geometry, even a small displacement can damage predictions if it lies along directions that the classifier amplifies. This suggests that feature correction should not be treated as an unconstrained high-dimensional update: the linear head already identifies latent space directions to which its outputs are most sensitive. This observation raises a feature-level question:
\begin{center}
\textit{Can the response of latent features to domain shift be approximated from the unlabeled non-stationary stream  and then removed without updating the model?}
\end{center}

Most classification models end with a linear head $h(v)=Wv+b$, which is usually kept fixed during test-time adaptation to preserve the source semantic decision boundary. We observe that this fixed head is not merely a readout layer. It induces a task-aligned coordinate system over penultimate features, and we use its singular spectrum to partition the axes into high-gain and low-gain components. Concretely, the dominant right-singular vectors of $W$ form a \textbf{principal subspace}: changes along these axes are magnified by large singular values and therefore can strongly affect logits, margins, and confidence. The remaining basis vectors form a \textbf{tail subspace}: they have lower direct gain under the classifier, but remain sensitive to target shifts. In online batches, the principal coordinates can contain a removable domain response: a shared principal mean together with sample-specific deviations that are statistically predictable from tail coordinates. We call the latter coupling \textbf{tail-to-principal leakage}. This leads to a latent view of adaptation: domain shift \textit{displaces features relative to the geometry} assumed by the fixed source head, and adaptation should restore their compatibility with this geometry.

Building on this motivation, we propose \texttt{TALOR} (\textbf{T}est-time \textbf{A}daptation via \textbf{L}inear \textbf{O}nline \textbf{R}ectification), a plug-and-play rectification module inserted between the backbone feature and the linear head.
TALOR uses the head to decompose normalized features into high-gain principal coordinates and low-gain tail coordinates.
It then treats the principal coordinates as correction targets and the tail coordinates as shift-sensitive cues, estimating an affine principal drift online in mixed csID/csOOD streams.
The rectified feature is then evaluated by the unchanged source head. TALOR is lightweight because it avoids backward passes, augmentation loops, and long-horizon caches. It is also composable: since it only post-corrects features, it can be applied to a frozen source model or stacked on top of backbone-updating TTA methods as a stabilizing post-correction (Fig.~\ref{fig:talor-complexity-bubble}).

Our contributions are summarized as follows:
\begin{itemize}
    \item We identify a head-induced feature structure for OSTTA: domain shift induces a removable response in classifier-amplified principal coordinates, while low-gain tail coordinates provide shift-sensitive cues for estimating this response.

    \item We propose \textbf{TALOR}, a plug-and-play, backward-free rectification module that couples this classifier-head-induced sensor--target decomposition with soft-routed, tail-conditioned estimation of affine principal drift, removing the drift before the unchanged source head without source data or source statistics.

    \item TALOR is lightweight and broadly effective: it surpasses next-best UniEnt as a standalone method by $\bm{2.2\%}$ H-score, boosts COME by $\bm{4.7\%}$ with only $\bm{2\%}$ memory overhead, and generalizes beyond open-set adaptation to closed-set and dynamic continual protocols.
\end{itemize}

\begin{figure*}[t]
    \centering
    \includegraphics[width=0.95\textwidth]{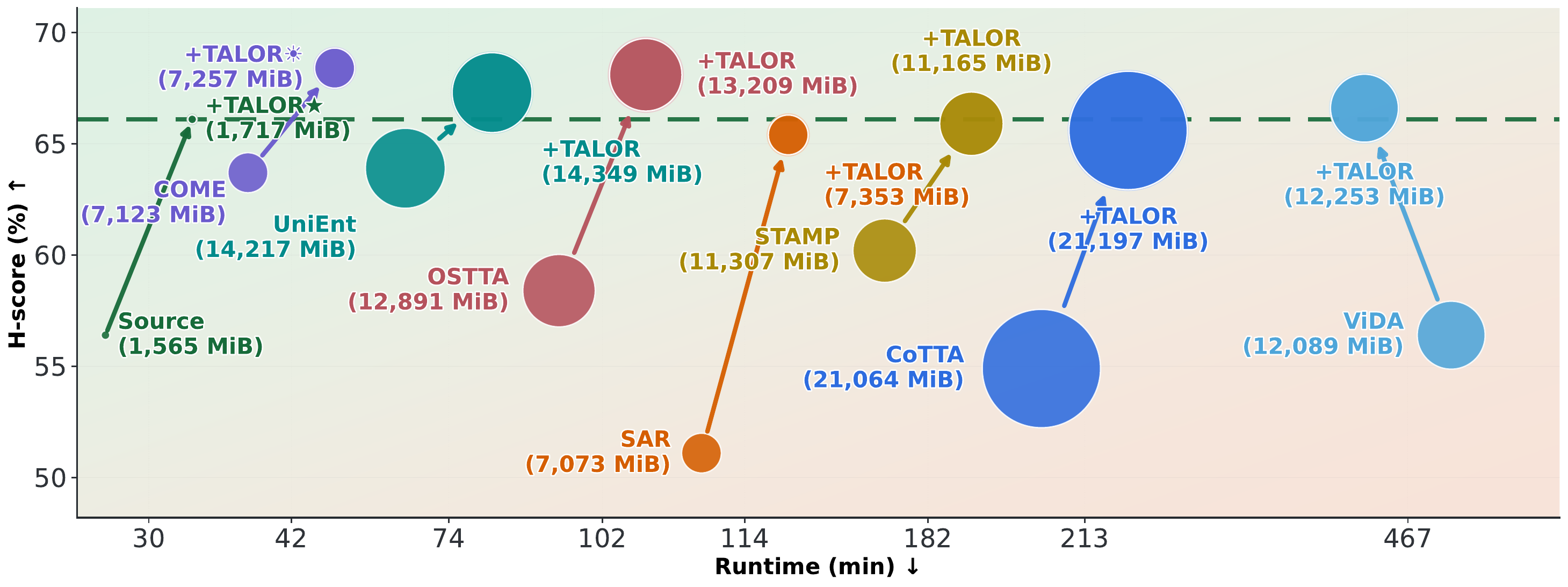}
    \caption{Runtime--performance--memory comparison of TALOR and representative baselines. Circle size denotes GPU usage. The dashed line marks \textbf{Source+TALOR}: even as a standalone plug-in, it surpasses existing methods while using the least runtime and memory. When attached to other TTA methods, TALOR substantially improves their performance with only marginal additional budgets.}
    \label{fig:talor-complexity-bubble}
    \vspace{-10pt}
\end{figure*}

\vspace{-5pt}
\section{Related Work}
\vspace{-5pt}
Prior works on test-time adaptation can be broadly grouped into two lines: continual TTA (CTTA) and open-set TTA (OSTTA). CTTA methods aim to adapt online to evolving covariate shifts while avoiding error accumulation and catastrophic forgetting, typically through regularization, parameter-efficient updates, or reset-based stabilization. OSTTA methods further consider semantic shifts, requiring the model to adapt to known classes while reliably rejecting unknown samples. Closest in spirit, SSA~\citep{adachi2025testtime} aligns source and target statistics within an output-significant PCA subspace of source features; TALOR instead uses a head-induced basis, keeps the complementary tail as a drift sensor, and removes drift sample-wise without backpropagation or source statistics. Appendix~\ref{app:related_work} details representative methods.

\vspace{-5pt}
\section{Preliminaries}
\vspace{-5pt}
\textbf{Problem Setup.} We study Open-Set TTA on non-stationary target streams where covariate shift and unknown semantics coexist. A model consists of a backbone $g:\mathcal{X}\rightarrow\mathbb{R}^D$ mapping an input $x$ to a feature vector $v=g(x)\in\mathbb{R}^D$, and a linear classifier head $h(v)=Wv+b$, with $W\in\mathbb{R}^{C\times D}$ and $b\in\mathbb{R}^{C}$.
In general, the head parameters $\{W, b\}$ typically remain fixed to preserve the semantic decision boundary.
However, the backbone $g$ may essentially operate in two modes: it is either \textit{frozen} or \textit{continuously updated}. Our method is agnostic to this choice, as it operates directly on the streaming feature $v$. The source logits are $z(x)=Wg(x)+b$. At test time $t$, each step provides an unlabeled mini-batch $\mathcal{B}_t=\{x_i\}_{i=1}^{B}$ drawn from a mixture distribution
\vspace{-2pt}
\begin{equation}
p_t(x)=(1-\pi_t)p^{\mathrm{id}}_t(x)+\pi_t p^{\mathrm{ood}}_t(x),
\end{equation}
\vspace{-2pt}
where $p^{\mathrm{id}}_t$ denotes the csID component whose labels belong to the source label set $\mathcal{Y}_{\mathrm{id}}=\{1,\ldots,C\}$, and $p^{\mathrm{ood}}_t$ denotes the csOOD component with unknown semantics relative to $\mathcal{Y}_{\mathrm{id}}$. The stream is non-stationary because both $p^{\mathrm{id}}_t$ and $p^{\mathrm{ood}}_t$ may change over time. The goal of OSTTA is to produce reliable ID predictions for samples from $p^{\mathrm{id}}_t$ while rejecting samples from $p^{\mathrm{ood}}_t$, using only unlabeled test inputs and without access to source training data.

\begin{figure*}[t]
  \centering
  \includegraphics[width=0.95\textwidth]{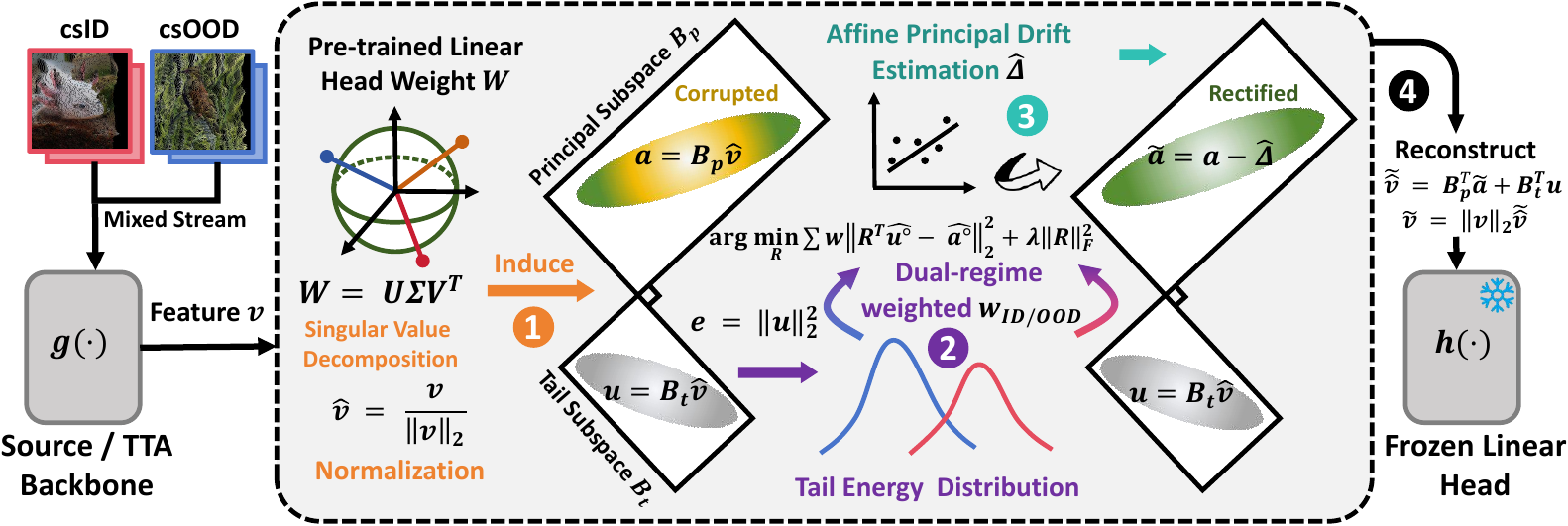}
  \caption{
  Pipeline.
  TALOR is a gradient-free layer between feature extractor and source linear head.
  \textcircled{1} \textbf{Head-induced coordinates:} the normalized feature $\hat v$ is projected into principal coordinates $a$ and tail coordinates $u$ using the right-singular basis of $W$.
  \textcircled{2} \& \textcircled{3}  \textbf{Affine principal drift:} TALOR establishes the dual-regime soft routing via tail-energy, then models removable principal drift.
  \textcircled{4} \textbf{Rectified inference:} The estimated principal drift is subtracted, then the feature is reconstructed from the unchanged tail coordinates and the original feature-norm scale before applying the classifier.}
  \label{fig:main_pipeline}
  \vspace{-10pt}
  
\end{figure*}

\vspace{-5pt}
\section{Proposed Method}
\vspace{-5pt}

\textbf{Overview.}
TALOR performs test-time adaptation by post-correcting features (Fig.~\ref{fig:main_pipeline}). The module relies on a head-induced asymmetry: high-gain principal coordinates serve as correction targets, while low-gain tail coordinates provide cues for estimating the domain-induced response.

% \vspace{-5pt}
\subsection{Head-induced sensor--target decomposition}
% \vspace{-2pt}
\label{sec:head-induced-coordinates}

The frozen linear head is usually treated as a readout layer in test-time adaptation, but its weight matrix also defines an anisotropic coordinate system over classifier-input features.
Directions associated with large singular values are strongly amplified in the logits, so small perturbations along these axes can substantially change margins, confidence, and open-set scores.
In contrast, low-gain directions have weaker direct logit influence, but they can still respond to distributional shift.
This motivates a sensor--target decomposition: principal coordinates are the correction targets, while tail coordinates are used as cues for estimating how the principal coordinates should be corrected.

Formally, let the singular value decomposition of the frozen head weight be
\vspace{-2pt}
\begin{equation}
W=U\Sigma V^{\top},
\label{eq:head-svd}
\end{equation}
% \vspace{-2pt}
where $W\in\mathbb{R}^{C\times D}$, $V\in\mathbb{R}^{D\times D}$ contains the right singular vectors, and the singular values are zero-padded to $\sigma_1,\dots,\sigma_D$ with $\sigma_1\ge \cdots \ge \sigma_D\ge 0$.
Let $s=\min(C,D)$. We choose a split point $k$ using an energy threshold $\tau\in(0,1)$:
\begin{equation}
k_0=\min\left\{m:\ \frac{\sum_{j=1}^{m}\sigma_j^2}{\sum_{j=1}^{D}\sigma_j^2}\ge \tau \right\},\qquad k=\min\{k_0,s-1\}.
\label{eq:split-k}
\end{equation}
The first $k$ right-singular vectors span the principal basis, and the remaining vectors form the tail basis:
\begin{equation}
B_p = V_{:,1:k}^{\top}\in\mathbb{R}^{k\times D},\qquad
B_t = V_{:,k+1:D}^{\top}\in\mathbb{R}^{(D-k)\times D}.
\label{eq:basis}
\end{equation}
For later use, denote $d=D-k$.
The two bases satisfy $B_pB_p^{\top}=I_k$, $B_tB_t^{\top}=I_d$, and $B_p^{\top}B_p+B_t^{\top}B_t=I_D$ \citep{eckart1936approximation}.
When $D>C$, head-null directions are automatically included in the tail basis; App.~\ref{app:implementation-details} details how the same construction applies to both $D<C$ and $D>C$ backbones.

For a streaming feature $v_i=g(x_i)$, TALOR first separates its norm and direction:
\begin{equation}
\rho_i=\|v_i\|_2,\qquad
\hat v_i=\frac{v_i}{\rho_i},\qquad
a_i=B_p\hat v_i\in\mathbb{R}^{k},\qquad
u_i=B_t\hat v_i\in\mathbb{R}^{d}.
\label{eq:coordinates}
\end{equation}
Here $a_i$ and $u_i$ are the principal and tail coordinates of the feature direction.
Since $(B_p,B_t)$ form an orthonormal basis of the feature space, the normalized feature can be reconstructed as
\begin{equation}
\hat v_i = B_p^{\top}a_i+B_t^{\top}u_i.
\label{eq:normalized-reconstruction}
\end{equation}
The corresponding source logits admit the decomposition
\vspace{-2pt}
\begin{equation}
z_i=Wv_i+b=\rho_i\,U\Sigma
\begin{bmatrix}
a_i\\ u_i
\end{bmatrix}+b.
\label{eq:logit-decomposition}
\end{equation}

\begin{figure*}[t]
    \centering
    \begin{subfigure}[t]{0.31\textwidth}
        \centering
        \includegraphics[width=\linewidth]{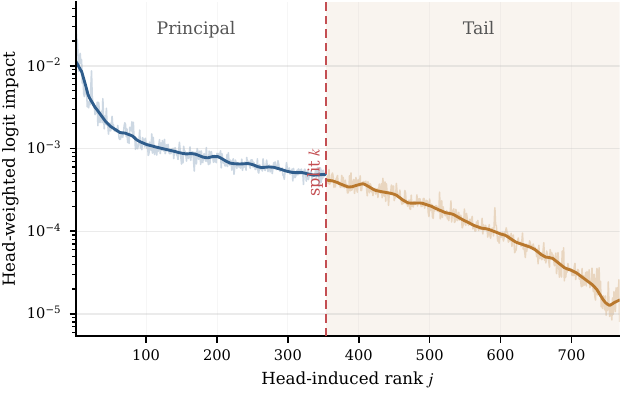}
        \caption{Logit impact.}
        \label{fig:shift-spectrum-logit}
    \end{subfigure}
    \hfill
    \begin{subfigure}[t]{0.31\textwidth}
        \centering
        \includegraphics[width=\linewidth]{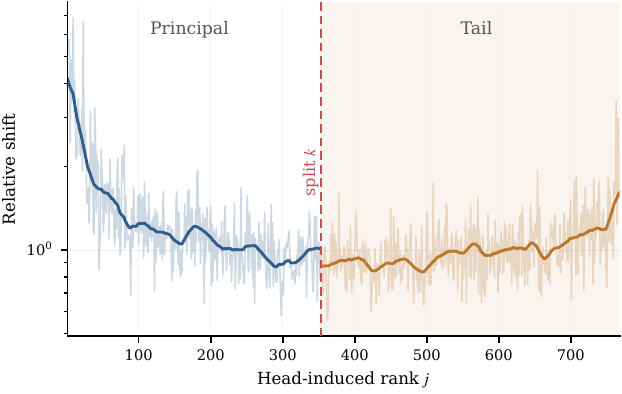}
        \caption{Relative shift.}
        \label{fig:shift-spectrum-relshift}
    \end{subfigure}
    \hfill
    \begin{subfigure}[t]{0.31\textwidth}
        \centering
        \includegraphics[width=\linewidth]{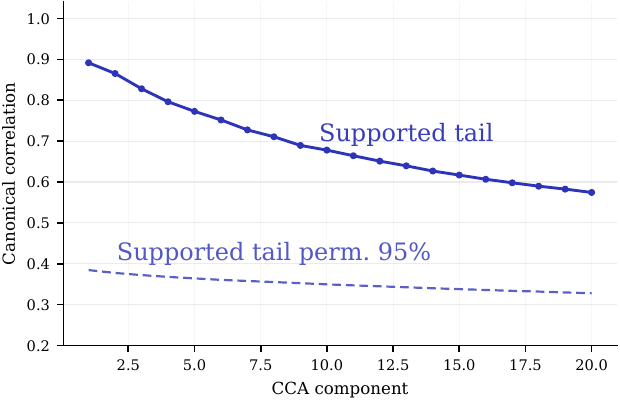}
        \caption{Tail-to-principal CCA.}
        \label{fig:vit-cca-leakage}
    \end{subfigure}
    \caption{Paired clean-corrupted diagnostics in the head-induced basis. The results show that principal coordinates dominate logit impact (a), while tail coordinates remain shift-responsive (b) and linearly predictive of principal drift (c). 
The vertical dashed line marks the $\tau$ split.}
    \label{fig:shift-spectrum}
      \vspace{-10pt}
    
\end{figure*}

Eq.~\ref{eq:logit-decomposition} makes the asymmetry explicit: the same amount of coordinate drift has a much larger logit effect in high-singular-value principal directions than in low-gain tail directions.

We verify this structure with paired clean-corrupted diagnostics in Fig.~\ref{fig:shift-spectrum}; App.~\ref{app:paired-diagnostics} defines the paired protocol and diagnostic metrics.
Let $c_i=V^{\top}\hat v_i$ denote the full head-induced coordinate vector and define the coordinate drift $\Delta c_{ij}=c_{ij}^{\mathrm{corr}}-c_{ij}^{\mathrm{clean}}$.
We measure the head-weighted logit impact of the normalized feature drift $\sigma_j^2\mathbb{E}_i[(\Delta c_{ij})^2]$, the relative shift $\mathbb{E}_i[(\Delta c_{ij})^2]/(\mathrm{Var}_i(c_{ij}^{\mathrm{clean}})+\epsilon)$, and the canonical correlation between tail drift $\Delta u$ and principal drift $\Delta a$.
The diagnostics show that principal coordinates dominate logit impact, tail coordinates remain shift-responsive, and tail drift linearly decodes a nontrivial component of principal drift.
TALOR therefore corrects only the principal coordinates and keeps the tail coordinates as shift sensors rather than correction targets: in the oracle ablation of App.~\ref{app:oracle-target-ablation}, restoring the principal coordinates alone closes $97.0\%$ (ViT-B/16) and $96.7\%$ (ResNet-50) of the clean--corrupt accuracy gap, while additionally restoring the tail adds only $1.2$ and $1.8$ pp.

Given an estimated principal drift $\hat\Delta_i\in\mathbb{R}^{k}$, TALOR applies the principal correction
\begin{equation}
\tilde a_i=a_i-\hat\Delta_i.
\label{eq:target-principal-rectification}
\end{equation}
We next model $\hat\Delta_i$, estimate it from unlabeled mixed streams, and remove it at inference.

\vspace{-5pt}
\subsection{Regime-conditioned affine principal drift}
\label{sec:affine-leakage}

Open-set target batches contain two latent regimes: covariate-shifted in-distribution samples and covariate-shifted out-of-distribution samples.
We denote the regimes by $\mathcal{Q}=\{\mathrm{ID},\mathrm{OOD}\}$. These regimes need not share the same principal-coordinate mean or the same tail-to-principal coupling.
Known and unknown class samples can induce different shared principal offsets, and the principal deviation predictable from tail coordinates can also vary across regimes.
A single global rectifier would average these effects; TALOR therefore maintains a regime-conditioned affine drift model.

For a regime $q\in\mathcal{Q}$, TALOR models the removable principal drift as
\begin{equation}
\Delta_i^{(q)} \approx
\underbrace{\mu_a^{(q)}}_{\text{principal mean}}
+
\underbrace{R^{(q)\top}\left(u_i-\mu_u^{(q)}\right)}_{\text{tail-to-principal slope}},
\label{eq:leakage-model}
\end{equation}
where $\mu_a^{(q)}\in\mathbb{R}^{k}$ is the regime-level principal mean, $\mu_u^{(q)}\in\mathbb{R}^{d}$ is the regime-level tail mean, and $R^{(q)}\in\mathbb{R}^{d\times k}$ is a tail-to-principal regression matrix.
We use \emph{affine principal drift} to denote the whole removable term in Eq.~\ref{eq:leakage-model}; we use \emph{tail-to-principal leakage} specifically for the tail-conditioned slope component.
Because the slope is fitted on centered coordinates, it captures only sample-specific deviations and cannot absorb a branch-level shift.
The principal mean $\mu_a^{(q)}$ therefore provides a branch-specific translation in the normalized principal coordinates; samples with identical routing weights receive the same translation and retain their pairwise differences (App.~\ref{app:mean-safety}).
The slope term $R^{(q)\top}(u_i-\mu_u^{(q)})$ captures the sample-specific principal deviation linearly predictable from centered tail coordinates.

\textbf{Why tail cues?}
A centered principal deviation $a_i-\mu_a^{(q)}$ mixes useful class/instance variation with domain-induced nuisance; without labels, paired clean samples, or source statistics, a principal-only estimator has no external evidence for which part of this deviation to remove.
Tail coordinates instead offer an auxiliary low-gain view
of the same shift, allowing the tail-conditioned slope
to target principal variation supported by
cross-subspace evidence.
Since singular values establish only the impact asymmetry, this requires an explicit condition on the centered coordinates $a_i^{\circ}=a_i-\mu_a^{(q)}$ and $u_i^{\circ}=u_i-\mu_u^{(q)}$.

\begin{assumption}[Shared nuisance]
\label{asm:shared-nuisance}
\normalfont
Within each regime $q$, the centered coordinates admit
the shared-nuisance decomposition
\[
a_i^\circ=L_q\xi_i+s_i+\varepsilon_i^a,
\qquad
u_i^\circ=T_q\xi_i+\varepsilon_i^u.
\]
Here $\xi_i$ is a zero-mean shift-induced nuisance,
$s_i$ represents principal semantic (class/instance)
variation, and $\varepsilon_i^a,\varepsilon_i^u$
are residuals. The tail cue is uncorrelated with
$s_i$ and $\varepsilon_i^a$, i.e.,
$\mathbb{E}_q[u_i^\circ s_i^\top]
=\mathbb{E}_q[u_i^\circ(\varepsilon_i^a)^\top]=0$.
\end{assumption}

\begin{proposition}[Tail-conditioned nuisance identification]
\label{prop:nuisance-regression}
Under Assumption~\ref{asm:shared-nuisance}, let
$\Sigma_{\xi u}=\mathbb{E}_q[\xi_i u_i^{\circ\top}]$
and
$\Sigma_{uu}=\mathbb{E}_q[u_i^\circ u_i^{\circ\top}]$.
For any $\lambda\geq 0$, the population ridge estimator
\begin{equation}
R_{q,\lambda}^{*}
=
\arg\min_R
\mathbb{E}_q\big[
\|a_i^\circ-R^\top u_i^\circ\|_2^2
\big]
+\lambda\|R\|_F^2
\end{equation}
(with the minimum-norm solution at $\lambda=0$)
satisfies
\begin{equation}
R_{q,\lambda}^{*\top}u_i^\circ
=
L_q\Sigma_{\xi u}
(\Sigma_{uu}+\lambda I_d)^\dagger u_i^\circ
\in\mathrm{col}(L_q).
\end{equation}
Thus, the predicted component is uncorrelated with
$s_i$, which remains in the residual.
At $\lambda=0$, the prediction is the minimum-MSE
linear estimate of $L_q\xi_i$ from $u_i^\circ$.
It recovers $L_q\xi_i$ exactly if
$\varepsilon_i^u=0$,
$\operatorname{Cov}_q(\xi_i)\succ 0$,
and $T_q$ has full column rank.
\end{proposition}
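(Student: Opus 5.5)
The plan is to solve the population ridge problem in closed form through its normal equations, substitute the shared-nuisance decomposition of Assumption~\ref{asm:shared-nuisance} into the cross-covariance, and then read off each claim. Write $\Sigma_{ua}=\mathbb{E}_q[u_i^\circ a_i^{\circ\top}]\in\mathbb{R}^{d\times k}$. The objective is a convex quadratic in $R$:
\[
\mathrm{tr}\,\mathbb{E}_q[a_i^\circ a_i^{\circ\top}]-2\,\mathrm{tr}(R^\top\Sigma_{ua})+\mathrm{tr}(R^\top\Sigma_{uu}R)+\lambda\|R\|_F^2 .
\]
Its stationarity condition is $(\Sigma_{uu}+\lambda I_d)R=\Sigma_{ua}$. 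For $\lambda>0$ this gives the unique solution $R^*=(\Sigma_{uu}+\lambda I_d)^{-1}\Sigma_{ua}$. For $\lambda=0$ the minimum-norm solution is $\Sigma_{uu}^\dagger\Sigma_{ua}$. This requires $\Sigma_{ua}\in\mathrm{col}(\Sigma_{uu})$, which holds because $u^\circ$ lies almost surely in $\mathrm{col}(\Sigma_{uu})$. So in both cases $R^*=(\Sigma_{uu}+\lambda I_d)^\dagger\Sigma_{ua}$.

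\textbf{Key substitution.} By the assumption,
\[
\Sigma_{ua}=\mathbb{E}_q[u^\circ(L_q\xi+s+\varepsilon^a)^\top]=\Sigma_{\xi u}^\top L_q^\top+0+0 ,
\]
since the tail cue is uncorrelated with $s_i$ and $\varepsilon_i^a$. Hence
\[
R^{*\top}u_i^\circ=L_q\Sigma_{\xi u}\big((\Sigma_{uu}+\lambda I_d)^\dagger\big)^\top u_i^\circ .
\]
The matrix $\Sigma_{uu}+\lambda I_d$ is symmetric, so its pseudoinverse is symmetric as well, and this equals the displayed formula. Membership in $\mathrm{col}(L_q)$ is then immediate from the left factor $L_q$.

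\textbf{Uncorrelatedness and the MSE claim.} The prediction is $M u_i^\circ$ for a deterministic matrix $M$. Therefore $\mathbb{E}_q[(Mu_i^\circ)s_i^\top]=M\,\mathbb{E}_q[u_i^\circ s_i^\top]=0$, and the prediction is uncorrelated with $s_i$. Because $s_i$ is not captured by the prediction, it remains in the residual $a_i^\circ-R^{*\top}u_i^\circ$. For $\lambda=0$, the best linear predictor of $L_q\xi_i$ from $u_i^\circ$ is $\mathbb{E}[L_q\xi u^{\circ\top}]\Sigma_{uu}^\dagger u^\circ=L_q\Sigma_{\xi u}\Sigma_{uu}^\dagger u^\circ$, which is the same expression. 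Equivalently, the orthogonality of $s_i+\varepsilon_i^a$ to $u^\circ$ means regressing $a^\circ$ on $u^\circ$ coincides with regressing $L_q\xi$ on $u^\circ$.

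\textbf{Exact recovery.} This is the step needing the most care, because of the pseudoinverses. With $\varepsilon_i^u=0$ we have $u^\circ=T_q\xi$. Write $\Sigma_\xi=\mathrm{Cov}_q(\xi_i)\succ0$; then $\Sigma_{\xi u}=\Sigma_\xi T_q^\top$ and $\Sigma_{uu}=T_q\Sigma_\xi T_q^\top$. Set $A=T_q\Sigma_\xi^{1/2}$, which has full column rank. Then
\[
\Sigma_\xi T_q^\top(T_q\Sigma_\xi T_q^\top)^\dagger T_q=\Sigma_\xi^{1/2}A^\top(AA^\top)^\dagger A\,\Sigma_\xi^{-1/2}.
\]
Full column rank of $A$ gives $A^\top(AA^\top)^\dagger A=A^\dagger A=I$. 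The product therefore reduces to $I$, and the prediction equals $L_q\xi_i$.
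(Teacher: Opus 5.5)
Your proposal is correct and follows essentially the same route as the paper's proof. The shared steps are: the normal equations with the pseudoinverse covering both $\lambda>0$ and $\lambda=0$; the substitution $\Sigma_{ua}=\Sigma_{\xi u}^\top L_q^\top$ from Assumption~\ref{asm:shared-nuisance}; the fixed-matrix argument for uncorrelatedness; and the same full-column-rank factorization $T_q\Sigma_\xi^{1/2}$ with $A^\top(AA^\top)^\dagger A=I$ for exact recovery. The differences are only cosmetic: the paper writes the prediction as $L_q$ applied to an auxiliary ridge estimate of $\xi_i$ (i.e.\ $R^*_{q,\lambda}=C^*_{q,\lambda}L_q^\top$) where you substitute directly, and you additionally justify consistency of the normal equations at $\lambda=0$, which the paper takes for granted.
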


Proposition~\ref{prop:nuisance-regression} characterizes
the population target of tail-conditioned regression.
TALOR approximates this mechanism through finite-batch,
soft-weighted ridge regression and EMA updates;
the complete proof and scope are given in
App.~\ref{app:proofs}.

Independent structural support comes from SoMA~\citep{Yun_2025_CVPR}, whose SVD analysis of pre-trained weights finds general, class-shared features in large singular components and texture-biased, context-specific features in minor ones (App.~\ref{app:related_work}).
Since tail-correlated semantic variation can also be removed, we assess the empirical plausibility of the shared-nuisance interpretation through complementary covariance, correction, and negative-control diagnostics (App.~\ref{app:drift-checks}): tail and principal drifts are coupled well above a permutation baseline (Fig.~\ref{fig:vit-cca-leakage}), the observed tail covaries more strongly with the corruption-induced principal displacement than with the clean principal coordinates for every corruption, and without covariate shift TALOR stays within $0.5$ pp of Source accuracy (App.~\ref{app:negative-controls}).

\vspace{-5pt}
\subsection{Soft routing and online affine estimation}
\label{sec:routing-estimation}
\textbf{Tail-energy responsibilities.}
The affine model in Eq.~\ref{eq:leakage-model} is regime-conditioned, but the online stream is unlabeled.
TALOR therefore estimates the two regime models with soft responsibilities rather than hard pseudo-labels.
Following the use of residual-subspace energy in \citep{sanqing2024LEAD}, we use the tail energy
\[
e_i=\|u_i\|_2^2
\]
as a weak routing statistic.
Given a mini-batch $\mathcal{B}_t$, TALOR fits a two-component Gaussian mixture model to $\{e_i\}_{i=1}^{B}$.
Let the mixture parameters be $(\alpha_1,m_1,s_1^2)$ and $(\alpha_2,m_2,s_2^2)$, where $\alpha_j$ is the mixture weight, $m_j$ is the component mean, and $s_j^2$ is the component variance.
The posterior responsibility of component $j$ for sample $i$ is
\begin{equation}
r_{ij}=\frac{\alpha_j\,\mathcal{N}(e_i\mid m_j,s_j^2)}
{\sum_{\ell=1}^{2}\alpha_\ell\,\mathcal{N}(e_i\mid m_\ell,s_\ell^2)}.
\label{eq:gmm-responsibility}
\end{equation}
The tail energy measures how much the normalized feature occupies the low-gain tail subspace induced by the source head.
Although corrupted csID samples are shifted, they still belong to the source label space and empirically tend to preserve stronger occupancy in this tail subspace.
In contrast, csOOD samples lack a matching source class and typically exhibit weaker normalized occupancy there.
We therefore assign the lower-energy component to the OOD-dominated branch,
$j_{\mathrm{OOD}}=\arg\min_{j\in\{1,2\}}m_j$, and define
\begin{equation}
w_i^{(\mathrm{OOD})}:=r_{i j_{\mathrm{OOD}}},\qquad
w_i^{(\mathrm{ID})}:=1-w_i^{(\mathrm{OOD})}.
\label{eq:soft-weights}
\end{equation}
The GMM is only a weak router on the scalar $e_i$, not an assumption that classes or corruption modes are Gaussian~\citep{gao2024unified,li2023robustness}; its weights serve solely as regression responsibilities and to mix drift estimates, so tail energy is a lightweight soft-routing cue rather than a universally calibrated ID/OOD score, and TALOR does not require perfect open-set routing (Sec.~\ref{sec:mechanism-analysis}).

\textbf{Weighted ridge estimation.}
For each branch $q\in\mathcal{Q}$, TALOR first computes branch-weighted batch means:
\begin{equation}
\bar u^{(q)}=\frac{\sum_{i=1}^{B} w_i^{(q)}u_i}{\sum_{i=1}^{B} w_i^{(q)}},\qquad
\bar a^{(q)}=\frac{\sum_{i=1}^{B} w_i^{(q)}a_i}{\sum_{i=1}^{B} w_i^{(q)}}.
\label{eq:weighted-means}
\end{equation}
Define the batch-centered coordinates
$\hat{u}_i^{\circ(q)}=u_i-\bar u^{(q)}$ and
$\hat{a}_i^{\circ(q)}=a_i-\bar a^{(q)}$,
as empirical counterparts of the regime-centered coordinates
in Sec.~\ref{sec:affine-leakage}.
Let $X^{(q)}\in\mathbb{R}^{B\times d}$ and $Y^{(q)}\in\mathbb{R}^{B\times k}$ be row-stacked matrices whose $i$-th rows are $(\hat{u}_i^{\circ(q)})^{\top}$ and $(\hat{a}_i^{\circ(q)})^{\top}$, respectively, and let
$\Omega^{(q)}=\mathrm{diag}(w_1^{(q)},\dots,w_B^{(q)})$.
For $\lambda>0$, TALOR estimates the tail-conditioned slope by weighted ridge regression:
\begin{equation}
\hat R^{(q)}=\arg\min_R
\left\| (\Omega^{(q)})^{1/2}\left(X^{(q)}R-Y^{(q)}\right)\right\|_F^2
+\lambda\|R\|_F^2.
\label{eq:ridge-objective}
\end{equation}
The primal closed-form solution is
\begin{equation}
\hat R^{(q)}=
\left(X^{(q)\top}\Omega^{(q)}X^{(q)}+\lambda I_d\right)^{-1}
X^{(q)\top}\Omega^{(q)}Y^{(q)}.
\label{eq:close_form}
\end{equation}
\vspace{-2pt}
Because the ridge problem is solved on centered coordinates, $\hat R^{(q)}$ estimates only the slope between tail and principal deviations.
The principal mean $\bar a^{(q)}$ is retained separately as the affine intercept.

\textbf{Efficient small-batch solver.}
Streaming TTA often uses small mini-batches, while the tail dimension $d$ can be several hundred or larger.
We therefore solve Eq.~\ref{eq:ridge-objective} in the dual.
Let
$X_w^{(q)}=(\Omega^{(q)})^{1/2}X^{(q)}$ and
$Y_w^{(q)}=(\Omega^{(q)})^{1/2}Y^{(q)}$.
Using the push-through identity \citep{woodbury1950inverting,henderson1981deriving}, we obtain
\begin{equation}
\hat R^{(q)}=(X_w^{(q)})^{\top}
\left(X_w^{(q)}(X_w^{(q)})^{\top}+\lambda I_B\right)^{-1}Y_w^{(q)}.
\label{eq:dual-ridge}
\end{equation}
In practice, we solve the corresponding symmetric positive definite linear system without forming an explicit inverse \citep{higham2002accuracy}.
The dual formulation reduces the matrix factorization cost from $O(d^3)$ to $O(B^3)$ when $B\ll d$ \citep{hastie2005elements}.

\textbf{Online update.}
To smooth noisy mini-batch estimates while tracking non-stationary streams, TALOR updates the branch states with exponential moving averages:
\begin{equation}
\mathbf{s}_t^{(q)} \leftarrow \beta \mathbf{s}_{t-1}^{(q)} + (1-\beta)\hat{\mathbf{s}}^{(q)},
\label{eq:ema-update}
\end{equation}
where $\mathbf{s}_t^{(q)}=(R_t^{(q)},\mu_{u,t}^{(q)},\mu_{a,t}^{(q)})$ denotes the branch states, $\hat{\mathbf{s}}^{(q)}=(\hat R^{(q)},\bar u^{(q)},\bar a^{(q)})$ contains the corresponding mini-batch estimates, and $\beta\in[0,1)$ is the EMA momentum.

\vspace{-5pt}
\subsection{Rectified inference}
\label{sec:rectified-inference}

After the online states are updated, TALOR applies the learned affine rectifier without modifying either the backbone or the source head.
For each sample, it first predicts branch-specific principal drift estimates and then averages them using the current soft responsibilities:
\begin{equation}
\hat\Delta_i^{(q)}
=
\mu_{a,t}^{(q)}
+
R_t^{(q)\top}\!\left(u_i-\mu_{u,t}^{(q)}\right),
\quad q\in\mathcal{Q},
\qquad
\hat\Delta_i
=
\sum_{q\in\mathcal{Q}} w_i^{(q)}\hat\Delta_i^{(q)} .
\label{eq:affine-correction-prediction}
\end{equation}
The resulting drift estimate is subtracted only from the principal coordinates.
The tail coordinates are left unchanged; the corrected coordinate vector is mapped back to feature space and rescaled by the original feature norm $\rho_i$, which acts as a norm anchor rather than an exact constraint since $\|\tilde{\hat v}_i\|_2$ need not equal $1$, before applying the frozen classifier:
\begin{equation}
\tilde a_i = a_i-\hat\Delta_i,\qquad
\tilde{\hat v}_i = B_p^{\top}\tilde a_i+B_t^{\top}u_i,\qquad
\tilde v_i = \rho_i\tilde{\hat v}_i,\quad
\tilde z_i = W\tilde v_i+b .
\label{eq:rectified-inference}
\end{equation}

\vspace{-5pt}
\subsection{Mechanism Analysis}
\label{sec:mechanism-analysis}

\begin{wrapfigure}[16]{r}{0.64\textwidth}
    \vspace{-1.0em}
    \centering
    \begin{subfigure}[t]{0.49\linewidth}
        \centering
        \includegraphics[width=\linewidth]{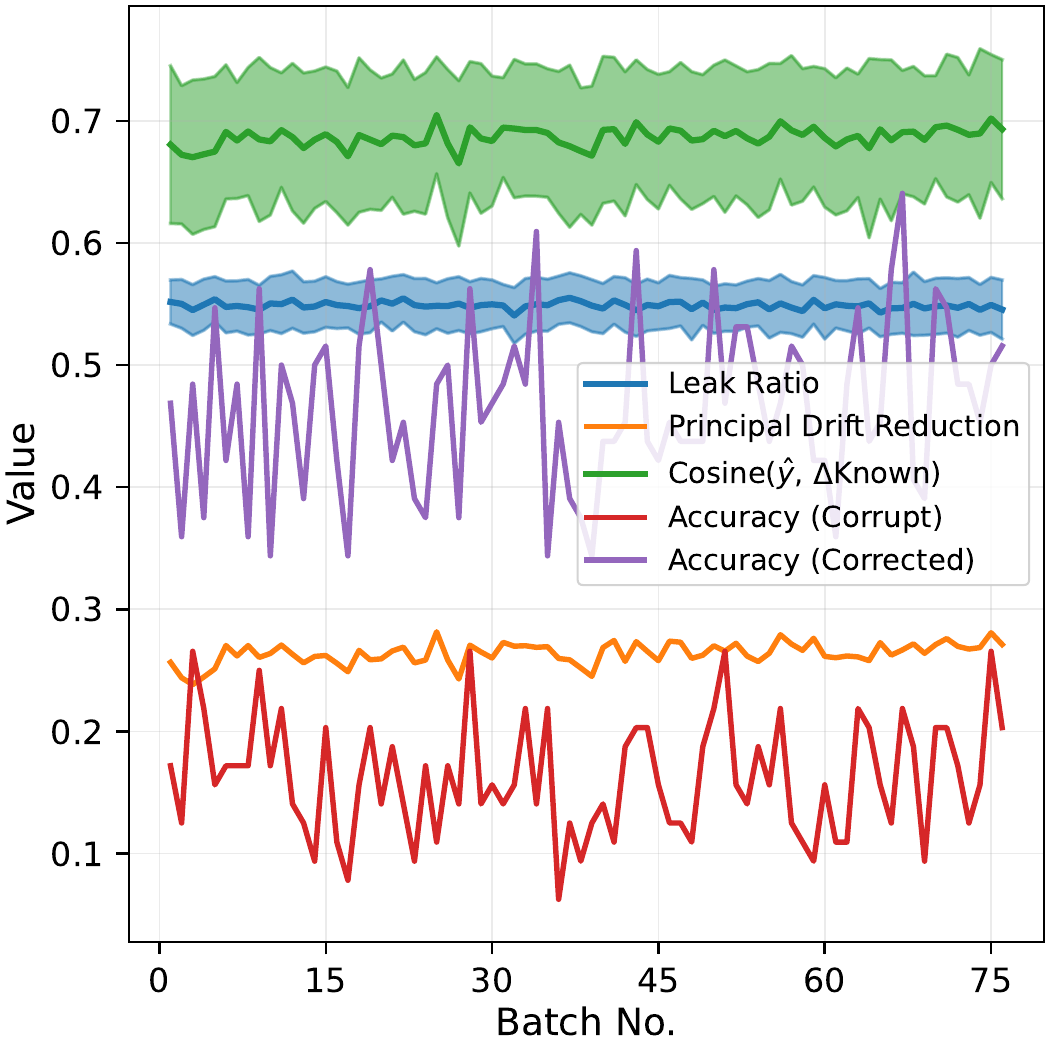}
        \caption{Correction diagnostics.}
        \label{fig:leakage_and_correct}
    \end{subfigure}
    \hfill
    \begin{subfigure}[t]{0.49\linewidth}
        \centering
        \includegraphics[width=\linewidth]{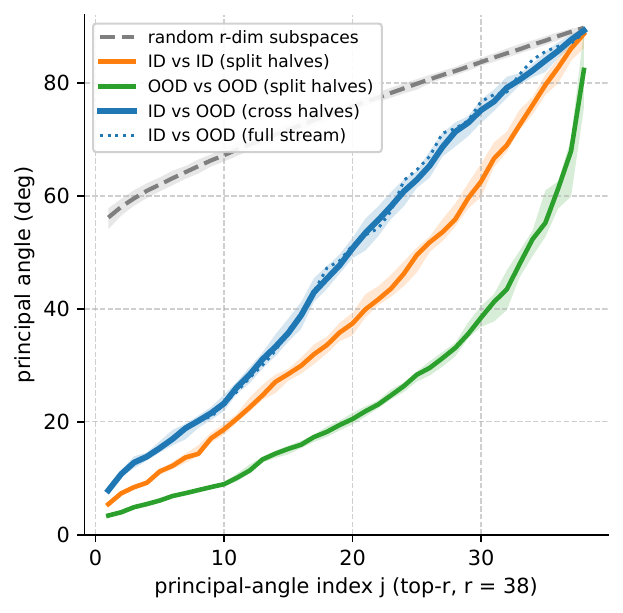}
        \caption{ID/OOD slope subspaces.}
        \label{fig:dual_leakage_mode}
    \end{subfigure}
    \caption{
    (a) Contrast's clean-corrupted analysis of the predicted affine principal drift.
    (b) Energy-truncated principal-angle spectrum between ID and OOD tail-to-principal slope subspaces.}
    \label{fig:leakage-diagnostics-wrap}
    \vspace{-10pt}
\end{wrapfigure}

We further examine whether the online affine drift estimated by TALOR captures a removable principal component.

\textbf{Protocol.}
A frozen ViT-B/16 processes the 15 ImageNet-C corruptions at severity 5 (5{,}000 clean/corrupted pairs each, batch size 64); only the unlabeled corrupted stream drives TALOR, while clean views define the oracle drift $\Delta a_i=a_i^{\mathrm{corr}}-a_i^{\mathrm{clean}}$.
We report the cosine between $\hat\Delta_i$ and $\Delta a_i$ and the principal-drift reduction $\mathrm{PDR}=(\|\Delta a_i\|_2-\|\Delta a_i-\hat\Delta_i\|_2)/(\|\Delta a_i\|_2+\epsilon)$ (App.~\ref{app:paired-diagnostics}).

\textbf{Removing principal drift.}
Fig.~\ref{fig:leakage-diagnostics-wrap}(a) shows that the predicted drift aligns with the oracle principal drift, reduces the measured drift after subtraction, and improves corrupted-sample accuracy, supporting the central modeling assumption that a substantial part of the harmful high-gain shift is predictable from online tail-to-principal statistics.
The slope alone reduces the squared principal drift for all 15 corruptions, and the pattern persists in the open-set benchmark stream (App.~\ref{app:component-alignment} and~\ref{app:drift-checks}).

\textbf{Shared and regime-specific slopes.}
Fig.~\ref{fig:leakage-diagnostics-wrap}(b) compares the energy-truncated top-$r$ left-singular subspaces of the ID and OOD slope maps (App.~\ref{app:leakage-modes}).
Each subspace retains at least 90\% of its slope-map Frobenius energy ($r=38$ for the displayed cell), which avoids dimension-forced zero angles in the full column spaces.
Over the first ${\sim}10$ directions, the ID/OOD cross-half spectrum stays far below the same-dimensional Haar-random reference and close to the ID split-half spectrum; beyond them, it departs toward the random reference.
This indicates a shared leading-energy component together with residual regime-specific directions rather than equality of the full slope maps.

This evidence is consistent with the routing ablation in Tab.~\ref{tab:moduleabla}: a single affine rectifier can exploit the shared leading component, while soft routing can refine residual ID/OOD-specific structure.
It also helps explain why TALOR is less brittle when unsupervised routing is imperfect.

\vspace{-5pt}
\section{Experiments}
\label{sec:experiments}
% \vspace{-5pt}

% Requires: \usepackage{booktabs,multirow,xcolor}
% Softer colors (deep & less neon)
\definecolor{talorgreen}{HTML}{1B5E20}
\definecolor{talorred}{HTML}{8B1A1A}
\begin{table*}[t]
\centering
\caption{Results ($\%$) for ImageNet-C benchmark across six csOOD datasets. \textcolor{talorgreen}{Improvement} and \textcolor{talorred}{degradation} compared to the baseline are highlighted in green or red respectively.  All the results are \textbf{averaged over 15} corruptions.}
\label{tab1:imgc}
\setlength{\tabcolsep}{1.0pt}
\renewcommand{\arraystretch}{0.3}
\setlength{\aboverulesep}{0.5pt}
\setlength{\belowrulesep}{0.5pt}
\setlength{\cmidrulesep}{0.5pt}
{\small
\begin{tabular}{ccc@{\hspace{8pt}}cc@{\hspace{8pt}}cc@{\hspace{8pt}}cc@{\hspace{8pt}}cc@{\hspace{8pt}}cc@{\hspace{12pt}}ccc}
\toprule
\multicolumn{1}{c}{\multirow{2}{*}{Method}} & \multicolumn{2}{c}{Places.-C} & \multicolumn{2}{c}{Textures-C} & \multicolumn{2}{c}{iNatur.-C} & \multicolumn{2}{c}{SUN-C} & \multicolumn{2}{c}{SSB-H.C} & \multicolumn{2}{c}{NINCO--C} & \multicolumn{3}{c}{Mean} \\
\cmidrule(lr){2-3} \cmidrule(lr){4-5} \cmidrule(lr){6-7} \cmidrule(lr){8-9} \cmidrule(lr){10-11} \cmidrule(lr){12-13} \cmidrule(lr){14-16}
\multicolumn{1}{c}{} & Acc & AUC & Acc & AUC & Acc & AUC & Acc & AUC & Acc & AUC & Acc & AUC & Acc & AUC & H-s \\
\midrule
Source & 49.8 & 66.9 & 49.8 & 70.6 & 49.8 & 78.6 & 49.8 & 71.4 & 49.8 & 56.2 & 49.8 & 64.5 & 49.8 & 68.0 & 56.4 \\
\rowcolor{gray!20}
+ TALOR & 56.0 & 80.1 & 55.8 & 85.7 & 55.8 & 93.4 & 55.9 & 91.6 & 55.9 & 70.5 & 55.9 & 79.6 & 55.9 & 83.5 & 66.1 \\
Improve & \textcolor{talorgreen}{+6.2} & \textcolor{talorgreen}{+13.2} & \textcolor{talorgreen}{+6.0} & \textcolor{talorgreen}{+15.1} & \textcolor{talorgreen}{+6.0} & \textcolor{talorgreen}{+14.8} & \textcolor{talorgreen}{+6.1} & \textcolor{talorgreen}{+20.3} & \textcolor{talorgreen}{+6.1} & \textcolor{talorgreen}{+14.4} & \textcolor{talorgreen}{+6.1} & \textcolor{talorgreen}{+15.1} & \textcolor{talorgreen}{+6.1} & \textcolor{talorgreen}{+15.5} & \textcolor{talorgreen}{+9.7} \\
\midrule
CoTTA & 49.8 & 63.2 & 49.1 & 65.3 & 49.5 & 75.3 & 50.0 & 66.1 & 50.1 & 55.3 & 49.4 & 62.2 & 49.6 & 64.6 & 54.9 \\
+ TALOR & 55.7 & 79.4 & 55.4 & 85.3 & 55.6 & 92.3 & 55.8 & 91.7 & 55.7 & 69.2 & 55.4 & 78.6 & 55.6 & 82.7 & 65.6 \\
Improve & \textcolor{talorgreen}{+5.9} & \textcolor{talorgreen}{+16.2} & \textcolor{talorgreen}{+6.3} & \textcolor{talorgreen}{+19.9} & \textcolor{talorgreen}{+6.1} & \textcolor{talorgreen}{+17.0} & \textcolor{talorgreen}{+5.8} & \textcolor{talorgreen}{+25.6} & \textcolor{talorgreen}{+5.6} & \textcolor{talorgreen}{+13.9} & \textcolor{talorgreen}{+6.0} & \textcolor{talorgreen}{+16.4} & \textcolor{talorgreen}{+6.0} & \textcolor{talorgreen}{+18.2} & \textcolor{talorgreen}{+10.8} \\
\midrule
EATA & 55.0 & 61.7 & 17.7 & 62.7 & 39.9 & 63.3 & 51.0 & 66.1 & 55.8 & 58.2 & 53.5 & 63.6 & 45.5 & 62.6 & 48.2 \\
+ TALOR & 55.5 & 75.3 & 17.9 & 61.0 & 40.4 & 78.3 & 51.5 & 83.1 & 56.4 & 69.0 & 54.2 & 76.4 & 46.0 & 73.9 & 53.4 \\
Improve & \textcolor{talorgreen}{+0.5} & \textcolor{talorgreen}{+13.6} & \textcolor{talorgreen}{+0.3} & \textcolor{talorred}{-1.7} & \textcolor{talorgreen}{+0.5} & \textcolor{talorgreen}{+15.0} & \textcolor{talorgreen}{+0.5} & \textcolor{talorgreen}{+17.0} & \textcolor{talorgreen}{+0.7} & \textcolor{talorgreen}{+10.9} & \textcolor{talorgreen}{+0.6} & \textcolor{talorgreen}{+12.8} & \textcolor{talorgreen}{+0.5} & \textcolor{talorgreen}{+11.3} & \textcolor{talorgreen}{+5.2} \\
\midrule
SAR & 50.1 & 55.7 & 44.4 & 46.2 & 45.6 & 48.5 & 49.2 & 52.5 & 57.9 & 62.7 & 54.1 & 65.9 & 50.2 & 55.2 & 51.1 \\
+ TALOR & 53.3 & 79.4 & 50.9 & 87.2 & 52.6 & 92.5 & 54.0 & 91.6 & 59.5 & 73.8 & 56.3 & 81.1 & 54.4 & 84.3 & 65.4 \\
Improve & \textcolor{talorgreen}{+3.2} & \textcolor{talorgreen}{+23.7} & \textcolor{talorgreen}{+6.5} & \textcolor{talorgreen}{+41.0} & \textcolor{talorgreen}{+7.1} & \textcolor{talorgreen}{+44.1} & \textcolor{talorgreen}{+4.8} & \textcolor{talorgreen}{+39.2} & \textcolor{talorgreen}{+1.5} & \textcolor{talorgreen}{+11.0} & \textcolor{talorgreen}{+2.2} & \textcolor{talorgreen}{+15.2} & \textcolor{talorgreen}{+4.2} & \textcolor{talorgreen}{+29.0} & \textcolor{talorgreen}{+14.4} \\
\midrule
OSTTA & 57.4 & 59.8 & 56.1 & 55.7 & 51.1 & 66.6 & 55.7 & 58.7 & 59.0 & 62.7 & 57.5 & 67.3 & 56.1 & 61.8 & 58.4 \\
+ TALOR & 58.7 & 80.4 & 57.3 & 89.7 & 52.4 & 92.4 & 56.8 & 92.9 & 60.2 & 74.2 & 58.7 & 82.2 & 57.4 & 85.3 & 68.1 \\
Improve & \textcolor{talorgreen}{+1.3} & \textcolor{talorgreen}{+20.5} & \textcolor{talorgreen}{+1.2} & \textcolor{talorgreen}{+33.9} & \textcolor{talorgreen}{+1.3} & \textcolor{talorgreen}{+25.8} & \textcolor{talorgreen}{+1.1} & \textcolor{talorgreen}{+34.1} & \textcolor{talorgreen}{+1.3} & \textcolor{talorgreen}{+11.5} & \textcolor{talorgreen}{+1.2} & \textcolor{talorgreen}{+14.9} & \textcolor{talorgreen}{+1.2} & \textcolor{talorgreen}{+23.5} & \textcolor{talorgreen}{+9.6} \\
\midrule
ViDA & 50.4 & 65.3 & 50.3 & 68.7 & 50.3 & 76.9 & 50.3 & 68.5 & 50.6 & 56.4 & 50.4 & 64.2 & 50.4 & 66.7 & 56.4 \\
+ TALOR & 56.5 & 80.2 & 56.3 & 86.0 & 56.4 & 93.5 & 56.4 & 91.7 & 56.5 & 71.0 & 56.5 & 80.0 & 56.4 & 83.7 & 66.6 \\
Improve & \textcolor{talorgreen}{+6.1} & \textcolor{talorgreen}{+14.9} & \textcolor{talorgreen}{+6.0} & \textcolor{talorgreen}{+17.3} & \textcolor{talorgreen}{+6.1} & \textcolor{talorgreen}{+16.6} & \textcolor{talorgreen}{+6.1} & \textcolor{talorgreen}{+23.1} & \textcolor{talorgreen}{+5.9} & \textcolor{talorgreen}{+14.6} & \textcolor{talorgreen}{+6.1} & \textcolor{talorgreen}{+15.8} & \textcolor{talorgreen}{+6.0} & \textcolor{talorgreen}{+17.1} & \textcolor{talorgreen}{+10.1} \\
\midrule
UniEnt & 59.2 & 74.0 & 59.4 & 83.4 & 59.2 & 91.1 & 44.2 & 78.0 & 59.5 & 62.1 & 58.7 & 71.8 & 56.7 & 76.7 & 63.9 \\
+ TALOR & 60.1 & 81.3 & 60.2 & 90.6 & 59.9 & 96.0 & 44.7 & 81.5 & 60.4 & 72.7 & 59.6 & 82.6 & 57.5 & 84.1 & 67.3 \\
Improve & \textcolor{talorgreen}{+0.9} & \textcolor{talorgreen}{+7.2} & \textcolor{talorgreen}{+0.8} & \textcolor{talorgreen}{+7.2} & \textcolor{talorgreen}{+0.7} & \textcolor{talorgreen}{+4.9} & \textcolor{talorgreen}{+0.5} & \textcolor{talorgreen}{+3.5} & \textcolor{talorgreen}{+0.9} & \textcolor{talorgreen}{+10.6} & \textcolor{talorgreen}{+0.9} & \textcolor{talorgreen}{+10.8} & \textcolor{talorgreen}{+0.8} & \textcolor{talorgreen}{+7.4} & \textcolor{talorgreen}{+3.4} \\
\midrule
STAMP & 51.5 & 71.0 & 51.9 & 78.4 & 51.9 & 85.7 & 52.0 & 78.9 & 52.0 & 60.9 & 51.9 & 70.2 & 51.9 & 74.2 & 60.2 \\
+ TALOR & 54.0 & 82.3 & 54.2 & 90.3 & 54.3 & 96.1 & 54.5 & 94.1 & 54.5 & 71.7 & 54.4 & 81.8 & 54.3 & 86.0 & 65.9 \\
Improve & \textcolor{talorgreen}{+2.5} & \textcolor{talorgreen}{+11.2} & \textcolor{talorgreen}{+2.3} & \textcolor{talorgreen}{+11.9} & \textcolor{talorgreen}{+2.4} & \textcolor{talorgreen}{+10.4} & \textcolor{talorgreen}{+2.4} & \textcolor{talorgreen}{+15.2} & \textcolor{talorgreen}{+2.5} & \textcolor{talorgreen}{+10.8} & \textcolor{talorgreen}{+2.4} & \textcolor{talorgreen}{+11.6} & \textcolor{talorgreen}{+2.4} & \textcolor{talorgreen}{+11.9} & \textcolor{talorgreen}{+5.7} \\
\midrule
COME & 58.4 & 76.0 & 58.0 & 78.7 & 51.4 & 81.5 & 59.0 & 82.0 & 59.0 & 58.4 & 56.3 & 71.1 & 57.0 & 74.6 & 63.7 \\
\rowcolor{blue!20}
+ TALOR & 59.2 & 84.4 & 58.6 & 90.9 & 52.1 & 91.5 & 59.6 & 94.2 & 59.6 & 75.3 & 57.1 & 82.5 & 57.7 & 86.5 & 68.4 \\
Improve & \textcolor{talorgreen}{+0.8} & \textcolor{talorgreen}{+8.4} & \textcolor{talorgreen}{+0.6} & \textcolor{talorgreen}{+12.3} & \textcolor{talorgreen}{+0.7} & \textcolor{talorgreen}{+10.0} & \textcolor{talorgreen}{+0.6} & \textcolor{talorgreen}{+12.1} & \textcolor{talorgreen}{+0.6} & \textcolor{talorgreen}{+16.9} & \textcolor{talorgreen}{+0.8} & \textcolor{talorgreen}{+11.4} & \textcolor{talorgreen}{+0.7} & \textcolor{talorgreen}{+11.9} & \textcolor{talorgreen}{+4.7} \\
\midrule
DPCore & 56.5 & 75.5 & 53.8 & 78.3 & 52.1 & 84.0 & 52.4 & 81.7 & 58.3 & 65.2 & 55.6 & 74.9 & 54.8 & 76.6 & 63.5 \\
+ TALOR & 56.7 & 82.9 & 54.2 & 88.1 & 52.5 & 92.0 & 52.7 & 92.0 & 58.5 & 76.2 & 56.0 & 82.5 & 55.1 & 85.6 & 66.6 \\
Improve & \textcolor{talorgreen}{+0.2} & \textcolor{talorgreen}{+7.4} & \textcolor{talorgreen}{+0.4} & \textcolor{talorgreen}{+9.9} & \textcolor{talorgreen}{+0.4} & \textcolor{talorgreen}{+7.9} & \textcolor{talorgreen}{+0.3} & \textcolor{talorgreen}{+10.3} & \textcolor{talorgreen}{+0.2} & \textcolor{talorgreen}{+11.0} & \textcolor{talorgreen}{+0.4} & \textcolor{talorgreen}{+7.6} & \textcolor{talorgreen}{+0.3} & \textcolor{talorgreen}{+9.0} & \textcolor{talorgreen}{+3.1} \\
\bottomrule
\end{tabular}
}
\vspace{-10pt}

\end{table*}

\vspace{-5pt}
\subsection{Experimental Setup}

\textbf{Benchmarks and Baselines.}
To evaluate performance under non-stationary open-set streams, we construct protocols combining covariate-shifted in-distribution and out-of-distribution samples.
For the \textbf{csID} component, we employ the standard ImageNet-C \citep{hendrycks2019robustness} benchmark. To assess generalization to additional domain shifts, we also use the highly nonlinear ImageNet-R \citep{hendrycks2021many}, ImageNet-Sketch \citep{wang2019learning} and the recently released LAION-C \citep{li2025laion} benchmark.
For the \textbf{csOOD} component, we follow established practices by applying the corresponding corruptions to a diverse set of OOD datasets, including Places365 \citep{zhou2017places}, Textures \citep{cimpoi2014describing}, NINCO \citep{bitterwolf2023or}, iNaturalist \citep{van2018inaturalist}, SSB-Hard \citep{vaze2022openset}, and SUN \citep{xiao2010sun}.
We compare our method against a comprehensive suite of state-of-the-art baselines, including strong continual TTA methods (CoTTA \citep{wang2022continual}, SAR \citep{niu2023towards}, EATA \citep{niu2022efficient}, ViDA \citep{liu2023vida}, DPCore \citep{zhang2024dpcore}) and specialized Open-Set TTA frameworks (STAMP \citep{yu2024stamp}, OSTTA \citep{lee2023towards}, UniEnt \citep{gao2024unified}, COME \citep{zhang2024come}).

\textbf{Evaluation Protocol.}
We employ three quantitative metrics to provide a holistic assessment of OSTTA performance.
First, we report the ID classification accuracy (\textbf{Acc}) to measure the model's domain generalization capability on known classes.
Second, we use the area under the ROC curve (\textbf{AUC}) to evaluate the effectiveness of outlier detection in a threshold-free manner.
Finally, to balance these two objectives, we report the \textbf{H-score}, defined as the harmonic mean of Acc and AUC, which serves as the primary indicator of overall performance in open-set adaptation scenarios \footnote{Acc, AUC, H-score are first computed per dataset separately and then averaged.}.

\textbf{Implementation Details.}
To ensure consistency and reproducibility, our experimental framework is built upon the codebase of DPCore\citep{zhang2024dpcore}.
The primary results utilize a ViT-B/16 backbone pretrained on ImageNet-1K from the \texttt{timm} library, additional results using ResNet-50 are detailed in App.~\ref{app:resnet50}.
% We implement TALOR as a lightweight post-hoc logits rectification plugin registered via a forward-pre-hook, which receives features to apply gradient-free correction within the baseline's standard forward pass with negligible extra inference overhead. 
% Backbone-specific subspace construction details are given in App.~\ref{app:implementation-details}.
For OOD detection, we employ the energy score \citep{liu2020energy} as the scoring function to compute AUC.
We fix the batch size to 64 and the global csID:csOOD ratio to 1:1, the common OSTTA protocol~\citep{gao2024unified,lee2023towards,zhang2024come,li2023robustness}; samples are mixed at the dataset level, so the per-batch composition fluctuates naturally (Experiments with different ratios are provided in App.~\ref{app:oodratios}).

To simulate realistic deployment conditions and ensure a fair blind evaluation, we follow the protocol in \citep{yu2024stamp}: hyperparameters for all competing methods are tuned exclusively on the first dataset-domain combination. These settings are subsequently frozen and applied directly to all unseen datasets to rigorously evaluate generalization robustness.
To ensure consistency and avoid dataset-specific tuning, we fix $\tau{=}0.8$, $\beta{=}0.9$, and $\lambda{=}0.01$ for all main results before any exposure to target data to fairly quantify the incremental performance gains attributable to the TALOR mechanism. Regarding specific baselines, we select the strongest configuration for UniEnt by integrating EATA with the UniEnt+ variant. For COME, we adopt its optimal version reported for both continual and open-world adaptation scenarios.

% Requires: \usepackage{booktabs,xcolor}
% Softer colors (deep & less neon)
\definecolor{talorgreen}{HTML}{1B5E20}
\definecolor{talorred}{HTML}{8B1A1A}
\begin{table}[t]
\caption{Results ($\%$) on LAION-C benchmark. Dataset and mean columns report H-score.}
\label{tab2:imgl}
\centering
\setlength{\tabcolsep}{3.5pt}
\renewcommand{\arraystretch}{0.2}
{\small
\begin{tabular}{@{}cccccccc@{}}
\toprule
Method & Places365-C & Textures-C & iNaturalist-C & SUN-C & SSB-Hard-C & NINCO-C & Mean H-s \\
\midrule
Source & 55.4 (\textcolor{talorgreen}{+10.5}) & 57.5 (\textcolor{talorgreen}{+10.7}) & 60.5 (\textcolor{talorgreen}{+9.7}) & 58.4 (\textcolor{talorgreen}{+11.1}) & 54.8 (\textcolor{talorgreen}{+6.9}) & 56.8 (\textcolor{talorgreen}{+8.7}) & 57.2 (\textcolor{talorgreen}{+9.6}) \\
CoTTA & 54.4 (\textcolor{talorgreen}{+11.5}) & 55.8 (\textcolor{talorgreen}{+12.4}) & 58.5 (\textcolor{talorgreen}{+11.7}) & 57.0 (\textcolor{talorgreen}{+12.6}) & 54.6 (\textcolor{talorgreen}{+7.1}) & 56.0 (\textcolor{talorgreen}{+9.6}) & 56.1 (\textcolor{talorgreen}{+10.8}) \\
EATA & 62.0 (\textcolor{talorgreen}{+7.2}) & 65.3 (\textcolor{talorgreen}{+7.8}) & 66.2 (\textcolor{talorgreen}{+7.0}) & 65.3 (\textcolor{talorgreen}{+8.0}) & 62.1 (\textcolor{talorgreen}{+6.0}) & 62.2 (\textcolor{talorgreen}{+8.2}) & 63.8 (\textcolor{talorgreen}{+7.4}) \\
SAR & 38.2 (\textcolor{talorgreen}{+18.4}) & 39.7 (\textcolor{talorgreen}{+15.2}) & 38.5 (\textcolor{talorgreen}{+21.2}) & 40.2 (\textcolor{talorgreen}{+18.6}) & 50.3 (\textcolor{talorgreen}{+9.6}) & 45.1 (\textcolor{talorgreen}{+14.4}) & 42.0 (\textcolor{talorgreen}{+16.2}) \\
OSTTA & 57.1 (\textcolor{talorgreen}{+11.3}) & 54.1 (\textcolor{talorgreen}{+15.3}) & 55.5 (\textcolor{talorgreen}{+14.6}) & 56.6 (\textcolor{talorgreen}{+13.8}) & 60.5 (\textcolor{talorgreen}{+5.4}) & 60.6 (\textcolor{talorgreen}{+8.3}) & 57.4 (\textcolor{talorgreen}{+11.5}) \\
ViDA & 54.4 (\textcolor{talorgreen}{+11.6}) & 56.6 (\textcolor{talorgreen}{+11.7}) & 58.0 (\textcolor{talorgreen}{+11.8}) & 56.9 (\textcolor{talorgreen}{+12.5}) & 54.6 (\textcolor{talorgreen}{+7.0}) & 56.1 (\textcolor{talorgreen}{+9.4}) & 56.1 (\textcolor{talorgreen}{+10.7}) \\
UniEnt & 65.6 (\textcolor{talorgreen}{+5.4}) & 70.3 (\textcolor{talorgreen}{+3.9}) & 72.4 (\textcolor{talorgreen}{+3.2}) & 70.8 (\textcolor{talorgreen}{+3.8}) & 62.8 (\textcolor{talorgreen}{+5.1}) & 64.8 (\textcolor{talorgreen}{+6.6}) & 67.8 (\textcolor{talorgreen}{+4.7}) \\
STAMP & 56.6 (\textcolor{talorgreen}{+9.6}) & 59.0 (\textcolor{talorgreen}{+9.3}) & 61.7 (\textcolor{talorgreen}{+7.9}) & 60.2 (\textcolor{talorgreen}{+9.0}) & 55.6 (\textcolor{talorgreen}{+6.1}) & 57.5 (\textcolor{talorgreen}{+8.1}) & 58.4 (\textcolor{talorgreen}{+8.3}) \\
COME & 63.3 (\textcolor{talorgreen}{+4.9}) & 67.0 (\textcolor{talorgreen}{+4.9}) & 70.8 (\textcolor{talorgreen}{+4.0}) & 68.1 (\textcolor{talorgreen}{+5.0}) & 60.2 (\textcolor{talorgreen}{+7.8}) & 64.9 (\textcolor{talorgreen}{+6.5}) & 65.7 (\textcolor{talorgreen}{+5.5}) \\
DPCore & 69.7 (\textcolor{talorgreen}{+2.9}) & 69.2 (\textcolor{talorgreen}{+3.1}) & 62.9 (\textcolor{talorgreen}{+2.0}) & 72.6 (\textcolor{talorgreen}{+2.7}) & 65.6 (\textcolor{talorgreen}{+5.4}) & 68.8 (\textcolor{talorgreen}{+4.0}) & 68.2 (\textcolor{talorgreen}{+3.4}) \\
\bottomrule
\end{tabular}
}
\vspace{-15pt}

\end{table}

\vspace{-5pt}
\subsection{Main Results}
\vspace{-5pt}
\textbf{ImageNet-C Benchmark.}
Tab.~\ref{tab1:imgc} presents the comprehensive performance comparison on the ImageNet-C benchmark with 15 corruptions at highest severity level.
TALOR delivers consistent gains across all baselines, improving the mean H-score by a clear margin.
These consistent gains suggest that head-induced principal correction is complementary to current TTA objectives and provides a useful post-correction signal beyond their standard adaptation losses.
Even when deployed as a standalone plug-in on the source backbone, TALOR achieves a mean H-score of $\bm{66.1\%}$, surpassing the strongest original baseline UniEnt ($\bm{63.9\%}$) while maintaining near-inference speed.
Notably, TALOR mitigates the severe degradation of SAR under continual domain and semantic shifts, boosting its mean H-score from $\bm{51.1\%}$ to $\bm{65.4\%}$, a substantial improvement of $\bm{+14.4\%}$.
Furthermore, augmenting COME with TALOR attains the state-of-the-art mean H-score of $\bm{68.4\%}$ with a $\bm{+4.7\%}$ gain, establishing it as the most competitive solution under this protocol.
The gains persist under imbalance: standalone TALOR improves the Source H-score by $\bm{+8.2\%}$ at csID:csOOD$=$8:2 and $\bm{+10.2\%}$ at 3:7 (App.~\ref{app:oodratios}).
These gains come with negligible additional time and GPU memory overhead, indicating that TALOR improves robustness with minimal cost.
The efficiency comparison is provided in App.~\ref{app:time_and_mem}.

\textbf{LAION-C Benchmark.}
As shown in Tab.~\ref{tab2:imgl}, TALOR also generalizes to the more recent LAION-C benchmark with six challenging domain shifts, most of which are \textit{nonlinear shifts}.
Even without parameter updates, the standalone plug-in yields a $\bm{+9.6\%}$ gain on the frozen source model.
Substantial improvements are observed across representative adaptation methods, boosting CoTTA by $\bm{+10.8\%}$, EATA by $\bm{+7.4\%}$, OSTTA by $\bm{+11.5\%}$, and ViDA by $\bm{+10.7\%}$ in mean H-score.
Furthermore, it advances the strongest baselines, upgrading UniEnt to about $\bm{72.5\%}$ and COME to $\bm{71.2\%}$, indicating that TALOR provides a complementary correction across diverse shift types.

% Required packages in preamble:
% \usepackage{booktabs,multirow,xcolor,graphicx}

\definecolor{talorgreen}{HTML}{1B5E20}
\definecolor{talorred}{HTML}{8B1A1A}

\begin{table}[t]
\centering
\small
\caption{Closed-set accuracy (\%) results under five evaluation settings. Each entry reports the original accuracy with the TALOR improvement in parentheses.}
\label{tab:closed_combined}
\setlength{\tabcolsep}{5pt}
\renewcommand{\arraystretch}{0.2}
{\small
\begin{tabular}{cccccc}
\toprule
\textbf{Method} 
& \textbf{LAION-C} 
& \textbf{ImgNet-R} 
& \textbf{ImgNet-Sketch} 
& \textbf{Structured (CSC)} 
& \textbf{Dynamic (CDC)} \\
\midrule
Source 
& 51.4 (\textcolor{talorgreen}{+6.1}) 
& 42.9 (\textcolor{talorgreen}{+2.8}) 
& 45.8 (\textcolor{talorgreen}{+3.2}) 
& 49.8 (\textcolor{talorgreen}{+6.1}) 
& 49.8 (\textcolor{talorgreen}{+5.5}) \\

CoTTA 
& 51.4 (\textcolor{talorgreen}{+6.2}) 
& 43.2 (\textcolor{talorgreen}{+2.7}) 
& 46.1 (\textcolor{talorgreen}{+3.2}) 
& 50.0 (\textcolor{talorgreen}{+6.0}) 
& 49.4 (\textcolor{talorgreen}{+6.6}) \\

EATA 
& 61.1 (\textcolor{talorgreen}{+1.6}) 
& 50.9 (\textcolor{talorgreen}{+0.2}) 
& 52.6 (\textcolor{talorgreen}{+1.1}) 
& 59.5 (\textcolor{talorgreen}{+0.8}) 
& 58.5 (\textcolor{talorgreen}{+0.8}) \\

SAR 
& 49.1 (\textcolor{talorgreen}{+8.0}) 
& 45.9 (\textcolor{talorgreen}{+3.1}) 
& 50.0 (\textcolor{talorgreen}{+2.0}) 
& 58.3 (\textcolor{talorgreen}{+1.5}) 
& 48.8 (\textcolor{talorgreen}{+4.2}) \\

OSTTA 
& 56.4 (\textcolor{talorgreen}{+3.4}) 
& 46.2 (\textcolor{talorgreen}{+2.5}) 
& 49.1 (\textcolor{talorgreen}{+1.9}) 
& 58.4 (\textcolor{talorgreen}{+1.5}) 
& 58.2 (\textcolor{talorgreen}{+1.3}) \\

ViDA 
& 51.1 (\textcolor{talorgreen}{+6.2}) 
& 43.1 (\textcolor{talorgreen}{+2.7}) 
& 46.2 (\textcolor{talorgreen}{+3.1}) 
& 50.5 (\textcolor{talorgreen}{+5.7}) 
& 50.8 (\textcolor{talorgreen}{+5.1}) \\

UniEnt 
& 60.1 (\textcolor{talorgreen}{+1.8}) 
& 47.6 (\textcolor{talorgreen}{+1.1}) 
& 49.5 (\textcolor{talorgreen}{+1.8}) 
& 59.4 (\textcolor{talorgreen}{+1.3}) 
& 58.6 (\textcolor{talorgreen}{+1.0}) \\

STAMP 
& 51.4 (\textcolor{talorgreen}{+5.3}) 
& 43.5 (\textcolor{talorgreen}{+2.7}) 
& 46.3 (\textcolor{talorgreen}{+3.0}) 
& 51.5 (\textcolor{talorgreen}{+4.0}) 
& 51.8 (\textcolor{talorgreen}{+3.1}) \\

COME 
& 59.6 (\textcolor{talorgreen}{+1.5}) 
& 49.6 (\textcolor{talorgreen}{+0.2}) 
& 51.6 (\textcolor{talorgreen}{+1.1}) 
& 55.4 (\textcolor{talorgreen}{+1.1}) 
& 55.1 (\textcolor{talorgreen}{+0.9}) \\

DPCore 
& 65.3 (\textcolor{talorgreen}{+0.1}) 
& 47.6 (\textcolor{talorgreen}{+0.1}) 
& 48.3 (\textcolor{talorgreen}{+0.8}) 
& 60.5 (\textcolor{talorgreen}{+0.4}) 
& 57.4 (\textcolor{talorgreen}{+0.6}) \\
\bottomrule
\end{tabular}

}
\vspace{-15pt}
\end{table}

\begin{figure*}[t]
    \centering
    \begin{subfigure}[b]{0.22\textwidth}
        \centering
        \includegraphics[width=\linewidth]{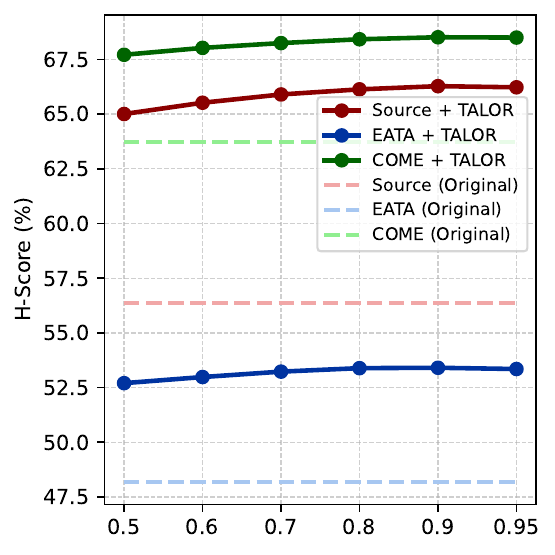}
        \caption{Energy threshold $\tau$}
        \label{fig:energy_thr}
    \end{subfigure}
    \hfill
    \begin{subfigure}[b]{0.22\textwidth}
        \centering
        \includegraphics[width=\linewidth]{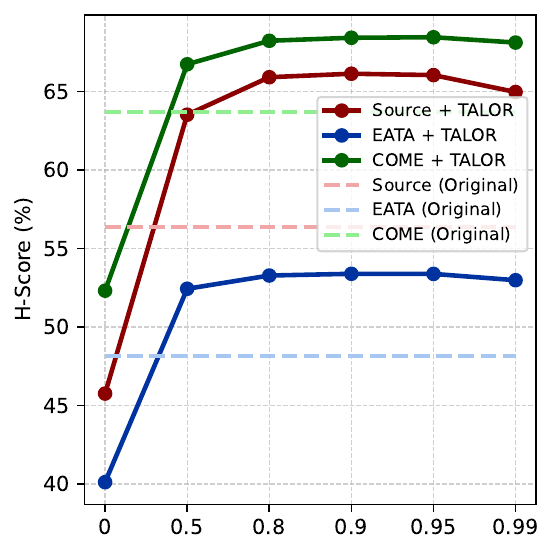}
        \caption{EMA momentum $\beta$}
        \label{fig:ema}
    \end{subfigure}
    \hfill
    \begin{subfigure}[b]{0.22\textwidth}
        \centering
        \includegraphics[width=\linewidth]{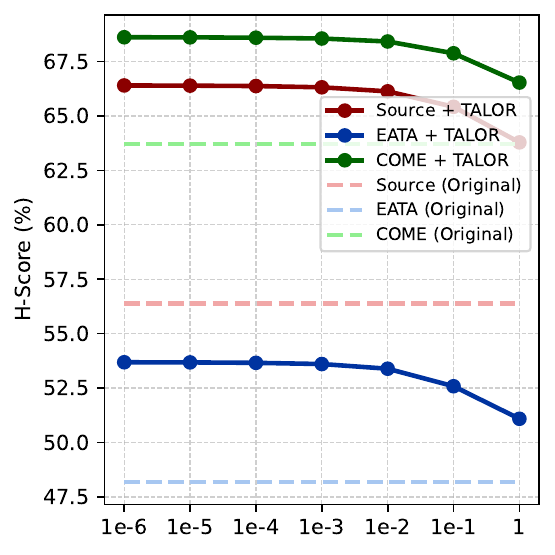}
        \caption{Ridge coefficient $\lambda$}
        \label{fig:lambda}
    \end{subfigure}
    \hfill
    \begin{subfigure}[b]{0.26\textwidth}
        \centering
        \includegraphics[width=\linewidth]{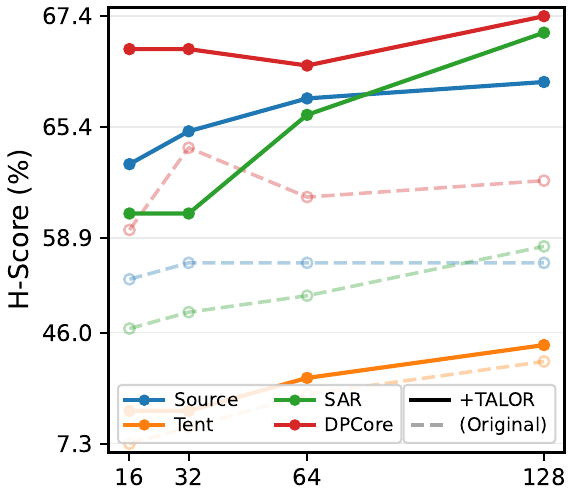}
        \caption{Batch Size}
        \label{fig:batch_size}
    \end{subfigure}
    
    \caption{Ablation studies on ImageNet-C benchmark averaged on six csOOD datasets. }
    \label{fig:Ablation}
    \vspace{-10pt}
\end{figure*}

\textbf{Robustness across Closed-Set and Continual Protocols.}
To test whether the correction mechanism transfers beyond open-set streams, we evaluate TALOR in the closed-set setting where only csID samples are present (Tab.~\ref{tab:closed_combined}).
Since the test stream contains no unknowns, both soft-routed branches of TALOR are estimated from csID samples only.
Despite closed-set TTA being a thoroughly explored field, TALOR consistently yields stable gains across all baselines, suggesting that head-induced principal correction can remain useful even when the label space is fixed. The first three columns highlight TALOR's broad generalization across diverse domain shifts. On benchmarks ranging from the synthetic corruptions of LAION-C to the natural shifts of ImageNet-R and Sketch, TALOR delivers uniform improvements.
Furthermore, we adopt the CSC and CDC protocols from \citep{zhang2024dpcore} to assess stability under complex temporal dynamics. As shown in last two columns, TALOR proves effective not only in the structured transitions of CSC but also in the chaotic, irregular recurrence of CDC.
Collectively, these results indicate that TALOR can be effective in both open- and closed-set scenarios and remains stable across static and dynamic data streams.

\vspace{-5pt}
\subsection{Ablation Studies}
\label{sec:ablation}
\vspace{-5pt}

\textbf{Robustness to hyperparameters and batch size.}
We study the sensitivity of TALOR on the ImageNet-C benchmark. 
\textbf{SVD energy threshold.} $\tau$ controls the split point $k$ between principal and tail subspaces; TALOR exhibits strong stability across different values, with H-score remaining near-plateau over a wide range (Fig.~\ref{fig:energy_thr}). 
\textbf{EMA momentum.} $\beta$ smooths the online updates of branch statistics and drift estimates; performance is similarly stable for moderate-to-large $\beta$, while removing EMA (very small $\beta$) is more prone to noisy batch-wise estimation (Fig.~\ref{fig:ema}). 
\textbf{Ridge coefficient.} $\lambda$ regularizes the weighted ridge regressor for the tail-to-principal slope term; TALOR remains robust for small-to-moderate $\lambda$ and only degrades slightly when over-regularized (Fig.~\ref{fig:lambda}). 
\textbf{Batch size.} While TTA methods typically benefit from the more stable statistics provided by larger batch sizes (Fig.~\ref{fig:batch_size}), TALOR demonstrates remarkable robustness to batch size variations.
Notably, even under small batches ($B{=}16$), TALOR delivers significant improvements over the baselines, achieving high performance comparable to or exceeding that of baselines with larger batches.
% This stability under small batch sizes is crucial for practical streaming deployment where data often arrives in small chunks.

\begin{table*}[t]
\centering
\caption{Module ablation study.}
\label{tab:moduleabla}
\setlength{\tabcolsep}{2.6pt}
\renewcommand{\arraystretch}{0.6}
\small
\resizebox{\textwidth}{!}{
\begin{tabular}{c ccc ccc ccc ccc ccc ccc ccc}
\toprule
\multirow{2}{*}{\textbf{Method}} 
& \multicolumn{3}{c}{\textbf{Source}} 
& \multicolumn{3}{c}{\textbf{EATA}} 
& \multicolumn{3}{c}{\textbf{DPCore}} 
& \multicolumn{3}{c}{\textbf{UniEnt}} 
& \multicolumn{3}{c}{\textbf{COME}} 
& \multicolumn{3}{c}{\textbf{OSTTA}} 
& \multicolumn{3}{c}{\textbf{SAR}} \\
\cmidrule(lr){2-4}
\cmidrule(lr){5-7}
\cmidrule(lr){8-10}
\cmidrule(lr){11-13}
\cmidrule(lr){14-16}
\cmidrule(lr){17-19}
\cmidrule(lr){20-22}
& \textbf{Acc} & \textbf{AUC} & \textbf{H-s}
& \textbf{Acc} & \textbf{AUC} & \textbf{H-s}
& \textbf{Acc} & \textbf{AUC} & \textbf{H-s}
& \textbf{Acc} & \textbf{AUC} & \textbf{H-s}
& \textbf{Acc} & \textbf{AUC} & \textbf{H-s}
& \textbf{Acc} & \textbf{AUC} & \textbf{H-s}
& \textbf{Acc} & \textbf{AUC} & \textbf{H-s} \\
\midrule
Origin 
& 49.8 & 68.0 & 56.4
& 45.5 & 62.6 & 48.2
& 54.8 & 76.6 & 63.5
& 56.7 & 76.7 & 63.9
& 57.0 & 74.6 & 63.7
& 56.1 & 61.8 & 58.4
& 50.2 & 55.2 & 51.1 \\

Mean-only
& 54.8 & 75.4 & 62.3
& 45.8 & 65.3 & 50.2
& 54.8 & 77.3 & 63.7
& 57.3 & 78.2 & 64.9
& 57.3 & 79.0 & 65.5
& 57.0 & 73.7 & 63.7
& 53.7 & 76.9 & 62.3 \\

Slope-only
& 51.0 & 80.8 & 61.4
& 45.6 & 73.1 & 52.4
& \textbf{55.1} & 85.3 & 66.5
& 56.8 & 83.6 & 66.6
& 57.3 & 85.2 & 67.8
& 56.5 & 82.1 & 66.5
& 49.7 & 73.6 & 58.4 \\

\rowcolor{blue!10}
Full 
& \textbf{55.9} & 83.5 & 66.1
& 46.0 & 73.9 & 53.4
& \textbf{55.1} & 85.6 & 66.6
& \textbf{57.5} & 84.1 & 67.3
& \textbf{57.7} & 86.5 & 68.4
& \textbf{57.4} & 85.3 & 68.1
& \textbf{54.4} & 84.3 & 65.4 \\

Single Branch 
& 55.6 & 82.4 & 65.4
& 45.8 & 73.4 & 53.0
& 54.9 & 84.9 & 66.1
& 57.3 & 83.3 & 66.8
& 57.3 & 86.1 & 68.0
& 57.2 & 84.7 & 67.6
& 54.3 & 83.5 & 64.9 \\

Oracle 
& 55.7 & \textbf{86.3} & \textbf{66.9}
& \textbf{47.1} & \textbf{76.0} & \textbf{55.0}
& 55.0 & \textbf{88.4} & \textbf{67.4}
& \textbf{57.5} & \textbf{86.9} & \textbf{68.2}
& 57.6 & \textbf{88.6} & \textbf{69.0}
& 57.2 & \textbf{88.4} & \textbf{69.0}
& 54.1 & \textbf{86.6} & \textbf{65.9} \\
\bottomrule
\end{tabular}
}
\vspace{-10pt}

\end{table*}

\textbf{Ablating affine terms and soft regime routing.}
Tab.~\ref{tab:moduleabla} ablates two design axes of TALOR: the affine form of the principal drift estimator and the use of soft ID/OOD regime routing.
(1) \textbf{Mean-only} uses only the branch principal mean, i.e., $\hat\Delta_i^{(q)}=\mu_{a,t}^{(q)}$, while \textbf{Slope-only} uses only the tail-to-principal slope, i.e., $\hat\Delta_i^{(q)}=R_t^{(q)\top}(u_i-\mu_{u,t}^{(q)})$.
The two terms are complementary: the mean term mainly improves ID accuracy by recentering the target principal coordinates, whereas the slope term mainly improves AUC by correcting sample-level tail-to-principal leakage, which is larger and better aligned with the paired drift for csOOD than for csID samples (App.~\ref{app:component-alignment}).
The \textbf{Full} model combines both terms and therefore gives a synergistic Acc--AUC gain across baselines.
(2) The routing variants isolate the value of regime separation.
\textbf{Single Branch} uses one affine principal rectifier for the whole mixed stream.
It remains competitive, consistent with the substantial shared leading-energy slope component in Fig.~\ref{fig:leakage-diagnostics-wrap}(b), while also providing a degenerate yet effective form of TALOR when extremely severe shifts make routing unreliable.
Nevertheless, the H-score and especially AUC generally follow the hierarchy \texttt{Single Branch} $<$ \texttt{Full TALOR} $<$ \texttt{Oracle} across baselines.
This ordering is consistent with soft ID/OOD routing refining residual regime-specific structure: the unsupervised GMM moves the estimator beyond a mixed regressor, while GT routing gives the upper bound when that residual structure is separated perfectly.

\vspace{-5pt}
\section{Conclusion}
\vspace{-5pt}
We presented TALOR, a lightweight plug-in for Open-Set TTA that uses the frozen source linear head as a geometric anchor for feature rectification.
TALOR innovatively estimates a removable affine principal drift through a principal mean and a tail-to-principal slope map, and subtracts this drift estimate, reconstructs the tailored feature and feeds it to the source head. Extensive experiments show that TALOR is effective as a standalone rectifier and as a stabilizing post-correction module across open-set and closed-set continual protocols.

\vspace{-5pt}
\section*{Acknowledgments and Disclosures}
\vspace{-5pt}
\textbf{Funding.} This work was supported in part by Guangdong S\&T Program (2024B0101050004), NSFC (W2611096, Overseas), ICFCRT (W2441020), Shenzhen Science and Technology Program (KJZD20240903100022028, KQTD20210811090044003), and Guangdong Provincial Key Laboratory of Visual Media and Multidimensional Intelligence (2026B1212010012).

\textbf{Competing Interests.} The authors declare no competing interests.

\bibliographystyle{plainnat}
\bibliography{nips2026}

%%%%%%%%%%%%%%%%%%%%%%%%%%%%%%%%%%%%%%%%%%%%%%%%%%%%%%%%%%%%

\newpage
\appendix

\vspace{-5pt}
\section{Related Work}
\vspace{-5pt}
\label{app:related_work}
\textbf{Continual Test-Time Adaptation.} Continual Test-Time Adaptation (CTTA) aims to cope with error accumulation and catastrophic forgetting under non-stationary test distributions \citep{wang2022continual,wang2023multiple,cao2023multi,lee2024becotta,chen2024each,hong2023mecta,zhang2024dpcore,ni2025maintaining,parkrethinking,liu2025test,Maharana2024PALMPA, maharana2026continual}. A common strategy is to stabilize online updates via regularization: EATA \citep{niu2022efficient} and SAR \citep{niu2023towards} constrain overly aggressive parameter changes, while FIND \citep{jiang2025feature} decouples features to normalize statistics robustly under dynamic shifts. Parameter-efficient methods further isolate domain-specific knowledge to reduce interference: VDP \citep{gan2023decorate} uses visual prompts to capture distribution shifts, ViDA \citep{liu2023vida} allocates low-/high-rank adapters homeostatically, and PAID \citep{wang2025paid} restricts updates to orthogonal transformations to preserve pre-trained geometry. 

\textbf{Open-Set Test-Time Adaptation.} 
OSTTA addresses covariate and semantic shifts by adapting to known classes while rejecting unknowns \citep{Yang_2026_CVPR, Zhao2026MitigatingTI, zou2026good}. 
STAMP \citep{yu2024stamp} uses class-balanced memory and self-weighted entropy to stabilize updates and outlier detection. 
OSTTA \citep{lee2023towards} mitigates open-set noise by filtering losses via confidence gaps, while UniEnt \citep{gao2024unified} employs distribution-aware filtering to minimize entropy for knowns and maximize it for unknowns. 
To curb overconfidence, COME \citep{zhang2024come} models predictive uncertainty for conservative entropy minimization. 
OWTTT \citep{li2023robustness} proposes open-world self-training with dynamic prototype expansion, progressively absorbing reliable novel clusters while using prototype-based rejection to separate unknowns. 
For foundation models, ZS-TTA \citep{cao2025noisy} couples zero-shot VLM classification with an online noise detector trained on frozen VLM features to gate entropy-based adaptation, while ROSITA \citep{sreenivas2024efficient} enables single-image OSTTA through contrastive separation and dynamic feature banks. A unified standard is currently missing regarding whether OSTTA should simultaneously accommodate continual adaptation without model resetting. 
While OSTTA and UniEnt adopt the non-resetting protocol, methods such as STAMP, COME, and OWTTT employ resetting between distributions. This work follows the more challenging non-stationary setting established by the former.

\textbf{Subspace-Aware Adaptation.}
A recent line of work recognizes that different subspaces of the feature representation contribute unequally to model outputs, motivating selective adaptation strategies.
SSA~\citep{adachi2025testtime} identifies the output-significant subspace via PCA of the source-feature covariance and aligns source and target marginal distributions only within this subspace, excluding complementary dimensions from adaptation. SSA focuses on closed-set classification and regression tasks and requires source feature statistics (mean, PCA basis, and variances) at test time; it does not address open-set scenarios or provide regime-conditioned mechanisms for unknown rejection.
Independently, SoMA~\citep{Yun_2025_CVPR} analyzes the weight matrices of vision foundation models via SVD and finds that singular components with large singular values extract general features spanning many classes, while minor singular components mainly capture fine-grained, texture-biased, and context-specific features.
Although SoMA studies backbone weights rather than a classifier head, this functional separation suggests that semantic and nuisance variation in TALOR's principal and tail subspaces are driven by largely distinct underlying factors, providing independent structural support for Assumption~\ref{asm:shared-nuisance}.

\vspace{-5pt}
\section{Implementation Details}
\vspace{-5pt}
\label{app:implementation-details}

\textbf{Backbone-dependent head geometry.}
TALOR constructs its coordinate system once from the frozen linear head before online inference.
For ImageNet-1K, the number of classes is $C=1000$, while the classifier-input feature dimension $D$ depends on the backbone.
Let $W\in\mathbb{R}^{C\times D}$ denote the head weight and $s=\min(C,D)$.
We compute the full singular value decomposition
\begin{equation}
W=U\Sigma V^{\top},
\qquad
U\in\mathbb{R}^{C\times C},
\quad
\Sigma\in\mathbb{R}^{C\times D},
\quad
V\in\mathbb{R}^{D\times D}.
\end{equation}
The decomposition provides $s$ singular values, including any zeros due to rank deficiency.
When $D>C$, the additional $D-s$ right-singular vectors span directions in the null space of $W$ and have zero singular values.
The split point $k$ is selected using the singular-value energy criterion in Eq.~\ref{eq:split-k} and clamped to $k\le s-1$, ensuring a non-empty tail subspace.
The principal and tail bases used in Sec.~\ref{sec:head-induced-coordinates} are defined as
\begin{equation}
B_p=(V_{:,1:k})^{\top}\in\mathbb{R}^{k\times D},
\qquad
B_t=(V_{:,k+1:D})^{\top}
\in\mathbb{R}^{(D-k)\times D}.
\end{equation}
Since $V$ is orthogonal, these bases form an orthogonal decomposition of the full feature space, satisfying
$B_pB_p^{\top}=I_k$,
$B_tB_t^{\top}=I_{D-k}$, and
$B_p^{\top}B_p+B_t^{\top}B_t=I_D$.
Thus, TALOR always operates in the $D$-dimensional feature space rather than the $C$-dimensional class space; the class dimension influences the decomposition only through the geometry of the frozen classifier head.

\textbf{ViT-B/16: $D<C$.}
For ViT-B/16, the classifier head has shape $W\in\mathbb{R}^{1000\times 768}$, so $D=768<C=1000$ and $s=768$.
The right-singular basis spans the entire 768-dimensional feature space.
After the energy split, TALOR obtains
\begin{equation}
B_p\in\mathbb{R}^{k\times 768},
\qquad
B_t\in\mathbb{R}^{(768-k)\times 768}.
\end{equation}
There is no additional classifier null space beyond the feature dimension; the tail consists of the lower-gain axes within the ViT feature space.

\textbf{ResNet-50: $D>C$.}
For ResNet-50, the classifier head has shape $W\in\mathbb{R}^{1000\times 2048}$, so $D=2048>C=1000$ and $s=1000$.
The full right-singular basis contains both classifier-supported axes and feature axes that lie in the null space of the head.
After selecting $k\le 999$, TALOR forms
\begin{equation}
B_p\in\mathbb{R}^{k\times 2048},
\qquad
B_t\in\mathbb{R}^{(2048-k)\times 2048}.
\end{equation}
The tail therefore contains the low-gain singular vectors $V_{:,k+1:1000}$ together with the head-null vectors $V_{:,1001:2048}$.
This is the same construction as in the ViT case: changing the backbone only changes the shape of $W$, and the SVD automatically supplies the appropriate tail geometry.
This economy is central to TALOR: a single head-induced decomposition covers both $D<C$ and $D>C$ without architecture-specific rules.

\textbf{Routing fail-safe.}
The two-component GMM of Sec.~\ref{sec:routing-estimation} is refitted in every batch on the scalar tail energies.
If the fit raises a numerical exception, TALOR assigns neutral responsibilities $w_i^{(\mathrm{ID})}=w_i^{(\mathrm{OOD})}=0.5$.
No label- or dataset-dependent confidence threshold is used, and only the affine branch states are smoothed across batches by the EMA in Eq.~\ref{eq:ema-update}.

\vspace{-5pt}
\section{Paired Corruption Diagnostics}
\vspace{-5pt}
\label{app:paired-diagnostics}

This appendix defines the paired diagnostics used to validate the rectification mechanism.
We use paired samples $(x_i, x_i^{\mathrm{corr}})$ where $x_i^{\mathrm{corr}}$ is a corruption version of $x_i$.
Projecting the normalized features onto the head-induced basis yields principal/tail coordinates
$(a_i,u_i)$ for $x_i$ and $(a_i^{\mathrm{corr}},u_i^{\mathrm{corr}})$ for $x_i^{\mathrm{corr}}$.
Define the coordinate drifts
\begin{align}
\Delta a_i &= a_i^{\mathrm{corr}}-a_i, &
\Delta u_i &= u_i^{\mathrm{corr}}-u_i.
\end{align}

\subsection{CCA coupling spectrum}
\label{app:cca-leakage-spectrum}

To test whether low-gain drift contains a linearly decodable high-gain drift component, we compute canonical correlation analysis (CCA) between $\Delta u$ and $\Delta a$.
Let $X$ and $Y$ be the centered row-stacked matrices of $\Delta u_i$ and $\Delta a_i$, respectively.
With covariance estimates
\begin{equation}
\Sigma_{uu}=\frac{1}{n-1}X^\top X,\qquad
\Sigma_{aa}=\frac{1}{n-1}Y^\top Y,\qquad
\Sigma_{ua}=\frac{1}{n-1}X^\top Y,
\end{equation}
and a small regularizer $\gamma$, we form
\begin{equation}
K=(\Sigma_{uu}+\gamma I)^{-1/2}\Sigma_{ua}(\Sigma_{aa}+\gamma I)^{-1/2}.
\end{equation}
The CCA spectrum is the sequence of singular values
\begin{equation}
\rho_j=\sigma_j(K),\qquad j=1,2,\ldots,
\end{equation}
where a large $\rho_j$ means that one linear combination of tail/null drift and one linear combination of principal drift are strongly correlated.
For the permutation baseline, we randomly shuffle the rows of $\Delta u$, recompute the spectrum, and plot the 95th percentile over permutations.
The ViT-B/16 diagnostics are shown in Fig.~\ref{fig:shift-spectrum}; the ResNet-50 counterpart in Fig.~\ref{fig:r50-shift-spectrum} additionally reflects the classifier-null axes available when $D>C$.
App.~\ref{app:drift-checks} complements this spectrum with a regression-based test of drift predictability against a permutation null.

\begin{figure*}[t]
    \centering
    \begin{subfigure}[t]{0.31\textwidth}
        \centering
        \includegraphics[width=\linewidth]{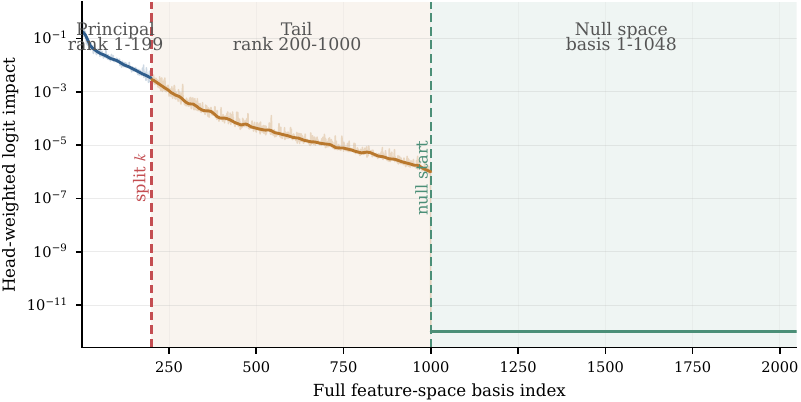}
        \caption{Logit impact.}
        \label{fig:r50-shift-spectrum-logit}
    \end{subfigure}
    \hfill
    \begin{subfigure}[t]{0.31\textwidth}
        \centering
        \includegraphics[width=\linewidth]{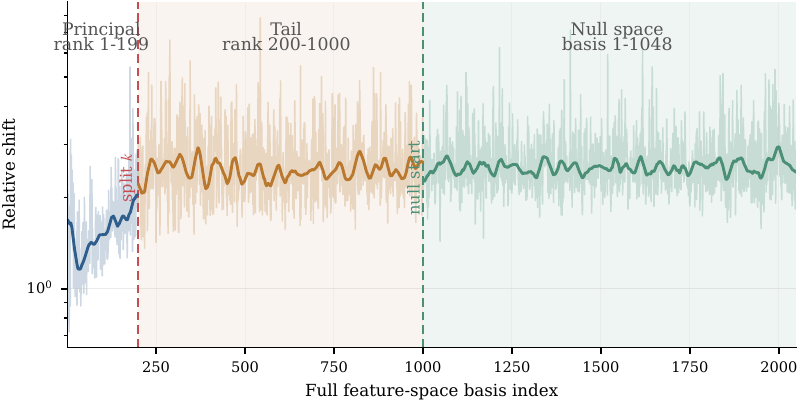}
        \caption{Relative shift.}
        \label{fig:r50-shift-spectrum-relshift}
    \end{subfigure}
    \hfill
    \begin{subfigure}[t]{0.31\textwidth}
        \centering
        \includegraphics[width=\linewidth]{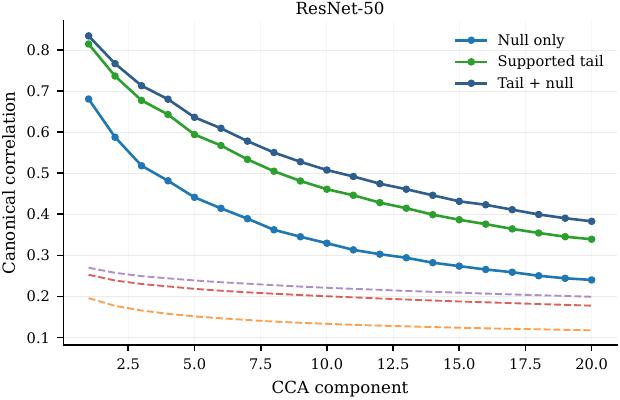}
        \caption{Tail/null-to-principal CCA.}
        \label{fig:r50-cca-leakage}
    \end{subfigure}
    \caption{ResNet-50 paired motivation diagnostics in the head-induced basis. The same sensor-target pattern holds beyond ViT-B/16; the CCA panel additionally shows that classifier-null coordinates, despite having no direct logit gain, contain linearly decodable information about principal drift.}
    \label{fig:r50-shift-spectrum}
\end{figure*}

\subsection{Oracle target ablation}
\label{app:oracle-target-ablation}

The CCA diagnostic tests whether tail/null drift can serve as a linear sensor. Furthermore, we separately test which subspace is the effective correction target by an oracle ablation.
For a target subspace $S$, the oracle replaces only that corrupted coordinate block with its clean-paired value,
\begin{equation}
\tilde c^{\mathrm{corr}}_{i,S}
=c^{\mathrm{corr}}_{i,S}
-\left(c^{\mathrm{corr}}_{i,S}-c_{i,S}\right)
=c_{i,S},
\end{equation}
while leaving all other coordinates unchanged.
We report the accuracy gain
\begin{equation}
\mathrm{Gain}(S)=\mathrm{Acc}_{\mathrm{oracle}}(S)-\mathrm{Acc}_{\mathrm{corr}},
\end{equation}
and the fraction of the clean--corrupt gap closed,
\begin{equation}
\mathrm{GapClosed}(S)=
\frac{\mathrm{Acc}_{\mathrm{oracle}}(S)-\mathrm{Acc}_{\mathrm{corr}}}
{\mathrm{Acc}_{\mathrm{clean}}-\mathrm{Acc}_{\mathrm{corr}}}.
\end{equation}
Table~\ref{tab:oracle-target-ablation} shows that principal coordinates are the dominant correction target, whereas tail/null coordinates are primarily useful as sensors.

\begin{table}[t]
\centering
\small
\caption{Oracle target correction on paired ImageNet-C diagnostics. Accuracy gains are reported in percentage points; gap closed is the percentage of the clean--corrupt accuracy gap recovered by restoring only the indicated subspace.}
\label{tab:oracle-target-ablation}
\setlength{\tabcolsep}{5.0pt}
\renewcommand{\arraystretch}{1.08}
\begin{tabular}{@{}llccccc@{}}
\toprule
Backbone & Metric & Principal & Supported tail & Null only & Tail+null & All \\
\midrule
\multirow{2}{*}{ViT-B/16}
& Acc. gain (pp) & 40.9 & 15.9 & -- & 15.9 & 42.1 \\
& Gap closed (\%) & 97.0 & 38.9 & -- & 38.9 & 100.0 \\
\midrule
\multirow{2}{*}{ResNet-50}
& Acc. gain (pp) & 56.9 & 7.2 & 0.0 & 7.2 & 58.7 \\
& Gap closed (\%) & 96.7 & 13.2 & 0.0 & 13.2 & 100.0 \\
\bottomrule
\end{tabular}
\end{table}

\subsection{Per-batch correction curves}

\textbf{Principal drift ratio (mean \& std).}
For each sample we measure how much of the paired drift lies in the principal coordinates:
\begin{equation}
\ell_i=\frac{\|\Delta a_i\|_2}{\|\Delta a_i\|_2+\|\Delta u_i\|_2}.
\end{equation}
For a mini-batch $\mathcal{B}_t$, we plot the mean and standard deviation

\vspace{-10pt}
\begin{align}
\mathrm{LR}_t &= \frac{1}{|\mathcal{B}_t|}\sum_{i\in\mathcal{B}_t}\ell_i, &
\mathrm{Std}(\mathrm{LR})_t &= \sqrt{\frac{1}{|\mathcal{B}_t|}\sum_{i\in\mathcal{B}_t}\ell_i^2-\mathrm{LR}_t^2}.
\end{align}

\textbf{Cosine alignment to the predicted correction (mean \& std).}
TALOR predicts an affine principal correction $\hat\Delta_i$ from branch statistics and tail-coordinate deviations, and rectifies the principal coordinate by
$\tilde a_i = a_i-\hat\Delta_i$ (see Eqns.~\ref{eq:affine-correction-prediction} and Eqns.~\ref{eq:rectified-inference} in the main text).
For paired diagnostics, we compare $\hat\Delta_i$ against the observed principal drift $\Delta a_i$ using cosine similarity:
\begin{equation}
c_i=\frac{\langle \hat\Delta_i,\Delta a_i\rangle}{\|\hat\Delta_i\|_2\,\|\Delta a_i\|_2+\epsilon}.
\end{equation}
We plot batch mean/std as

\begin{align}
\mathrm{Cos}_t &= \frac{1}{|\mathcal{B}_t|}\sum_{i\in\mathcal{B}_t} c_i,&
\mathrm{Std}(\mathrm{Cos})_t &= \sqrt{\frac{1}{|\mathcal{B}_t|}\sum_{i\in\mathcal{B}_t}c_i^2-\mathrm{Cos}_t^2}.
\end{align}

\textbf{Principal-drift reduction.}
We quantify how much rectification reduces the paired principal drift magnitude:

\begin{align}
d_i^{\mathrm{before}} &= \|\Delta a_i\|_2 = \|a_i^{\mathrm{corr}}-a_i\|_2,&
d_i^{\mathrm{after}}  &= \|\tilde a_i^{\mathrm{corr}}-a_i\|_2, \quad
\tilde a_i^{\mathrm{corr}} = a_i^{\mathrm{corr}}-\hat\Delta_i,
\end{align}
and define the normalized reduction
\begin{equation}
r_i=\frac{d_i^{\mathrm{before}}-d_i^{\mathrm{after}}}{d_i^{\mathrm{before}}+\epsilon},
\qquad
\mathrm{DR}_t=\frac{1}{|\mathcal{B}_t|}\sum_{i\in\mathcal{B}_t} r_i.
\end{equation}
\vspace{-3pt}
\textbf{Accuracy curves.}
Top-1 accuracy on corrupted samples and rectified logits are computed with ground-truth labels,
\vspace{-10pt}

\begin{align}
\mathrm{Acc}^{\mathrm{corr}}_t &= \frac{1}{|\mathcal{B}_t|}\sum_{i\in\mathcal{B}_t}\mathbb{I}\!\left[\arg\max_c z^{\mathrm{corr}}_{i,c}=y_i\right],&
\mathrm{Acc}^{\mathrm{rect}}_t &= \frac{1}{|\mathcal{B}_t|}\sum_{i\in\mathcal{B}_t}\mathbb{I}\!\left[\arg\max_c \tilde z_{i,c}=y_i\right],
\end{align}

where $\tilde z_i$ is obtained by recomputing logits from the rectified feature after rescaling by the original feature norm and using the frozen source head.
These accuracies are not required at test time; they serve only to validate the proposed mechanism in Fig.~\ref{fig:leakage_and_correct}.

\subsection{Component-wise alignment with the paired drift}
\label{app:component-alignment}

Fig.~\ref{fig:leakage-diagnostics-wrap}(a) evaluates the full correction.
To separate the two affine terms, we decompose $\hat\Delta_i=\hat\Delta_i^{\mathrm{mean}}+\hat\Delta_i^{\mathrm{slope}}$, with the routed mean $\hat\Delta_i^{\mathrm{mean}}=\sum_q w_i^{(q)}\mu_{a,t}^{(q)}$ and the routed slope $\hat\Delta_i^{\mathrm{slope}}=\sum_q w_i^{(q)}R_t^{(q)\top}(u_i-\mu_{u,t}^{(q)})$, along one online trajectory, so that all readouts share sample order, soft responsibilities, and ridge/EMA states.
We report the cosine alignment with the oracle drift $\Delta a_i$ and the PDR of Sec.~\ref{sec:mechanism-analysis} (Tab.~\ref{tab:component-alignment}).
The main open-set protocol replays Source+TALOR on the benchmark stream of Sec.~\ref{sec:experiments}; the paired clean views of both csID and csOOD samples are used only for analysis.
Each term aligns with the oracle drift and reduces it, and their sum performs best; the full correction has the highest cosine in all 90 cells.
Splitting the open-set cells by regime, the slope correction is larger and better aligned for csOOD than for csID samples (correction norm $0.299$ vs.\ $0.246$, cosine $0.362$ vs.\ $0.336$, PDR $0.057$ vs.\ $0.048$), whereas the mean term behaves similarly in both regimes (cosine $0.337$ vs.\ $0.325$).
This is consistent with the slope term contributing mainly to ID/OOD separation in Tab.~\ref{tab:moduleabla}.

\begin{table}[h]
\centering
\small
\caption{Component-wise alignment of TALOR's correction with the paired oracle principal drift in the main open-set benchmark stream. Mean $\pm$ std over the 90 cells (6 csOOD datasets $\times$ 15 corruptions).}
\label{tab:component-alignment}
\setlength{\tabcolsep}{5pt}
\begin{tabular}{@{}lccc@{}}
\toprule
Metric & Mean only & Slope only & Full \\
\midrule
Cosine & $0.331\pm0.135$ & $0.349\pm0.050$ & $0.483\pm0.105$ \\
PDR & $0.051\pm0.076$ & $0.053\pm0.026$ & $0.113\pm0.082$ \\
\bottomrule
\vspace{-5pt}
\end{tabular}
\vspace{-5pt}
\end{table}

\vspace{-5pt}
\section{Shared-Nuisance Condition and Drift Identification}
\vspace{-5pt}
\label{app:theory-justification}

This appendix proves Proposition~\ref{prop:nuisance-regression} and states the scope of Assumption~\ref{asm:shared-nuisance}, reports empirical checks of the condition and of the deployed online correction, and discusses the mean term.

\subsection{Proof and scope}
\label{app:proofs}

We use the notation of Sec.~\ref{sec:affine-leakage}, with $\Sigma_{\xi u}=\mathbb{E}_q[\xi_iu_i^{\circ\top}]$ and $\Sigma_{uu}=\mathbb{E}_q[u_i^{\circ}u_i^{\circ\top}]$.

\begin{proof}[Proof of Proposition~\ref{prop:nuisance-regression}]
Let $\Sigma_{au}=\mathbb{E}_q[a_i^{\circ}u_i^{\circ\top}]$.
The objective is a convex quadratic in $R$ with normal equations $(\Sigma_{uu}+\lambda I_d)R=\Sigma_{au}^{\top}$; its minimum-norm minimizer is $R_{q,\lambda}^{\ast}=(\Sigma_{uu}+\lambda I_d)^{\dagger}\Sigma_{au}^{\top}$, which is the unique minimizer for $\lambda>0$.
By Assumption~\ref{asm:shared-nuisance},
\[
\Sigma_{au}
= L_q\Sigma_{\xi u}
+ \mathbb{E}_q[s_i u_i^{\circ\top}]
+ \mathbb{E}_q[\varepsilon_i^a u_i^{\circ\top}]
= L_q\Sigma_{\xi u}.
\]
To clarify the statistical meaning of the resulting prediction,
consider the auxiliary population ridge regression of the latent
nuisance $\xi_i$ on the tail coordinates:
\[
C_{q,\lambda}^{\ast}
= \arg\min_C
\mathbb{E}_q\|\xi_i-C^\top u_i^\circ\|_2^2
+\lambda\|C\|_F^2
= (\Sigma_{uu}+\lambda I_d)^\dagger\Sigma_{\xi u}^\top,
\]
where the minimum-norm solution is taken at $\lambda=0$.
Since $\Sigma_{au}=L_q\Sigma_{\xi u}$, the two population
ridge solutions satisfy
\[
R_{q,\lambda}^{\ast}=C_{q,\lambda}^{\ast}L_q^\top.
\]
Consequently,
\[
R_{q,\lambda}^{\ast\top}u_i^\circ
=L_q C_{q,\lambda}^{\ast\top}u_i^\circ
=L_q\hat\xi_{i,\lambda},
\qquad
\hat\xi_{i,\lambda}
:=\Sigma_{\xi u}(\Sigma_{uu}+\lambda I_d)^\dagger u_i^\circ.
\]
Thus, although the fitted regression targets $a_i^\circ$,
its population prediction is exactly the linear image of
a ridge estimate of the shared nuisance $\xi_i$, and
therefore lies in $\mathrm{col}(L_q)$.
For any fixed matrix $M$, $\mathbb{E}_q[(M^{\top}u_i^{\circ})s_i^{\top}]=M^{\top}\mathbb{E}_q[u_i^{\circ}s_i^{\top}]=0$; taking $M=R_{q,\lambda}^{\ast}$ shows that the prediction is uncorrelated with $s_i$, and the residual $a_i^{\circ}-R_{q,\lambda}^{\ast\top}u_i^{\circ}=L_q(\xi_i-\hat\xi_{i,\lambda})+s_i+\varepsilon_i^{a}$ contains $s_i$ unchanged.
At $\lambda=0$, $\hat\xi_{i,0}=\Sigma_{\xi u}\Sigma_{uu}^{\dagger}u_i^{\circ}$ is the minimum-MSE linear estimate of the zero-mean factor $\xi_i$ from $u_i^{\circ}$, and by linearity $L_q\hat\xi_{i,0}$ is that of $L_q\xi_i$.
Finally, let $\varepsilon_i^{u}=0$, $\Sigma_{\xi}=\mathrm{Cov}_q(\xi_i)\succ0$, and let $T_q$ have full column rank.
Then $M_q=T_q\Sigma_{\xi}^{1/2}$ also has full column rank, and $u_i^{\circ}=M_q\Sigma_{\xi}^{-1/2}\xi_i$, $\Sigma_{uu}=M_qM_q^{\top}$, and $\Sigma_{\xi u}=\Sigma_{\xi}^{1/2}M_q^{\top}$.
Using $M_q^{\top}(M_qM_q^{\top})^{\dagger}=M_q^{+}$ and $M_q^{+}M_q=I$, we obtain $\hat\xi_{i,0}=\Sigma_{\xi}^{1/2}M_q^{+}M_q\Sigma_{\xi}^{-1/2}\xi_i=\xi_i$.
\end{proof}

\textbf{Scope.}
Proposition~\ref{prop:nuisance-regression} characterizes the population target of the slope term; TALOR estimates it from finite, soft-weighted batches, where ridge regularization and EMA smoothing stabilize the estimate.
Assumption~\ref{asm:shared-nuisance} idealizes the representation and is not implied by the SVD of the head, whose singular values only establish which directions are amplified.
When part of the principal semantic variation is linearly correlated with the tail cue, i.e., $\mathbb{E}_q[u_i^{\circ}s_i^{\top}]\neq0$, the slope term also removes that tail-predictable part.
To assess departures from the idealized assumption, we examine the relative strength of the two cross-covariance terms:
\begin{equation}
\eta_C^{(q)}
=
\frac{
\big\|\mathbb{E}_q[(s_i+\varepsilon_i^{a})u_i^{\circ\top}]\big\|_2
}{
\big\|\mathbb{E}_q[(L_q\xi_i)u_i^{\circ\top}]\big\|_2
}.
\label{eq:dominance}
\end{equation}
Smaller values indicate stronger drift dominance in cross-covariance.
App.~\ref{app:drift-checks} evaluates this diagnostic alongside the drift reduction achieved by the deployed estimator; the mean term is analyzed separately in App.~\ref{app:mean-safety}.

\subsection{Empirical checks of the condition and of the online correction}
\label{app:drift-checks}
Let $(a^{\mathrm{c}},u^{\mathrm{c}})$ and $(a^{\mathrm{t}},u^{\mathrm{t}})$ denote the clean and corrupted coordinates and $\delta a=a^{\mathrm{t}}-a^{\mathrm{c}}$.
Paired views identify the two principal components of the corrupted stream in Assumption~\ref{asm:shared-nuisance}: the corruption-induced displacement $\delta a$ plays the role of $L_q\xi_i$ and the clean principal coordinates $a^{\mathrm{c}}$ that of $s_i+\varepsilon_i^{a}$, while the regressor input is the observed corrupted tail $u^{\mathrm{t}}$, exactly as online.
Tab.~\ref{tab:drift-checks} reports three checks per corruption.

\textbf{(i) Drift predictability beyond high-dimensional fitting.}
We fit a pooled, centered ridge map from $u^{\mathrm{t}}$ to $\delta a$ on the $5{,}000$ pairs and report its in-sample $R^2$, i.e., the fraction of centered drift energy that it explains.
Because in-sample ridge fitting can explain part of an unrelated target through finite-sample correlations, we compare it with 200 within-corruption permutations of the rows of $\delta a$, which break the sample pairing while keeping both marginals.
The real $R^2$ is $0.312$ on average ($0.248$--$0.382$), whereas the permutation null is $0.083\pm0.001$ for every corruption; none of the $3{,}000$ permutations reaches the real value (z-scores $325$--$608$).
This regression-based test complements the CCA spectrum of Fig.~\ref{fig:vit-cca-leakage}: the observed tail linearly predicts a substantial part of the principal drift.

\textbf{(ii) Relative strength of the cross-covariance terms.}
The cross-covariance seen by the online regression splits exactly by prediction target,
$\mathrm{Cov}(a^{\mathrm{t}},u^{\mathrm{t}})=\mathrm{Cov}(\delta a,u^{\mathrm{t}})+\mathrm{Cov}(a^{\mathrm{c}},u^{\mathrm{t}})$,
whose two terms are the empirical counterparts of the denominator and numerator of Eq.~\ref{eq:dominance}.
Their spectral-norm ratio $\eta_C=\|\mathrm{Cov}(a^{\mathrm{c}},u^{\mathrm{t}})\|_2/\|\mathrm{Cov}(\delta a,u^{\mathrm{t}})\|_2$ is below one for all 15 corruptions, with mean $0.234$.
The margin is largest for global corruptions such as fog, frost, and contrast ($0.08$--$0.13$) and smallest for mild ones such as brightness, pixelate, and JPEG compression ($0.38$--$0.54$), indicating stronger drift dominance in cross-covariance for the former.

\textbf{(iii) The deployed online estimator.}
We run the TALOR module itself (GMM routing, weighted ridge, and EMA; batch size 64; state reset per corruption) on the corrupted stream and measure the reduction of squared principal drift,
$\mathrm{DR}=1-\sum_i\|\delta a_i-\hat\Delta_i\|_2^2/\sum_i\|\delta a_i\|_2^2$, for the full correction and for its slope and mean terms.
We also report $\mathrm{DR}_{\sigma}$, the corresponding reduction for the full correction after mapping principal directional drift to logits with each sample's corrupted feature norm held fixed, using weights $\rho_i\sigma_j$ (Eq.~\ref{eq:logit-decomposition}).
The full correction and the slope term reduce the drift for all 15 corruptions and the mean term for 14, and $\mathrm{DR}_{\sigma}$ exceeds $\mathrm{DR}$ for every corruption ($0.355$ vs.\ $0.276$ on average): the correction is most effective along the directions that the head amplifies.

\begin{table}[h]
\centering
\caption{Per-corruption checks of the shared-nuisance condition and of the deployed online correction (ViT-B/16, ImageNet-C at severity 5, 5{,}000 pairs per corruption). $R^2$: in-sample ridge fit of the centered drift from the corrupted tail (permutation null $0.083\pm0.001$ for every corruption). $\eta_C$: clean-target to drift-target cross-covariance ratio (smaller values indicate stronger drift dominance in cross-covariance). DR: reduction of squared principal drift by the online TALOR correction and its terms; $\mathrm{DR}_\sigma$: logit-space DR of the full correction.}
\label{tab:drift-checks}
\setlength{\tabcolsep}{5pt}
\small
\begin{tabular}{@{}lcccccc@{}}
\toprule
& \multicolumn{2}{c}{Condition} & \multicolumn{4}{c}{Online correction} \\
\cmidrule(lr){2-3}\cmidrule(lr){4-7}
Corruption & $R^2$ & $\eta_C$ & DR (full) & DR (slope) & DR (mean) & $\mathrm{DR}_\sigma$ \\
\midrule
Gaussian noise & 0.317 & 0.136 & 0.304 & 0.162 & 0.145 & 0.408 \\
Shot noise & 0.303 & 0.171 & 0.243 & 0.164 & 0.077 & 0.330 \\
Impulse noise & 0.314 & 0.159 & 0.265 & 0.163 & 0.099 & 0.361 \\
Defocus blur & 0.335 & 0.184 & 0.340 & 0.160 & 0.189 & 0.470 \\
Glass blur & 0.363 & 0.206 & 0.301 & 0.192 & 0.111 & 0.397 \\
Motion blur & 0.300 & 0.221 & 0.250 & 0.153 & 0.100 & 0.348 \\
Zoom blur & 0.324 & 0.215 & 0.270 & 0.164 & 0.109 & 0.335 \\
Snow & 0.288 & 0.331 & 0.236 & 0.128 & 0.111 & 0.271 \\
Frost & 0.382 & 0.106 & 0.318 & 0.210 & 0.110 & 0.405 \\
Fog & 0.375 & 0.075 & 0.534 & 0.085 & 0.451 & 0.656 \\
Brightness & 0.249 & 0.537 & 0.058 & 0.085 & $-0.025$ & 0.085 \\
Contrast & 0.296 & 0.131 & 0.458 & 0.069 & 0.394 & 0.605 \\
Elastic transform & 0.334 & 0.269 & 0.280 & 0.152 & 0.131 & 0.305 \\
Pixelate & 0.248 & 0.397 & 0.118 & 0.101 & 0.018 & 0.154 \\
JPEG compression & 0.255 & 0.375 & 0.158 & 0.094 & 0.063 & 0.202 \\
\midrule
Mean & 0.312 & 0.234 & 0.276 & 0.139 & 0.139 & 0.355 \\
\bottomrule
\end{tabular}
\vspace{-5pt}
\end{table}

\subsection{Mean term safety}
\label{app:mean-safety}

The mean term $\mu_a^{(q)}$ is the affine intercept of Eq.~\ref{eq:leakage-model}.
Because the slope is fitted on centered coordinates, it cannot represent a principal offset shared within a branch, so the weighted target mean supplies this intercept; an explicit source feature mean is not required for this role.
The observed target mean can contain both a nuisance component (reflecting domain shift) and a semantic component (reflecting the class composition of the current stream), and TALOR subtracts its smoothed estimate without decomposing it.
Two properties limit the risk of removing useful semantics.

First, consider the mean term in isolation at a fixed online
state $t$. Define the mean-only corrected coordinates as
\[
a_i^{\mathrm{mean}}
:= a_i-\hat{\Delta}_i^{\mathrm{mean}},
\qquad
\hat{\Delta}_i^{\mathrm{mean}}
:= \sum_{q\in\mathcal Q}
w_i^{(q)}\mu_{a,t}^{(q)}.
\]
For any two samples,
\begin{equation}
a_i^{\mathrm{mean}}-a_j^{\mathrm{mean}}
=
(a_i-a_j)
-
\sum_{q\in\mathcal Q}
\bigl(w_i^{(q)}-w_j^{(q)}\bigr)\mu_{a,t}^{(q)}.
\label{eq:mean-only-pairwise}
\end{equation}
Thus, samples with identical routing weights receive the same
translation, and their pairwise differences are preserved:
\[
a_i^{\mathrm{mean}}-a_j^{\mathrm{mean}}=a_i-a_j.
\]
This includes hard routing to the same branch.
Within any group receiving the same translation, centered
covariance is also unchanged. Under soft routing, differences
in the weighted means can change pairwise differences.

Second, if the removed mean were predominantly discriminative, ID accuracy would drop; instead, Mean-only improves or preserves ID accuracy for all seven hosts in Tab.~\ref{tab:moduleabla} (e.g., from $49.8\%$ to $54.8\%$ on Source).
Without covariate shift, TALOR changes accuracy by only $+0.12$ pp on an unshifted clean stream and by at most $-0.46$ pp under pure label shift, which alters the target principal mean through the class prior alone (App.~\ref{app:negative-controls}).
Hence the mean term corrects shift-induced bias without materially distorting the semantic geometry.

\section{Negative Controls: Unshifted and Pure Label-Shift Streams}
\label{app:negative-controls}

To verify that TALOR does not over-correct when no covariate shift is present, we conduct two negative-control experiments on ViT-B/16: an \emph{unshifted clean stream} and a \emph{pure label-shift stream}.
In both settings, $p_t(x\mid y)\approx p_s(x\mid y)$, so there is no domain shift to remove; the relevant criterion is whether TALOR preserves the Source accuracy.
Throughout this section, \textbf{TALOR} denotes standalone \texttt{Source\,+\,TALOR}; neither stream contains OOD samples, so we report ID accuracy only.

\paragraph{Unshifted clean stream.}
All methods use the same balanced stream of 5{,}000 clean ImageNet-val samples (5 per class), with no corruption or label-space change.

\begin{table}[h]
\centering
\small
\caption{Unshifted clean stream (ViT-B/16). Top-1 accuracy and change relative to Source (pp).}
\label{tab:neg-ctrl-clean}
\setlength{\tabcolsep}{2.5pt}
\renewcommand{\arraystretch}{1.05}
\begin{tabular}{@{}lccccccccccc@{}}
\toprule
Metric & Source & TALOR & CoTTA & EATA & UniEnt & SAR & OSTTA & ViDA & STAMP & COME & DPCore \\
\midrule
Top-1 Acc.\,(\%) & 84.48 & 84.60 & 84.52 & 84.60 & 84.84 & 84.56 & 84.54 & 84.22 & 84.72 & 84.16 & 84.62 \\
$\Delta$ Acc (pp)  & 0.00  & +0.12 & +0.04 & +0.12 & +0.36 & +0.08 & +0.06 & $-$0.26 & +0.24 & $-$0.32 & +0.14 \\
\bottomrule
\end{tabular}
\end{table}

TALOR changes accuracy by only $+0.12$\,pp, indicating near-identity behavior at the decision level on clean data.

\paragraph{Pure label shift.}
We keep $p_t(x\mid y)$ unchanged and alter only $p_t(y)$ using a long-tailed prior $q_r \propto \mathrm{IF}^{-r/(C-1)}$, where $r$ is the class rank and $C=1000$.
Each stream contains 5{,}000 clean samples, with $\mathrm{IF}\in\{1,10,100\}$ and a shared manifest.
Source accuracies are 85.32\%, 86.16\%, and 84.86\%, respectively.

\begin{table}[h]
\centering
\small
\caption{Pure label-shift streams (ViT-B/16). $\Delta$\,pp in accuracy relative to Source under varying imbalance factors.}
\label{tab:neg-ctrl-labelshift}
\setlength{\tabcolsep}{5pt}
\renewcommand{\arraystretch}{1.05}
\begin{tabular}{@{}lccccccc@{}}
\toprule
IF & TALOR & CoTTA & EATA & UniEnt & SAR & OSTTA & COME \\
\midrule
1   & $-$0.06 & +0.14 & $-$0.14 & $-$0.12 & 0.00 & $-$0.08 & +0.26 \\
10  & $-$0.16 & 0.00 & $-$0.02 & $-$0.12 & $-$0.28 & $-$0.24 & $-$0.50 \\
100 & $-$0.46 & $-$0.02 & $-$0.04 & $-$0.18 & $-$0.08 & +0.06 & $-$0.48 \\
\midrule
Mean & $-$0.23 & +0.04 & $-$0.07 & $-$0.14 & $-$0.12 & $-$0.09 & $-$0.24 \\
\bottomrule
\end{tabular}
\end{table}

TALOR's accuracy changes are $-0.06$, $-0.16$, and $-0.46$\,pp for $\mathrm{IF}=1,10,100$, respectively, with a mean change of $-0.23$\,pp.
Even under the extreme $\mathrm{IF}=100$ prior shift, the change remains below $0.5$\,pp, with no instability or accuracy collapse.
This is comparable to established TTA methods such as COME ($-0.24$\,pp), UniEnt ($-0.14$\,pp), and SAR ($-0.12$\,pp).

This experiment directly stresses the case where the target principal mean changes because of class-prior variation while no covariate drift is present.
TALOR nevertheless remains close to Source across all imbalance factors.
Together with the consistent Mean-only gains under covariate shift (Tab.~\ref{tab:moduleabla}), these results show that the principal-mean term does not substantially erase normal semantic structure; its main benefit appears when a shared high-gain domain response is present.

\vspace{-5pt}
\section{Measuring ID/OOD Leading-Energy Slope Subspaces}
\vspace{-5pt}
\label{app:leakage-modes}

\textbf{Energy-truncated slope subspaces.}
Since each slope map is numerically full column rank with shape \(415\times353\), their full column spaces necessarily intersect in at least \(291\) dimensions, yielding the same number of trivial zero principal angles regardless of empirical alignment. To avoid this dimension-induced degeneracy, we compare their leading-energy subspaces, using a 90\% Frobenius-energy target subject to the rank cap \(2r\leq D-k\). For each regime, write the compact SVD as
\begin{equation}
R_t^{(q)}=U_q\,\mathrm{diag}(s_{q,1},\ldots,s_{q,m_q})\,V_q^\top,
\qquad s_{q,1}\geq\cdots\geq s_{q,m_q}>0.
\end{equation}
Let $r_q$ be the smallest rank retaining 90\% of the slope-map Frobenius energy,
\begin{equation}
r_q=\min\left\{\ell:\frac{\sum_{j=1}^{\ell}s_{q,j}^2}{\sum_{j=1}^{m_q}s_{q,j}^2}\geq0.90\right\},
\qquad
r=\min\!\left(\max(r_{\mathrm{ID}},r_{\mathrm{OOD}}),\left\lfloor\frac{D-k}{2}\right\rfloor\right).
\end{equation}
We compare the leading-energy tail subspaces
$\mathcal{S}^{(q)}_r:=\mathrm{col}(U_q[:,1\!:\!r])\subset\mathbb{R}^{D-k}$.
The cap $2r\leq D-k$ excludes dimension-forced intersections; for the displayed cell (contrast, iNaturalist-C), $r=38$.

\textbf{Principal-angle calibration.}
With $U_{q,r}=U_q[:,1\!:\!r]$, we compute
\begin{equation}
U_{\mathrm{ID},r}^\top U_{\mathrm{OOD},r}=P\,\mathrm{diag}(\sigma_1,\ldots,\sigma_r)\,T^\top,
\qquad
\theta_j=\arccos(\sigma_j)\in[0,\tfrac{\pi}{2}].
\end{equation}
We evaluate the spectrum using cross-halves of the ID and OOD streams, and calibrate it against within-regime split-half spectra and pairs of independent Haar-random $r$-dimensional subspaces in $\mathbb{R}^{D-k}$.
The split-half references quantify finite-stream estimation noise, whereas the random reference quantifies the overlap expected for unrelated subspaces of the same dimension.

In Fig.~\ref{fig:dual_leakage_mode}, the ID/OOD cross-half spectrum is well below the Haar-random reference and close to the ID split-half spectrum over the first ${\sim}10$ directions, and then departs toward the random reference.
The remaining gap to the within-regime spectra shows that the two maps are not identical and is consistent with residual regime-specific structure.
Accordingly, the diagnostic supports---but does not by itself prove---the routing interpretation in Tab.~\ref{tab:moduleabla}: a mixed rectifier can exploit the shared leading component, while ID/OOD routing can refine residual structure.

\vspace{-5pt}
\section{Result on ResNet-50 backbone}
\vspace{-5pt}
\label{app:resnet50}

% Requires: \usepackage{booktabs,multirow,xcolor}
% Softer colors (deep & less neon)
\definecolor{talorgreen}{HTML}{1B5E20}
\definecolor{talorred}{HTML}{8B1A1A}
\begin{table*}[h]
\centering
\caption{Results on the ResNet-50 backbone and ImageNet-to-ImageNet-C in OSTTA with TALOR plugin across six covariate-shifted OOD datasets. All the results are \textbf{averaged over 15} corruptions. }
\setlength{\tabcolsep}{1.0pt}
\renewcommand{\arraystretch}{0.4}
{\small
\begin{tabular}{ccc@{\hspace{8pt}}cc@{\hspace{8pt}}cc@{\hspace{8pt}}cc@{\hspace{8pt}}cc@{\hspace{8pt}}cc@{\hspace{8pt}}ccc}
\toprule
\multicolumn{1}{c}{\multirow{2}{*}{Method}} & \multicolumn{2}{c}{Places.-C} & \multicolumn{2}{c}{Textures-C} & \multicolumn{2}{c}{iNatur.-C} & \multicolumn{2}{c}{SUN-C} & \multicolumn{2}{c}{SSB-H.C} & \multicolumn{2}{c}{NINCO--C} & \multicolumn{3}{c}{Mean} \\
\cmidrule(lr){2-3} \cmidrule(lr){4-5} \cmidrule(lr){6-7} \cmidrule(lr){8-9} \cmidrule(lr){10-11} \cmidrule(lr){12-13} \cmidrule(lr){14-16}
\multicolumn{1}{c}{} & Acc & AUC & Acc & AUC & Acc & AUC & Acc & AUC & Acc & AUC & Acc & AUC & Acc & AUC & H-s \\
\midrule
Source & 17.7 & 64.7 & 17.7 & 60.1 & 17.7 & 75.1 & 17.7 & 70.2 & 17.7 & 54.0 & 17.7 & 60.6 & 17.7 & 64.1 & 25.9 \\
+ TALOR & 21.9 & 74.0 & 21.8 & 74.8 & 21.8 & 90.4 & 21.9 & 86.0 & 21.9 & 58.1 & 21.8 & 67.5 & 21.9 & 75.1 & 32.0 \\
Improve & \textcolor{talorgreen}{+4.2} & \textcolor{talorgreen}{+9.4} & \textcolor{talorgreen}{+4.2} & \textcolor{talorgreen}{+14.7} & \textcolor{talorgreen}{+4.1} & \textcolor{talorgreen}{+15.3} & \textcolor{talorgreen}{+4.3} & \textcolor{talorgreen}{+15.8} & \textcolor{talorgreen}{+4.2} & \textcolor{talorgreen}{+4.0} & \textcolor{talorgreen}{+4.2} & \textcolor{talorgreen}{+6.9} & \textcolor{talorgreen}{+4.2} & \textcolor{talorgreen}{+11.0} & \textcolor{talorgreen}{+6.1} \\
\midrule
EATA & 32.2 & 76.2 & 29.9 & 53.7 & 13.5 & 63.5 & 27.5 & 82.3 & 35.6 & 56.0 & 32.4 & 63.4 & 28.5 & 65.8 & 37.2 \\
+ TALOR & 32.6 & 79.4 & 30.8 & 64.8 & 14.6 & 76.1 & 28.2 & 87.6 & 36.1 & 57.9 & 33.0 & 67.2 & 29.2 & 72.2 & 39.0 \\
Improve & \textcolor{talorgreen}{+0.4} & \textcolor{talorgreen}{+3.2} & \textcolor{talorgreen}{+0.9} & \textcolor{talorgreen}{+11.2} & \textcolor{talorgreen}{+1.1} & \textcolor{talorgreen}{+12.6} & \textcolor{talorgreen}{+0.7} & \textcolor{talorgreen}{+5.3} & \textcolor{talorgreen}{+0.5} & \textcolor{talorgreen}{+1.9} & \textcolor{talorgreen}{+0.7} & \textcolor{talorgreen}{+3.9} & \textcolor{talorgreen}{+0.7} & \textcolor{talorgreen}{+6.3} & \textcolor{talorgreen}{+1.8} \\
\midrule
SAR & 13.7 & 65.3 & 8.2 & 36.3 & 13.4 & 62.4 & 14.2 & 66.2 & 11.3 & 48.4 & 7.8 & 47.3 & 11.4 & 54.3 & 17.8 \\
+ TALOR & 17.6 & 70.8 & 11.6 & 56.7 & 17.0 & 86.3 & 17.7 & 85.9 & 15.3 & 51.5 & 10.3 & 54.5 & 14.9 & 67.6 & 23.4 \\
Improve & \textcolor{talorgreen}{+3.9} & \textcolor{talorgreen}{+5.5} & \textcolor{talorgreen}{+3.4} & \textcolor{talorgreen}{+20.4} & \textcolor{talorgreen}{+3.6} & \textcolor{talorgreen}{+23.9} & \textcolor{talorgreen}{+3.6} & \textcolor{talorgreen}{+19.7} & \textcolor{talorgreen}{+4.0} & \textcolor{talorgreen}{+3.1} & \textcolor{talorgreen}{+2.5} & \textcolor{talorgreen}{+7.2} & \textcolor{talorgreen}{+3.5} & \textcolor{talorgreen}{+13.3} & \textcolor{talorgreen}{+5.7} \\
\midrule
OSTTA & 20.2 & 63.9 & 20.1 & 41.4 & 16.8 & 49.1 & 17.5 & 57.4 & 26.1 & 51.0 & 22.1 & 51.0 & 20.5 & 52.3 & 28.5 \\
+ TALOR & 22.5 & 70.6 & 22.3 & 59.3 & 19.5 & 78.6 & 20.1 & 77.6 & 28.0 & 53.8 & 24.5 & 61.3 & 22.8 & 66.9 & 33.0 \\
Improve & \textcolor{talorgreen}{+2.2} & \textcolor{talorgreen}{+6.7} & \textcolor{talorgreen}{+2.2} & \textcolor{talorgreen}{+17.8} & \textcolor{talorgreen}{+2.7} & \textcolor{talorgreen}{+29.5} & \textcolor{talorgreen}{+2.6} & \textcolor{talorgreen}{+20.2} & \textcolor{talorgreen}{+2.0} & \textcolor{talorgreen}{+2.8} & \textcolor{talorgreen}{+2.4} & \textcolor{talorgreen}{+10.3} & \textcolor{talorgreen}{+2.3} & \textcolor{talorgreen}{+14.6} & \textcolor{talorgreen}{+4.5} \\
\midrule
ViDA & 18.0 & 63.9 & 17.9 & 57.6 & 18.0 & 72.5 & 18.0 & 67.4 & 18.0 & 54.1 & 18.0 & 59.8 & 18.0 & 62.6 & 26.0 \\
+ TALOR & 22.3 & 74.3 & 22.2 & 73.9 & 22.2 & 90.2 & 22.3 & 85.9 & 22.3 & 58.3 & 22.2 & 67.4 & 22.2 & 75.0 & 32.4 \\
Improve & \textcolor{talorgreen}{+4.3} & \textcolor{talorgreen}{+10.4} & \textcolor{talorgreen}{+4.3} & \textcolor{talorgreen}{+16.3} & \textcolor{talorgreen}{+4.2} & \textcolor{talorgreen}{+17.7} & \textcolor{talorgreen}{+4.3} & \textcolor{talorgreen}{+18.5} & \textcolor{talorgreen}{+4.2} & \textcolor{talorgreen}{+4.2} & \textcolor{talorgreen}{+4.2} & \textcolor{talorgreen}{+7.6} & \textcolor{talorgreen}{+4.3} & \textcolor{talorgreen}{+12.4} & \textcolor{talorgreen}{+6.4} \\
\midrule
STAMP & 28.3 & 70.2 & 27.3 & 63.1 & 26.5 & 81.9 & 27.1 & 78.6 & 29.2 & 59.0 & 29.0 & 62.4 & 27.9 & 69.2 & 37.9 \\
+ TALOR & 28.5 & 70.2 & 27.7 & 70.7 & 27.4 & 82.5 & 27.8 & 78.9 & 29.5 & 57.2 & 29.2 & 65.7 & 28.4 & 70.9 & 38.7 \\
Improve & \textcolor{talorgreen}{+0.2} & \textcolor{talorgreen}{+0.0} & \textcolor{talorgreen}{+0.4} & \textcolor{talorgreen}{+7.6} & \textcolor{talorgreen}{+0.8} & \textcolor{talorgreen}{+0.5} & \textcolor{talorgreen}{+0.7} & \textcolor{talorgreen}{+0.3} & \textcolor{talorgreen}{+0.3} & \textcolor{talorred}{-1.8} & \textcolor{talorgreen}{+0.3} & \textcolor{talorgreen}{+3.2} & \textcolor{talorgreen}{+0.4} & \textcolor{talorgreen}{+1.6} & \textcolor{talorgreen}{+0.9} \\
\bottomrule
\end{tabular}
}
\end{table*}

\vspace{-5pt}
\section{Results under Different OOD Ratios}
\vspace{-5pt}
\label{app:oodratios}

% Requires: \usepackage{booktabs,multirow,xcolor}
% Softer colors (deep & less neon)
\definecolor{talorgreen}{HTML}{1B5E20}
\definecolor{talorred}{HTML}{8B1A1A}
\begin{table*}[t]
\centering
\caption{Results ($\%$) for ImageNet-to-ImageNet-C in csID:csOOD=8:2 ratio with the TALOR plug-in across six covariate-shifted OOD datasets. \textcolor{talorgreen}{Improvement} and \textcolor{talorred}{degradation} compared to the baseline are highlighted in green or red respectively.  All the results are \textbf{averaged over 15} corruptions.}
\setlength{\tabcolsep}{1.0pt}
\renewcommand{\arraystretch}{0.3}
{\small
\begin{tabular}{ccc@{\hspace{8pt}}cc@{\hspace{8pt}}cc@{\hspace{8pt}}cc@{\hspace{8pt}}cc@{\hspace{8pt}}cc@{\hspace{8pt}}ccc}
\toprule
\multicolumn{1}{c}{\multirow{2}{*}{Method}} & \multicolumn{2}{c}{Places.-C} & \multicolumn{2}{c}{Textures-C} & \multicolumn{2}{c}{iNatur.-C} & \multicolumn{2}{c}{SUN-C} & \multicolumn{2}{c}{SSB-H.C} & \multicolumn{2}{c}{NINCO--C} & \multicolumn{3}{c}{Mean} \\
\cmidrule(lr){2-3} \cmidrule(lr){4-5} \cmidrule(lr){6-7} \cmidrule(lr){8-9} \cmidrule(lr){10-11} \cmidrule(lr){12-13} \cmidrule(lr){14-16}
\multicolumn{1}{c}{} & Acc & AUC & Acc & AUC & Acc & AUC & Acc & AUC & Acc & AUC & Acc & AUC & Acc & AUC & H-s \\
\midrule
Source & 49.8 & 66.8 & 49.8 & 70.2 & 49.8 & 78.9 & 49.8 & 71.4 & 49.8 & 55.9 & 49.8 & 64.8 & 49.8 & 68.0 & 56.3 \\
+ TALOR & 55.9 & 75.6 & 55.8 & 80.8 & 55.8 & 90.7 & 55.9 & 88.0 & 55.8 & 64.6 & 55.8 & 74.7 & 55.8 & 79.1 & 64.5 \\
Improve & \textcolor{talorgreen}{+6.1} & \textcolor{talorgreen}{+8.8} & \textcolor{talorgreen}{+6.0} & \textcolor{talorgreen}{+10.6} & \textcolor{talorgreen}{+6.1} & \textcolor{talorgreen}{+11.8} & \textcolor{talorgreen}{+6.1} & \textcolor{talorgreen}{+16.6} & \textcolor{talorgreen}{+6.1} & \textcolor{talorgreen}{+8.7} & \textcolor{talorgreen}{+6.1} & \textcolor{talorgreen}{+9.8} & \textcolor{talorgreen}{+6.1} & \textcolor{talorgreen}{+11.0} & \textcolor{talorgreen}{+8.2} \\
\midrule
CoTTA & 49.9 & 65.8 & 49.5 & 68.3 & 49.7 & 78.3 & 50.0 & 69.9 & 49.8 & 56.3 & 49.6 & 64.6 & 49.7 & 67.2 & 56.1 \\
+ TALOR & 56.0 & 75.7 & 55.8 & 81.2 & 55.8 & 90.2 & 56.0 & 88.3 & 55.9 & 64.0 & 55.9 & 74.7 & 55.9 & 79.0 & 64.5 \\
Improve & \textcolor{talorgreen}{+6.1} & \textcolor{talorgreen}{+9.9} & \textcolor{talorgreen}{+6.3} & \textcolor{talorgreen}{+12.9} & \textcolor{talorgreen}{+6.1} & \textcolor{talorgreen}{+11.8} & \textcolor{talorgreen}{+6.0} & \textcolor{talorgreen}{+18.4} & \textcolor{talorgreen}{+6.1} & \textcolor{talorgreen}{+7.7} & \textcolor{talorgreen}{+6.3} & \textcolor{talorgreen}{+10.1} & \textcolor{talorgreen}{+6.1} & \textcolor{talorgreen}{+11.8} & \textcolor{talorgreen}{+8.4} \\
\midrule
EATA & 59.2 & 65.0 & 56.8 & 69.8 & 54.8 & 73.5 & 57.9 & 68.8 & 59.4 & 61.1 & 56.9 & 66.2 & 57.5 & 67.4 & 61.3 \\
+ TALOR & 60.0 & 74.1 & 57.5 & 83.2 & 55.5 & 89.2 & 58.6 & 86.5 & 60.3 & 67.1 & 57.8 & 75.3 & 58.3 & 79.2 & 66.4 \\
Improve & \textcolor{talorgreen}{+0.8} & \textcolor{talorgreen}{+9.1} & \textcolor{talorgreen}{+0.7} & \textcolor{talorgreen}{+13.5} & \textcolor{talorgreen}{+0.7} & \textcolor{talorgreen}{+15.7} & \textcolor{talorgreen}{+0.7} & \textcolor{talorgreen}{+17.7} & \textcolor{talorgreen}{+0.9} & \textcolor{talorgreen}{+6.0} & \textcolor{talorgreen}{+0.9} & \textcolor{talorgreen}{+9.0} & \textcolor{talorgreen}{+0.8} & \textcolor{talorgreen}{+11.8} & \textcolor{talorgreen}{+5.1} \\
\midrule
SAR & 57.1 & 64.5 & 55.6 & 55.5 & 55.8 & 79.3 & 55.6 & 67.2 & 58.4 & 64.8 & 58.2 & 70.5 & 56.8 & 67.0 & 60.7 \\
+ TALOR & 58.7 & 77.4 & 57.5 & 87.7 & 57.5 & 93.2 & 57.6 & 90.6 & 59.8 & 67.7 & 59.8 & 78.4 & 58.5 & 82.5 & 67.9 \\
Improve & \textcolor{talorgreen}{+1.6} & \textcolor{talorgreen}{+12.9} & \textcolor{talorgreen}{+1.9} & \textcolor{talorgreen}{+32.1} & \textcolor{talorgreen}{+1.6} & \textcolor{talorgreen}{+13.9} & \textcolor{talorgreen}{+1.9} & \textcolor{talorgreen}{+23.4} & \textcolor{talorgreen}{+1.4} & \textcolor{talorgreen}{+2.9} & \textcolor{talorgreen}{+1.6} & \textcolor{talorgreen}{+7.8} & \textcolor{talorgreen}{+1.7} & \textcolor{talorgreen}{+15.5} & \textcolor{talorgreen}{+7.1} \\
\midrule
OSTTA & 58.5 & 64.6 & 58.0 & 62.1 & 57.9 & 77.5 & 58.2 & 68.1 & 58.8 & 63.7 & 58.5 & 68.7 & 58.3 & 67.4 & 62.1 \\
+ TALOR & 60.0 & 77.4 & 59.4 & 87.2 & 59.1 & 93.8 & 59.6 & 91.1 & 60.2 & 67.5 & 60.0 & 77.8 & 59.7 & 82.5 & 68.8 \\
Improve & \textcolor{talorgreen}{+1.4} & \textcolor{talorgreen}{+12.9} & \textcolor{talorgreen}{+1.4} & \textcolor{talorgreen}{+25.1} & \textcolor{talorgreen}{+1.3} & \textcolor{talorgreen}{+16.3} & \textcolor{talorgreen}{+1.4} & \textcolor{talorgreen}{+23.0} & \textcolor{talorgreen}{+1.4} & \textcolor{talorgreen}{+3.7} & \textcolor{talorgreen}{+1.5} & \textcolor{talorgreen}{+9.1} & \textcolor{talorgreen}{+1.4} & \textcolor{talorgreen}{+15.0} & \textcolor{talorgreen}{+6.7} \\
\midrule
ViDA & 50.5 & 66.5 & 50.4 & 70.5 & 50.5 & 79.1 & 50.4 & 71.1 & 50.5 & 56.4 & 50.5 & 65.2 & 50.5 & 68.2 & 57.0 \\
+ TALOR & 56.4 & 75.8 & 56.2 & 81.5 & 56.3 & 91.0 & 56.3 & 88.3 & 56.3 & 65.0 & 56.2 & 75.0 & 56.3 & 79.5 & 65.0 \\
Improve & \textcolor{talorgreen}{+5.9} & \textcolor{talorgreen}{+9.3} & \textcolor{talorgreen}{+5.8} & \textcolor{talorgreen}{+11.0} & \textcolor{talorgreen}{+5.8} & \textcolor{talorgreen}{+12.0} & \textcolor{talorgreen}{+5.8} & \textcolor{talorgreen}{+17.2} & \textcolor{talorgreen}{+5.7} & \textcolor{talorgreen}{+8.5} & \textcolor{talorgreen}{+5.8} & \textcolor{talorgreen}{+9.9} & \textcolor{talorgreen}{+5.8} & \textcolor{talorgreen}{+11.3} & \textcolor{talorgreen}{+7.9} \\
\midrule
UniEnt & 59.5 & 73.9 & 57.4 & 80.4 & 59.4 & 88.3 & 59.4 & 82.6 & 59.3 & 62.4 & 59.4 & 72.9 & 59.1 & 76.7 & 66.2 \\
+ TALOR & 60.5 & 79.1 & 58.4 & 86.9 & 60.4 & 95.2 & 60.5 & 91.2 & 60.6 & 67.4 & 60.4 & 79.2 & 60.1 & 83.2 & 69.3 \\
Improve & \textcolor{talorgreen}{+1.1} & \textcolor{talorgreen}{+5.2} & \textcolor{talorgreen}{+0.9} & \textcolor{talorgreen}{+6.5} & \textcolor{talorgreen}{+1.0} & \textcolor{talorgreen}{+6.9} & \textcolor{talorgreen}{+1.1} & \textcolor{talorgreen}{+8.6} & \textcolor{talorgreen}{+1.3} & \textcolor{talorgreen}{+5.0} & \textcolor{talorgreen}{+1.1} & \textcolor{talorgreen}{+6.4} & \textcolor{talorgreen}{+1.1} & \textcolor{talorgreen}{+6.4} & \textcolor{talorgreen}{+3.1} \\
\midrule
STAMP & 51.7 & 69.9 & 50.9 & 72.8 & 51.7 & 83.6 & 51.1 & 75.5 & 51.6 & 59.2 & 51.6 & 68.2 & 51.5 & 71.5 & 59.0 \\
+ TALOR & 55.4 & 77.4 & 54.5 & 83.4 & 55.5 & 92.2 & 54.9 & 89.0 & 55.4 & 65.2 & 55.4 & 75.8 & 55.2 & 80.5 & 64.7 \\
Improve & \textcolor{talorgreen}{+3.8} & \textcolor{talorgreen}{+7.6} & \textcolor{talorgreen}{+3.6} & \textcolor{talorgreen}{+10.5} & \textcolor{talorgreen}{+3.7} & \textcolor{talorgreen}{+8.6} & \textcolor{talorgreen}{+3.7} & \textcolor{talorgreen}{+13.4} & \textcolor{talorgreen}{+3.8} & \textcolor{talorgreen}{+6.0} & \textcolor{talorgreen}{+3.8} & \textcolor{talorgreen}{+7.6} & \textcolor{talorgreen}{+3.7} & \textcolor{talorgreen}{+9.0} & \textcolor{talorgreen}{+5.7} \\
\midrule
COME & 57.9 & 75.5 & 58.0 & 77.7 & 59.1 & 85.8 & 59.3 & 82.0 & 57.6 & 59.2 & 58.6 & 72.5 & 58.4 & 75.4 & 65.3 \\
+ TALOR & 58.7 & 80.7 & 58.7 & 87.1 & 59.7 & 94.6 & 59.9 & 92.3 & 58.3 & 67.1 & 59.4 & 78.5 & 59.1 & 83.4 & 68.7 \\
Improve & \textcolor{talorgreen}{+0.8} & \textcolor{talorgreen}{+5.1} & \textcolor{talorgreen}{+0.7} & \textcolor{talorgreen}{+9.5} & \textcolor{talorgreen}{+0.6} & \textcolor{talorgreen}{+8.9} & \textcolor{talorgreen}{+0.6} & \textcolor{talorgreen}{+10.3} & \textcolor{talorgreen}{+0.8} & \textcolor{talorgreen}{+7.9} & \textcolor{talorgreen}{+0.8} & \textcolor{talorgreen}{+6.0} & \textcolor{talorgreen}{+0.7} & \textcolor{talorgreen}{+8.0} & \textcolor{talorgreen}{+3.4} \\
\midrule
DPCore & 58.5 & 76.5 & 59.8 & 80.9 & 56.3 & 93.8 & 60.5 & 87.3 & 60.1 & 62.6 & 58.2 & 75.5 & 58.9 & 79.4 & 67.1 \\
+ TALOR & 58.8 & 80.5 & 60.1 & 87.1 & 56.6 & 95.7 & 60.6 & 93.7 & 60.3 & 68.2 & 58.6 & 79.5 & 59.2 & 84.1 & 69.0 \\
Improve & \textcolor{talorgreen}{+0.3} & \textcolor{talorgreen}{+4.1} & \textcolor{talorgreen}{+0.3} & \textcolor{talorgreen}{+6.2} & \textcolor{talorgreen}{+0.3} & \textcolor{talorgreen}{+1.9} & \textcolor{talorgreen}{+0.2} & \textcolor{talorgreen}{+6.4} & \textcolor{talorgreen}{+0.2} & \textcolor{talorgreen}{+5.6} & \textcolor{talorgreen}{+0.3} & \textcolor{talorgreen}{+4.0} & \textcolor{talorgreen}{+0.3} & \textcolor{talorgreen}{+4.7} & \textcolor{talorgreen}{+1.9} \\
\bottomrule
\end{tabular}
}
\vspace{-5pt}
\end{table*}

% Requires: \usepackage{booktabs,multirow,xcolor}
% Softer colors (deep & less neon)
\definecolor{talorgreen}{HTML}{1B5E20}
\definecolor{talorred}{HTML}{8B1A1A}
\begin{table*}[t]
\centering
\caption{Results ($\%$) for ImageNet-to-ImageNet-C in csID:csOOD=3:7 ratio with the TALOR plug-in across six covariate-shifted OOD datasets. \textcolor{talorgreen}{Improvement} and \textcolor{talorred}{degradation} compared to the baseline are highlighted in green or red respectively.  All the results are \textbf{averaged over 15} corruptions.}
\setlength{\tabcolsep}{1.0pt}
\renewcommand{\arraystretch}{0.4}
{\small
\begin{tabular}{ccc@{\hspace{8pt}}cc@{\hspace{8pt}}cc@{\hspace{8pt}}cc@{\hspace{8pt}}cc@{\hspace{8pt}}cc@{\hspace{8pt}}ccc}
\toprule
\multicolumn{1}{c}{\multirow{2}{*}{Method}} & \multicolumn{2}{c}{Places.-C} & \multicolumn{2}{c}{Textures-C} & \multicolumn{2}{c}{iNatur.-C} & \multicolumn{2}{c}{SUN-C} & \multicolumn{2}{c}{SSB-H.C} & \multicolumn{2}{c}{NINCO--C} & \multicolumn{3}{c}{Mean} \\
\cmidrule(lr){2-3} \cmidrule(lr){4-5} \cmidrule(lr){6-7} \cmidrule(lr){8-9} \cmidrule(lr){10-11} \cmidrule(lr){12-13} \cmidrule(lr){14-16}
\multicolumn{1}{c}{} & Acc & AUC & Acc & AUC & Acc & AUC & Acc & AUC & Acc & AUC & Acc & AUC & Acc & AUC & H-s \\
\midrule
Source & 49.5 & 67.0 & 49.5 & 70.7 & 49.5 & 78.7 & 49.5 & 71.5 & 49.5 & 56.2 & 49.5 & 64.6 & 49.5 & 68.1 & 56.2 \\
+ TALOR & 55.6 & 81.9 & 55.3 & 87.6 & 55.4 & 94.2 & 55.5 & 92.8 & 55.4 & 73.2 & 55.5 & 81.7 & 55.5 & 85.3 & 66.4 \\
Improve & \textcolor{talorgreen}{+6.1} & \textcolor{talorgreen}{+14.9} & \textcolor{talorgreen}{+5.8} & \textcolor{talorgreen}{+17.0} & \textcolor{talorgreen}{+5.9} & \textcolor{talorgreen}{+15.5} & \textcolor{talorgreen}{+6.0} & \textcolor{talorgreen}{+21.3} & \textcolor{talorgreen}{+5.9} & \textcolor{talorgreen}{+17.0} & \textcolor{talorgreen}{+6.0} & \textcolor{talorgreen}{+17.1} & \textcolor{talorgreen}{+6.0} & \textcolor{talorgreen}{+17.1} & \textcolor{talorgreen}{+10.2} \\
\midrule
CoTTA & 49.4 & 63.2 & 48.7 & 65.8 & 49.0 & 75.6 & 49.2 & 66.0 & 49.5 & 54.7 & 48.9 & 61.8 & 49.1 & 64.5 & 54.5 \\
+ TALOR & 55.4 & 81.1 & 55.1 & 87.4 & 55.2 & 93.5 & 55.5 & 92.8 & 55.4 & 71.9 & 55.2 & 80.6 & 55.3 & 84.5 & 66.0 \\
Improve & \textcolor{talorgreen}{+6.1} & \textcolor{talorgreen}{+18.0} & \textcolor{talorgreen}{+6.4} & \textcolor{talorgreen}{+21.6} & \textcolor{talorgreen}{+6.2} & \textcolor{talorgreen}{+17.9} & \textcolor{talorgreen}{+6.2} & \textcolor{talorgreen}{+26.8} & \textcolor{talorgreen}{+5.9} & \textcolor{talorgreen}{+17.2} & \textcolor{talorgreen}{+6.3} & \textcolor{talorgreen}{+18.8} & \textcolor{talorgreen}{+6.2} & \textcolor{talorgreen}{+20.0} & \textcolor{talorgreen}{+11.5} \\
\midrule
EATA & 50.2 & 58.9 & 22.6 & 56.8 & 37.5 & 63.1 & 45.5 & 65.6 & 40.7 & 55.1 & 48.5 & 61.5 & 40.8 & 60.2 & 44.8 \\
+ TALOR & 50.7 & 74.1 & 22.9 & 63.9 & 38.2 & 79.8 & 46.3 & 81.1 & 41.2 & 67.0 & 49.2 & 76.0 & 41.4 & 73.6 & 50.0 \\
Improve & \textcolor{talorgreen}{+0.5} & \textcolor{talorgreen}{+15.2} & \textcolor{talorgreen}{+0.4} & \textcolor{talorgreen}{+7.1} & \textcolor{talorgreen}{+0.7} & \textcolor{talorgreen}{+16.7} & \textcolor{talorgreen}{+0.8} & \textcolor{talorgreen}{+15.4} & \textcolor{talorgreen}{+0.6} & \textcolor{talorgreen}{+11.9} & \textcolor{talorgreen}{+0.7} & \textcolor{talorgreen}{+14.4} & \textcolor{talorgreen}{+0.6} & \textcolor{talorgreen}{+13.5} & \textcolor{talorgreen}{+5.1} \\
\midrule
SAR & 48.6 & 56.0 & 43.6 & 49.6 & 43.2 & 49.9 & 41.8 & 44.1 & 55.1 & 59.7 & 50.2 & 62.3 & 47.1 & 53.6 & 48.2 \\
+ TALOR & 52.7 & 80.6 & 50.5 & 87.6 & 47.2 & 88.7 & 46.4 & 90.5 & 56.8 & 74.6 & 53.2 & 81.6 & 51.1 & 83.9 & 62.3 \\
Improve & \textcolor{talorgreen}{+4.1} & \textcolor{talorgreen}{+24.7} & \textcolor{talorgreen}{+6.9} & \textcolor{talorgreen}{+37.9} & \textcolor{talorgreen}{+4.0} & \textcolor{talorgreen}{+38.8} & \textcolor{talorgreen}{+4.7} & \textcolor{talorgreen}{+46.5} & \textcolor{talorgreen}{+1.7} & \textcolor{talorgreen}{+14.8} & \textcolor{talorgreen}{+2.9} & \textcolor{talorgreen}{+19.3} & \textcolor{talorgreen}{+4.1} & \textcolor{talorgreen}{+30.3} & \textcolor{talorgreen}{+14.1} \\
\midrule
OSTTA & 55.9 & 58.2 & 52.4 & 52.2 & 45.6 & 60.3 & 53.4 & 57.2 & 57.6 & 60.3 & 55.5 & 65.3 & 53.4 & 58.9 & 55.5 \\
+ TALOR & 57.2 & 81.8 & 53.8 & 90.0 & 47.0 & 91.0 & 54.5 & 92.5 & 59.0 & 76.4 & 56.8 & 83.3 & 54.7 & 85.8 & 66.2 \\
Improve & \textcolor{talorgreen}{+1.3} & \textcolor{talorgreen}{+23.5} & \textcolor{talorgreen}{+1.5} & \textcolor{talorgreen}{+37.8} & \textcolor{talorgreen}{+1.4} & \textcolor{talorgreen}{+30.8} & \textcolor{talorgreen}{+1.1} & \textcolor{talorgreen}{+35.4} & \textcolor{talorgreen}{+1.4} & \textcolor{talorgreen}{+16.1} & \textcolor{talorgreen}{+1.2} & \textcolor{talorgreen}{+18.0} & \textcolor{talorgreen}{+1.3} & \textcolor{talorgreen}{+26.9} & \textcolor{talorgreen}{+10.7} \\
\midrule
ViDA & 49.8 & 65.2 & 49.7 & 68.3 & 49.6 & 76.6 & 49.7 & 68.4 & 50.1 & 56.1 & 49.8 & 64.0 & 49.8 & 66.5 & 55.9 \\
+ TALOR & 55.8 & 81.9 & 55.6 & 87.7 & 55.7 & 94.2 & 55.7 & 92.8 & 55.8 & 73.6 & 55.9 & 81.9 & 55.8 & 85.3 & 66.6 \\
Improve & \textcolor{talorgreen}{+6.1} & \textcolor{talorgreen}{+16.7} & \textcolor{talorgreen}{+5.9} & \textcolor{talorgreen}{+19.3} & \textcolor{talorgreen}{+6.1} & \textcolor{talorgreen}{+17.6} & \textcolor{talorgreen}{+6.1} & \textcolor{talorgreen}{+24.4} & \textcolor{talorgreen}{+5.8} & \textcolor{talorgreen}{+17.5} & \textcolor{talorgreen}{+6.1} & \textcolor{talorgreen}{+17.9} & \textcolor{talorgreen}{+6.0} & \textcolor{talorgreen}{+18.9} & \textcolor{talorgreen}{+10.8} \\
\midrule
UniEnt & 56.6 & 71.2 & 51.0 & 76.1 & 54.1 & 86.7 & 41.6 & 76.1 & 58.2 & 61.4 & 57.6 & 69.9 & 53.2 & 73.6 & 59.9 \\
+ TALOR & 57.2 & 80.8 & 51.9 & 85.1 & 54.8 & 93.0 & 42.2 & 80.5 & 59.1 & 75.2 & 58.5 & 83.9 & 53.9 & 83.1 & 64.1 \\
Improve & \textcolor{talorgreen}{+0.7} & \textcolor{talorgreen}{+9.6} & \textcolor{talorgreen}{+0.8} & \textcolor{talorgreen}{+9.0} & \textcolor{talorgreen}{+0.7} & \textcolor{talorgreen}{+6.2} & \textcolor{talorgreen}{+0.5} & \textcolor{talorgreen}{+4.4} & \textcolor{talorgreen}{+1.0} & \textcolor{talorgreen}{+13.8} & \textcolor{talorgreen}{+0.9} & \textcolor{talorgreen}{+14.0} & \textcolor{talorgreen}{+0.8} & \textcolor{talorgreen}{+9.5} & \textcolor{talorgreen}{+4.2} \\
\midrule
STAMP & 51.6 & 70.3 & 51.4 & 76.1 & 51.5 & 84.8 & 51.5 & 77.1 & 51.3 & 59.7 & 51.4 & 69.2 & 51.5 & 72.9 & 59.5 \\
+ TALOR & 54.4 & 83.9 & 53.8 & 91.0 & 54.0 & 96.4 & 54.1 & 94.5 & 53.8 & 73.9 & 53.8 & 83.5 & 54.0 & 87.2 & 65.9 \\
Improve & \textcolor{talorgreen}{+2.7} & \textcolor{talorgreen}{+13.6} & \textcolor{talorgreen}{+2.4} & \textcolor{talorgreen}{+14.9} & \textcolor{talorgreen}{+2.4} & \textcolor{talorgreen}{+11.6} & \textcolor{talorgreen}{+2.6} & \textcolor{talorgreen}{+17.4} & \textcolor{talorgreen}{+2.5} & \textcolor{talorgreen}{+14.2} & \textcolor{talorgreen}{+2.4} & \textcolor{talorgreen}{+14.3} & \textcolor{talorgreen}{+2.5} & \textcolor{talorgreen}{+14.3} & \textcolor{talorgreen}{+6.5} \\
\midrule
COME & 12.9 & 49.5 & 53.6 & 77.4 & 54.1 & 83.8 & 54.4 & 80.7 & 53.6 & 56.9 & 51.8 & 70.3 & 46.7 & 69.8 & 53.5 \\
+ TALOR & 13.2 & 60.1 & 54.6 & 91.2 & 54.8 & 95.2 & 55.4 & 94.1 & 54.4 & 75.9 & 52.7 & 82.3 & 47.5 & 83.1 & 58.1 \\
Improve & \textcolor{talorgreen}{+0.2} & \textcolor{talorgreen}{+10.7} & \textcolor{talorgreen}{+0.9} & \textcolor{talorgreen}{+13.7} & \textcolor{talorgreen}{+0.7} & \textcolor{talorgreen}{+11.4} & \textcolor{talorgreen}{+1.1} & \textcolor{talorgreen}{+13.4} & \textcolor{talorgreen}{+0.8} & \textcolor{talorgreen}{+19.0} & \textcolor{talorgreen}{+1.0} & \textcolor{talorgreen}{+12.0} & \textcolor{talorgreen}{+0.8} & \textcolor{talorgreen}{+13.4} & \textcolor{talorgreen}{+4.7} \\
\midrule
DPCore & 55.9 & 75.7 & 53.9 & 77.3 & 46.6 & 81.5 & 51.3 & 81.3 & 56.2 & 64.8 & 50.0 & 73.5 & 52.3 & 75.7 & 61.4 \\
+ TALOR & 56.0 & 83.7 & 54.4 & 88.8 & 47.2 & 89.9 & 51.6 & 90.9 & 56.5 & 78.6 & 50.5 & 81.7 & 52.7 & 85.6 & 64.8 \\
Improve & \textcolor{talorgreen}{+0.1} & \textcolor{talorgreen}{+8.0} & \textcolor{talorgreen}{+0.5} & \textcolor{talorgreen}{+11.5} & \textcolor{talorgreen}{+0.5} & \textcolor{talorgreen}{+8.4} & \textcolor{talorgreen}{+0.3} & \textcolor{talorgreen}{+9.6} & \textcolor{talorgreen}{+0.3} & \textcolor{talorgreen}{+13.8} & \textcolor{talorgreen}{+0.5} & \textcolor{talorgreen}{+8.2} & \textcolor{talorgreen}{+0.4} & \textcolor{talorgreen}{+9.9} & \textcolor{talorgreen}{+3.4} \\
\bottomrule
\end{tabular}
}
\end{table*}

\vspace{-5pt}
\section{Computational Complexity}
\vspace{-5pt}
\label{app:time_and_mem}

\definecolor{talorgreen}{HTML}{1B5E20}
\definecolor{talorred}{HTML}{8B1A1A}

\begin{table*}[h]
\centering
\caption{Comparisons w.r.t. computation complexity. Accuracy, AUC, and H-score are reported in \%. TALOR adds negligible computational cost. The wall-clock time (Min) and memory usage (MiB) are measured for processing 15 consecutive corruptions, where each domain is constructed by mixing 5,000 csID and 5,000 csOOD images, batch size = 64; all Time/Memory values are kept as integers. Mem\(\uparrow\) (\%) denotes the relative GPU-memory increment after adding TALOR, i.e., \(\frac{\text{Memory}_{+TALOR}-\text{Memory}_{Origin}}{\text{Memory}_{Origin}}\times 100\%\). All experiments are conducted on an NVIDIA Quadro P6000.}
\label{tab:complexity}
\setlength{\tabcolsep}{2.0pt}
\renewcommand{\arraystretch}{0.9}
\small
\begin{tabular}{l|ccc cc|ccc cc|ccc cc}
\toprule
\multirow{2}{*}{Method}
& \multicolumn{5}{c|}{Origin}
& \multicolumn{5}{c|}{+TALOR}
& \multicolumn{5}{c}{\(\Delta\) (+TALOR $-$ Origin)} \\
\cmidrule(lr){2-6}\cmidrule(lr){7-11}\cmidrule(lr){12-16}
& Acc & AUC & \multicolumn{1}{c|}{H-s} & Time & Mem
& Acc & AUC & \multicolumn{1}{c|}{H-s} & Time & Mem
& Acc & AUC & \multicolumn{1}{c|}{H-s} & Time & Mem(\%) \\
\midrule
Source  & 49.8 & 68.0 & \multicolumn{1}{c|}{56.4} & 29  & 1,565
        & 55.9 & 83.5 & \multicolumn{1}{c|}{66.1} & 30  & 1,717
        & \textcolor{talorgreen}{+6.1} & \textcolor{talorgreen}{+15.5} & \multicolumn{1}{c|}{\textcolor{talorgreen}{+9.7}}  & +1  & +10 \\
CoTTA   & 49.6 & 64.6 & \multicolumn{1}{c|}{54.9} & 212 & 21,064
        & 55.6 & 82.7 & \multicolumn{1}{c|}{65.6} & 214 & 21,197
        & \textcolor{talorgreen}{+6.0} & \textcolor{talorgreen}{+18.2} & \multicolumn{1}{c|}{\textcolor{talorgreen}{+10.8}} & +2  & +1 \\
EATA    & 45.5 & 62.6 & \multicolumn{1}{c|}{48.2} & 42  & 7,123
        & 46.0 & 73.9 & \multicolumn{1}{c|}{53.4} & 43  & 7,255
        & \textcolor{talorgreen}{+0.5} & \textcolor{talorgreen}{+11.3} & \multicolumn{1}{c|}{\textcolor{talorgreen}{+5.2}}  & +1  & +2 \\
SAR     & 50.2 & 55.2 & \multicolumn{1}{c|}{51.1} & 114 & 7,073
        & 54.4 & 84.3 & \multicolumn{1}{c|}{65.4} & 115 & 7,353
        & \textcolor{talorgreen}{+4.2} & \textcolor{talorgreen}{+29.0} & \multicolumn{1}{c|}{\textcolor{talorgreen}{+14.4}} & +1  & +4 \\
OSTTA   & 56.1 & 61.8 & \multicolumn{1}{c|}{58.4} & 101 & 12,891
        & 57.4 & 85.3 & \multicolumn{1}{c|}{68.1} & 103 & 13,209
        & \textcolor{talorgreen}{+1.2} & \textcolor{talorgreen}{+23.5} & \multicolumn{1}{c|}{\textcolor{talorgreen}{+9.6}}  & +2  & +2 \\
ViDA    & 50.4 & 66.7 & \multicolumn{1}{c|}{56.4} & 469 & 12,089
        & 56.4 & 83.7 & \multicolumn{1}{c|}{66.6} & 465 & 12,253
        & \textcolor{talorgreen}{+6.0} & \textcolor{talorgreen}{+17.1} & \multicolumn{1}{c|}{\textcolor{talorgreen}{+10.1}} & \textcolor{talorred}{-4} & +1 \\
UniEnt  & 56.7 & 76.7 & \multicolumn{1}{c|}{63.9} & 74  & 14,217
        & 57.5 & 84.1 & \multicolumn{1}{c|}{67.3} & 75  & 14,349
        & \textcolor{talorgreen}{+0.8} & \textcolor{talorgreen}{+7.4}  & \multicolumn{1}{c|}{\textcolor{talorgreen}{+3.4}}  & +1  & +1 \\
STAMP   & 51.9 & 74.2 & \multicolumn{1}{c|}{60.2} & 181 & 11,307
        & 54.3 & 86.0 & \multicolumn{1}{c|}{65.9} & 182 & 11,165
        & \textcolor{talorgreen}{+2.4} & \textcolor{talorgreen}{+11.9} & \multicolumn{1}{c|}{\textcolor{talorgreen}{+5.7}}  & +1  & \textcolor{talorred}{-1} \\
COME    & 57.0 & 74.6 & \multicolumn{1}{c|}{63.7} & 42  & 7,123
        & 57.7 & 86.5 & \multicolumn{1}{c|}{68.4} & 42  & 7,257
        & \textcolor{talorgreen}{+0.7} & \textcolor{talorgreen}{+11.9} & \multicolumn{1}{c|}{\textcolor{talorgreen}{+4.7}}  & 0   & +2 \\
DPCore  & 54.8 & 76.6 & \multicolumn{1}{c|}{63.5} & 116 & 11,007
        & 55.1 & 85.6 & \multicolumn{1}{c|}{66.6} & 115 & 11,141
        & \textcolor{talorgreen}{+0.3} & \textcolor{talorgreen}{+9.0}  & \multicolumn{1}{c|}{\textcolor{talorgreen}{+3.1}}  & \textcolor{talorred}{-1} & +1 \\
\bottomrule
\end{tabular}
\end{table*}

\vspace{-5pt}
\section{Limitations and Broader Impacts}
\vspace{-5pt}
\label{app:limitations_broader_impacts}

\textbf{Limitations.}
TALOR is built upon the geometry induced by a frozen linear classifier head, which provides a simple, task-aligned anchor for test-time feature rectification. Our extensive diagnostics and ablations show that, across different backbones, OOD ratios, open-set streams, closed-set settings, and dynamic continual protocols, harmful target-stream shifts often contain a removable component in the classifier-amplified principal coordinates, while tail coordinates provide useful cues for estimating this component. Extending the head-induced rectification principle to richer prediction tasks, multimodal models, and alternative classifier parameterizations is an interesting direction for future work. Since TALOR is backward-free, source-free, and modular, such extensions may be achieved by constructing task-aligned coordinate systems and lightweight online estimators without changing the underlying model architecture.

\textbf{Broader impacts.}
TALOR offers an innovative and lightweight route toward robust test-time adaptation. By using the frozen source head as a geometric anchor, TALOR performs feature rectification without backpropagation, costly augmentation loops, or large mutable memories. This makes it suitable for low-latency and resource-constrained deployment scenarios, such as edge recognition systems, real-time monitoring, and continually changing visual environments. In open-set streams, TALOR avoids directly updating model parameters from unlabeled target samples, reducing the risk that unknown samples contaminate the source decision boundary. More broadly, its plug-and-play design can make robust adaptation more accessible across deployed recognition systems, especially when computational budgets, privacy constraints, or deployment stability make conventional online optimization undesirable. As with other robustness methods, TALOR should be used together with standard monitoring and calibration practices when deployed in high-stakes applications.

%%%%%%%%%%%%%%%%%%%%%%%%%%%%%%%%%%%%%%%%%%%%%%%%%%%%%%%%%%%%

\newpage
\clearpage
\input{checklist.tex}

\end{document}

%% file: checklist.tex
\section*{NeurIPS Paper Checklist}

\begin{enumerate}

\item {\bf Claims}
    \item[] Question: Do the main claims made in the abstract and introduction accurately reflect the paper's contributions and scope?
    \item[] Answer: \answerYes{} % Replace by \answerYes{}, \answerNo{}, or \answerNA{}.
    \item[] Justification: The abstract and introduction state the scope of TALOR as a backward-free feature-rectification plug-in for open-set test-time adaptation. The main claims are supported by the method description, main OSTTA results, mechanism validation experiment, closed-set evaluations, ablations, and computational analysis.
    \item[] Guidelines:
    \begin{itemize}
        \item The answer \answerNA{} means that the abstract and introduction do not include the claims made in the paper.
        \item The abstract and/or introduction should clearly state the claims made, including the contributions made in the paper and important assumptions and limitations. A \answerNo{} or \answerNA{} answer to this question will not be perceived well by the reviewers. 
        \item The claims made should match theoretical and experimental results, and reflect how much the results can be expected to generalize to other settings. 
        \item It is fine to include aspirational goals as motivation as long as it is clear that these goals are not attained by the paper. 
    \end{itemize}

\item {\bf Limitations}
    \item[] Question: Does the paper discuss the limitations of the work performed by the authors?
    \item[] Answer: \answerYes{} % Replace by \answerYes{}, \answerNo{}, or \answerNA{}.
    \item[] Justification: Limitations are discussed in App.~\ref{app:limitations_broader_impacts}.
    \item[] Guidelines:
    \begin{itemize}
        \item The answer \answerNA{} means that the paper has no limitation while the answer \answerNo{} means that the paper has limitations, but those are not discussed in the paper. 
        \item The authors are encouraged to create a separate ``Limitations'' section in their paper.
        \item The paper should point out any strong assumptions and how robust the results are to violations of these assumptions (e.g., independence assumptions, noiseless settings, model well-specification, asymptotic approximations only holding locally). The authors should reflect on how these assumptions might be violated in practice and what the implications would be.
        \item The authors should reflect on the scope of the claims made, e.g., if the approach was only tested on a few datasets or with a few runs. In general, empirical results often depend on implicit assumptions, which should be articulated.
        \item The authors should reflect on the factors that influence the performance of the approach. For example, a facial recognition algorithm may perform poorly when image resolution is low or images are taken in low lighting. Or a speech-to-text system might not be used reliably to provide closed captions for online lectures because it fails to handle technical jargon.
        \item The authors should discuss the computational efficiency of the proposed algorithms and how they scale with dataset size.
        \item If applicable, the authors should discuss possible limitations of their approach to address problems of privacy and fairness.
        \item While the authors might fear that complete honesty about limitations might be used by reviewers as grounds for rejection, a worse outcome might be that reviewers discover limitations that aren't acknowledged in the paper. The authors should use their best judgment and recognize that individual actions in favor of transparency play an important role in developing norms that preserve the integrity of the community. Reviewers will be specifically instructed to not penalize honesty concerning limitations.
    \end{itemize}

\item {\bf Theory assumptions and proofs}
    \item[] Question: For each theoretical result, does the paper provide the full set of assumptions and a complete (and correct) proof?
    \item[] Answer: \answerYes{} % Replace by \answerYes{}, \answerNo{}, or \answerNA{}.
    \item[] Proposition 1 is formally stated under Assumption 1 in Sec. 4.2, with its complete proof provided in App. D.1. The result characterizes the population tail-conditioned ridge estimator, and its scope and limitations are explicitly discussed in App. D.
    \item[] Guidelines:
    \begin{itemize}
        \item The answer \answerNA{} means that the paper does not include theoretical results. 
        \item All the theorems, formulas, and proofs in the paper should be numbered and cross-referenced.
        \item All assumptions should be clearly stated or referenced in the statement of any theorems.
        \item The proofs can either appear in the main paper or the supplemental material, but if they appear in the supplemental material, the authors are encouraged to provide a short proof sketch to provide intuition. 
        \item Inversely, any informal proof provided in the core of the paper should be complemented by formal proofs provided in appendix or supplemental material.
        \item Theorems and Lemmas that the proof relies upon should be properly referenced. 
    \end{itemize}

    \item {\bf Experimental result reproducibility}
    \item[] Question: Does the paper fully disclose all the information needed to reproduce the main experimental results of the paper to the extent that it affects the main claims and/or conclusions of the paper (regardless of whether the code and data are provided or not)?
    \item[] Answer: \answerYes{} % Replace by \answerYes{}, \answerNo{}, or \answerNA{}.
    \item[] Justification: The paper specifies the TALOR algorithm, evaluation protocol, benchmarks, evaluation metrics, batch size, OOD scoring rule, and fixed hyperparameters used for the main experiments. Additional backbone, OOD-ratio, and computational details are provided in the appendices.
    \item[] Guidelines:
    \begin{itemize}
        \item The answer \answerNA{} means that the paper does not include experiments.
        \item If the paper includes experiments, a \answerNo{} answer to this question will not be perceived well by the reviewers: Making the paper reproducible is important, regardless of whether the code and data are provided or not.
        \item If the contribution is a dataset and\slash or model, the authors should describe the steps taken to make their results reproducible or verifiable. 
        \item Depending on the contribution, reproducibility can be accomplished in various ways. For example, if the contribution is a novel architecture, describing the architecture fully might suffice, or if the contribution is a specific model and empirical evaluation, it may be necessary to either make it possible for others to replicate the model with the same dataset, or provide access to the model. In general. releasing code and data is often one good way to accomplish this, but reproducibility can also be provided via detailed instructions for how to replicate the results, access to a hosted model (e.g., in the case of a large language model), releasing of a model checkpoint, or other means that are appropriate to the research performed.
        \item While NeurIPS does not require releasing code, the conference does require all submissions to provide some reasonable avenue for reproducibility, which may depend on the nature of the contribution. For example
        \begin{enumerate}
            \item If the contribution is primarily a new algorithm, the paper should make it clear how to reproduce that algorithm.
            \item If the contribution is primarily a new model architecture, the paper should describe the architecture clearly and fully.
            \item If the contribution is a new model (e.g., a large language model), then there should either be a way to access this model for reproducing the results or a way to reproduce the model (e.g., with an open-source dataset or instructions for how to construct the dataset).
            \item We recognize that reproducibility may be tricky in some cases, in which case authors are welcome to describe the particular way they provide for reproducibility. In the case of closed-source models, it may be that access to the model is limited in some way (e.g., to registered users), but it should be possible for other researchers to have some path to reproducing or verifying the results.
        \end{enumerate}
    \end{itemize}

\item {\bf Open access to data and code}
    \item[] Question: Does the paper provide open access to the data and code, with sufficient instructions to faithfully reproduce the main experimental results, as described in supplemental material?
    \item[] Answer: \answerYes{} % Replace by \answerYes{}, \answerNo{}, or \answerNA{}.
    \item[] Justification: The code will be released upon acceptance.
    \item[] Guidelines:
    \begin{itemize}
        \item The answer \answerNA{} means that paper does not include experiments requiring code.
        \item Please see the NeurIPS code and data submission guidelines (\url{https://neurips.cc/public/guides/CodeSubmissionPolicy}) for more details.
        \item While we encourage the release of code and data, we understand that this might not be possible, so \answerNo{} is an acceptable answer. Papers cannot be rejected simply for not including code, unless this is central to the contribution (e.g., for a new open-source benchmark).
        \item The instructions should contain the exact command and environment needed to run to reproduce the results. See the NeurIPS code and data submission guidelines (\url{https://neurips.cc/public/guides/CodeSubmissionPolicy}) for more details.
        \item The authors should provide instructions on data access and preparation, including how to access the raw data, preprocessed data, intermediate data, and generated data, etc.
        \item The authors should provide scripts to reproduce all experimental results for the new proposed method and baselines. If only a subset of experiments are reproducible, they should state which ones are omitted from the script and why.
        \item At submission time, to preserve anonymity, the authors should release anonymized versions (if applicable).
        \item Providing as much information as possible in supplemental material (appended to the paper) is recommended, but including URLs to data and code is permitted.
    \end{itemize}

\item {\bf Experimental setting/details}
    \item[] Question: Does the paper specify all the training and test details (e.g., data splits, hyperparameters, how they were chosen, type of optimizer) necessary to understand the results?
    \item[] Answer: \answerYes{} % Replace by \answerYes{}, \answerNo{}, or \answerNA{}.
    \item[] Justification: Already illustrate in Experimental result reproducibility part.
    \item[] Guidelines:
    \begin{itemize}
        \item The answer \answerNA{} means that the paper does not include experiments.
        \item The experimental setting should be presented in the core of the paper to a level of detail that is necessary to appreciate the results and make sense of them.
        \item The full details can be provided either with the code, in appendix, or as supplemental material.
    \end{itemize}

\item {\bf Experiment statistical significance}
    \item[] Question: Does the paper report error bars suitably and correctly defined or other appropriate information about the statistical significance of the experiments?
    \item[] Answer: \answerNo{} % Replace by \answerYes{}, \answerNo{}, or \answerNA{}.
    \item[] Justification: The paper reports averages over corruption types and multiple datasets. We focus on consistent improvements across datasets, baselines, ablations, and computational comparisons.
    \item[] Guidelines:
    \begin{itemize}
        \item The answer \answerNA{} means that the paper does not include experiments.
        \item The authors should answer \answerYes{} if the results are accompanied by error bars, confidence intervals, or statistical significance tests, at least for the experiments that support the main claims of the paper.
        \item The factors of variability that the error bars are capturing should be clearly stated (for example, train/test split, initialization, random drawing of some parameter, or overall run with given experimental conditions).
        \item The method for calculating the error bars should be explained (closed form formula, call to a library function, bootstrap, etc.)
        \item The assumptions made should be given (e.g., Normally distributed errors).
        \item It should be clear whether the error bar is the standard deviation or the standard error of the mean.
        \item It is OK to report 1-sigma error bars, but one should state it. The authors should preferably report a 2-sigma error bar than state that they have a 96\% CI, if the hypothesis of Normality of errors is not verified.
        \item For asymmetric distributions, the authors should be careful not to show in tables or figures symmetric error bars that would yield results that are out of range (e.g., negative error rates).
        \item If error bars are reported in tables or plots, the authors should explain in the text how they were calculated and reference the corresponding figures or tables in the text.
    \end{itemize}

\item {\bf Experiments compute resources}
    \item[] Question: For each experiment, does the paper provide sufficient information on the computer resources (type of compute workers, memory, time of execution) needed to reproduce the experiments?
    \item[] Answer: \answerYes{} % Replace by \answerYes{}, \answerNo{}, or \answerNA{}.
    \item[] Justification: App.~\ref{app:time_and_mem} reports wall-clock time and GPU memory usage for the main baselines with and without TALOR. The experiments are conducted on a single NVIDIA Quadro P6000, and the table specifies the processed stream size and batch size.
    \item[] Guidelines:
    \begin{itemize}
        \item The answer \answerNA{} means that the paper does not include experiments.
        \item The paper should indicate the type of compute workers CPU or GPU, internal cluster, or cloud provider, including relevant memory and storage.
        \item The paper should provide the amount of compute required for each of the individual experimental runs as well as estimate the total compute. 
        \item The paper should disclose whether the full research project required more compute than the experiments reported in the paper (e.g., preliminary or failed experiments that didn't make it into the paper). 
    \end{itemize}
    
\item {\bf Code of ethics}
    \item[] Question: Does the research conducted in the paper conform, in every respect, with the NeurIPS Code of Ethics \url{https://neurips.cc/public/EthicsGuidelines}?
    \item[] Answer: \answerYes{} % Replace by \answerYes{}, \answerNo{}, or \answerNA{}.
    \item[] Justification: The research uses public benchmarks and does not involve human-subject experiments, private data. We have reviewed the NeurIPS Code of Ethics and do not identify any deviation from it.
    \item[] Guidelines:
    \begin{itemize}
        \item The answer \answerNA{} means that the authors have not reviewed the NeurIPS Code of Ethics.
        \item If the authors answer \answerNo, they should explain the special circumstances that require a deviation from the Code of Ethics.
        \item The authors should make sure to preserve anonymity (e.g., if there is a special consideration due to laws or regulations in their jurisdiction).
    \end{itemize}

\item {\bf Broader impacts}
    \item[] Question: Does the paper discuss both potential positive societal impacts and negative societal impacts of the work performed?
    \item[] Answer: \answerYes{} % Replace by \answerYes{}, \answerNo{}, or \answerNA{}.
    \item[] Justification: Broader impacts are discussed in App.~\ref{app:limitations_broader_impacts}.
    \item[] Guidelines:
    \begin{itemize}
        \item The answer \answerNA{} means that there is no societal impact of the work performed.
        \item If the authors answer \answerNA{} or \answerNo, they should explain why their work has no societal impact or why the paper does not address societal impact.
        \item Examples of negative societal impacts include potential malicious or unintended uses (e.g., disinformation, generating fake profiles, surveillance), fairness considerations (e.g., deployment of technologies that could make decisions that unfairly impact specific groups), privacy considerations, and security considerations.
        \item The conference expects that many papers will be foundational research and not tied to particular applications, let alone deployments. However, if there is a direct path to any negative applications, the authors should point it out. For example, it is legitimate to point out that an improvement in the quality of generative models could be used to generate Deepfakes for disinformation. On the other hand, it is not needed to point out that a generic algorithm for optimizing neural networks could enable people to train models that generate Deepfakes faster.
        \item The authors should consider possible harms that could arise when the technology is being used as intended and functioning correctly, harms that could arise when the technology is being used as intended but gives incorrect results, and harms following from (intentional or unintentional) misuse of the technology.
        \item If there are negative societal impacts, the authors could also discuss possible mitigation strategies (e.g., gated release of models, providing defenses in addition to attacks, mechanisms for monitoring misuse, mechanisms to monitor how a system learns from feedback over time, improving the efficiency and accessibility of ML).
    \end{itemize}
    
\item {\bf Safeguards}
    \item[] Question: Does the paper describe safeguards that have been put in place for responsible release of data or models that have a high risk for misuse (e.g., pre-trained language models, image generators, or scraped datasets)?
    \item[] Answer: \answerNA{} % Replace by \answerYes{}, \answerNo{}, or \answerNA{}.
    \item[] Justification: The work proposes a test-time feature-rectification algorithm evaluated on existing public benchmarks.
    \item[] Guidelines:
    \begin{itemize}
        \item The answer \answerNA{} means that the paper poses no such risks.
        \item Released models that have a high risk for misuse or dual-use should be released with necessary safeguards to allow for controlled use of the model, for example by requiring that users adhere to usage guidelines or restrictions to access the model or implementing safety filters. 
        \item Datasets that have been scraped from the Internet could pose safety risks. The authors should describe how they avoided releasing unsafe images.
        \item We recognize that providing effective safeguards is challenging, and many papers do not require this, but we encourage authors to take this into account and make a best faith effort.
    \end{itemize}

\item {\bf Licenses for existing assets}
    \item[] Question: Are the creators or original owners of assets (e.g., code, data, models), used in the paper, properly credited and are the license and terms of use explicitly mentioned and properly respected?
    \item[] Answer: \answerNo{} % Replace by \answerYes{}, \answerNo{}, or \answerNA{}.
    \item[] Justification: Licenses will be included with the code release.
    \item[] Guidelines:
    \begin{itemize}
        \item The answer \answerNA{} means that the paper does not use existing assets.
        \item The authors should cite the original paper that produced the code package or dataset.
        \item The authors should state which version of the asset is used and, if possible, include a URL.
        \item The name of the license (e.g., CC-BY 4.0) should be included for each asset.
        \item For scraped data from a particular source (e.g., website), the copyright and terms of service of that source should be provided.
        \item If assets are released, the license, copyright information, and terms of use in the package should be provided. For popular datasets, \url{paperswithcode.com/datasets} has curated licenses for some datasets. Their licensing guide can help determine the license of a dataset.
        \item For existing datasets that are re-packaged, both the original license and the license of the derived asset (if it has changed) should be provided.
        \item If this information is not available online, the authors are encouraged to reach out to the asset's creators.
    \end{itemize}

\item {\bf New assets}
    \item[] Question: Are new assets introduced in the paper well documented and is the documentation provided alongside the assets?
    \item[] Answer: \answerNA{} % Replace by \answerYes{}, \answerNo{}, or \answerNA{}.
    \item[] Justification: The paper does not introduce a new dataset, benchmark, pretrained model, or other released asset. 
    \item[] Guidelines:
    \begin{itemize}
        \item The answer \answerNA{} means that the paper does not release new assets.
        \item Researchers should communicate the details of the dataset\slash code\slash model as part of their submissions via structured templates. This includes details about training, license, limitations, etc. 
        \item The paper should discuss whether and how consent was obtained from people whose asset is used.
        \item At submission time, remember to anonymize your assets (if applicable). You can either create an anonymized URL or include an anonymized zip file.
    \end{itemize}

\item {\bf Crowdsourcing and research with human subjects}
    \item[] Question: For crowdsourcing experiments and research with human subjects, does the paper include the full text of instructions given to participants and screenshots, if applicable, as well as details about compensation (if any)? 
    \item[] Answer: \answerNA{} % Replace by \answerYes{}, \answerNo{}, or \answerNA{}.
    \item[] Justification: The work does not involve crowdsourcing.
    \item[] Guidelines:
    \begin{itemize}
        \item The answer \answerNA{} means that the paper does not involve crowdsourcing nor research with human subjects.
        \item Including this information in the supplemental material is fine, but if the main contribution of the paper involves human subjects, then as much detail as possible should be included in the main paper. 
        \item According to the NeurIPS Code of Ethics, workers involved in data collection, curation, or other labor should be paid at least the minimum wage in the country of the data collector. 
    \end{itemize}

\item {\bf Institutional review board (IRB) approvals or equivalent for research with human subjects}
    \item[] Question: Does the paper describe potential risks incurred by study participants, whether such risks were disclosed to the subjects, and whether Institutional Review Board (IRB) approvals (or an equivalent approval/review based on the requirements of your country or institution) were obtained?
    \item[] Answer: \answerNA{} % Replace by \answerYes{}, \answerNo{}, or \answerNA{}.
    \item[] Justification:  The work does not involve human-subject research.
    \item[] Guidelines:
    \begin{itemize}
        \item The answer \answerNA{} means that the paper does not involve crowdsourcing nor research with human subjects.
        \item Depending on the country in which research is conducted, IRB approval (or equivalent) may be required for any human subjects research. If you obtained IRB approval, you should clearly state this in the paper. 
        \item We recognize that the procedures for this may vary significantly between institutions and locations, and we expect authors to adhere to the NeurIPS Code of Ethics and the guidelines for their institution. 
        \item For initial submissions, do not include any information that would break anonymity (if applicable), such as the institution conducting the review.
    \end{itemize}

\item {\bf Declaration of LLM usage}
    \item[] Question: Does the paper describe the usage of LLMs if it is an important, original, or non-standard component of the core methods in this research? Note that if the LLM is used only for writing, editing, or formatting purposes and does \emph{not} impact the core methodology, scientific rigor, or originality of the research, declaration is not required.
    %this research? 
    \item[] Answer: \answerNA{} % Replace by \answerYes{}, \answerNo{}, or \answerNA{}.
    \item[] Justification: The core method development, experiments, and scientific claims do not rely on LLMs as an important, original, or non-standard component. Any writing, editing, or formatting assistance does not affect the methodology, scientific rigor, or originality of the research.
    \item[] Guidelines:
    \begin{itemize}
        \item The answer \answerNA{} means that the core method development in this research does not involve LLMs as any important, original, or non-standard components.
        \item Please refer to our LLM policy in the NeurIPS handbook for what should or should not be described.
    \end{itemize}

\end{enumerate}

%% file: nips2026.bib
@inproceedings{chen2022contrastive,
  title={Contrastive test-time adaptation},
  author={Chen, Dian and Wang, Dequan and Darrell, Trevor and Ebrahimi, Sayna},
  booktitle={Proceedings of the IEEE/CVF Conference on Computer Vision and Pattern Recognition},
  pages={295--305},
  year={2022}
}

@inproceedings{sanqing2024LEAD,
  title={LEAD: Learning Decomposition for Source-free Universal Domain Adaptation},
  author={Qu, Sanqing and Zou, Tianpei and He, Lianghua and Röhrbein, Florian and Knoll, Alois and Chen, Guang and Jiang, Changjun},
  booktitle={CVPR},
  year={2024},
}

@article{hendrycks2021many,
  title={The Many Faces of Robustness: A Critical Analysis of Out-of-Distribution Generalization},
  author={Dan Hendrycks and Steven Basart and Norman Mu and Saurav Kadavath and Frank Wang and Evan Dorundo and Rahul Desai and Tyler Zhu and Samyak Parajuli and Mike Guo and Dawn Song and Jacob Steinhardt and Justin Gilmer},
  journal={ICCV},
  year={2021}
}

@inproceedings{wang2019learning,
        title={Learning Robust Global Representations by Penalizing Local Predictive Power},
        author={Wang, Haohan and Ge, Songwei and Lipton, Zachary and Xing, Eric P},
        booktitle={Advances in Neural Information Processing Systems},
        pages={10506--10518},
        year={2019}
}

@inproceedings{tang2024domain,
  title={Domain-conditioned transformer for fully test-time adaptation},
  author={Tang, Yushun and Chen, Shuoshuo and Jia, Jiyuan and Zhang, Yi and He, Zhihai},
  booktitle={Proceedings of the 32nd ACM International Conference on Multimedia},
  pages={6260--6269},
  year={2024}
}

@inproceedings{yuan2024tea,
  title={Tea: Test-time energy adaptation},
  author={Yuan, Yige and Xu, Bingbing and Hou, Liang and Sun, Fei and Shen, Huawei and Cheng, Xueqi},
  booktitle={Proceedings of the IEEE/CVF Conference on Computer Vision and Pattern Recognition},
  pages={23901--23911},
  year={2024}
}

@article{lee2024entropy,
  title={Entropy is not enough for test-time adaptation: From the perspective of disentangled factors},
  author={Lee, Jonghyun and Jung, Dahuin and Lee, Saehyung and Park, Junsung and Shin, Juhyeon and Hwang, Uiwon and Yoon, Sungroh},
  journal={arXiv preprint arXiv:2403.07366},
  year={2024}
}

@inproceedings{boudiaf2022parameter,
  title={Parameter-free online test-time adaptation},
  author={Boudiaf, Malik and Mueller, Romain and Ben Ayed, Ismail and Bertinetto, Luca},
  booktitle={Proceedings of the IEEE/CVF Conference on Computer Vision and Pattern Recognition},
  pages={8344--8353},
  year={2022}
}

@article{niu2024test,
  title={Test-time model adaptation with only forward passes},
  author={Niu, Shuaicheng and Miao, Chunyan and Chen, Guohao and Wu, Pengcheng and Zhao, Peilin},
  journal={arXiv preprint arXiv:2404.01650},
  year={2024}
}

@article{gui2024active,
  title={Active test-time adaptation: Theoretical analyses and an algorithm},
  author={Gui, Shurui and Li, Xiner and Ji, Shuiwang},
  journal={arXiv preprint arXiv:2404.05094},
  year={2024}
}

@inproceedings{wang2025effortless,
  title={Effortless active labeling for long-term test-time adaptation},
  author={Wang, Guowei and Ding, Changxing},
  booktitle={Proceedings of the Computer Vision and Pattern Recognition Conference},
  pages={25633--25642},
  year={2025}
}

@article{lee2025test,
  title={Test-Time Adaptation with Binary Feedback},
  author={Lee, Taeckyung and Chottananurak, Sorn and Kim, Junsu and Shin, Jinwoo and Gong, Taesik and Lee, Sung-Ju},
  journal={arXiv preprint arXiv:2505.18514},
  year={2025}
}

@inproceedings{Gong20264DPC,
  title={4D Point Cloud Segmentation via Active Test-Time Adaptation},
  author={Mingrong Gong and Chao-Qi Chen and Lu-Yao Tang and Yuxiao Wang and Sergio Escalera},
  booktitle={AAAI Conference on Artificial Intelligence},
  year={2026},
  url={https://api.semanticscholar.org/CorpusID:286691532}
}

@inproceedings{wang2022continual,
  title={Continual test-time domain adaptation},
  author={Wang, Qin and Fink, Olga and Van Gool, Luc and Dai, Dengxin},
  booktitle={Proceedings of the IEEE/CVF Conference on Computer Vision and Pattern Recognition},
  pages={7201--7211},
  year={2022}
}

@inproceedings{wang2023multiple,
  title={Multiple teacher model for continual test-time domain adaptation},
  author={Wang, Ran and Zuo, Hua and Fang, Zhen and Lu, Jie},
  booktitle={Australasian Joint Conference on Artificial Intelligence},
  pages={304--314},
  year={2023},
  organization={Springer}
}

@inproceedings{cao2023multi,
  title={Multi-modal continual test-time adaptation for 3d semantic segmentation},
  author={Cao, Haozhi and Xu, Yuecong and Yang, Jianfei and Yin, Pengyu and Yuan, Shenghai and Xie, Lihua},
  booktitle={Proceedings of the IEEE/CVF international conference on computer vision},
  pages={18809--18819},
  year={2023}
}

@article{lee2024becotta,
  title={Becotta: Input-dependent online blending of experts for continual test-time adaptation},
  author={Lee, Daeun and Yoon, Jaehong and Hwang, Sung Ju},
  journal={arXiv preprint arXiv:2402.08712},
  year={2024}
}

@inproceedings{chen2024each,
  title={Each test image deserves a specific prompt: Continual test-time adaptation for 2d medical image segmentation},
  author={Chen, Ziyang and Pan, Yongsheng and Ye, Yiwen and Lu, Mengkang and Xia, Yong},
  booktitle={Proceedings of the IEEE/CVF conference on computer vision and pattern recognition},
  pages={11184--11193},
  year={2024}
}

@inproceedings{hong2023mecta,
  title={Mecta: Memory-economic continual test-time model adaptation},
  author={Hong, Junyuan and Lyu, Lingjuan and Zhou, Jiayu and Spranger, Michael},
  booktitle={2023 International Conference on Learning Representations},
  year={2023}
}

@inproceedings{niu2022efficient,
  title={Efficient test-time model adaptation without forgetting},
  author={Niu, Shuaicheng and Wu, Jiaxiang and Zhang, Yifan and Chen, Yaofo and Zheng, Shijian and Zhao, Peilin and Tan, Mingkui},
  booktitle={International conference on machine learning},
  pages={16888--16905},
  year={2022},
  organization={PMLR}
}

@article{niu2023towards,
  title={Towards stable test-time adaptation in dynamic wild world},
  author={Niu, Shuaicheng and Wu, Jiaxiang and Zhang, Yifan and Wen, Zhiquan and Chen, Yaofo and Zhao, Peilin and Tan, Mingkui},
  journal={arXiv preprint arXiv:2302.12400},
  year={2023}
}

@article{jiang2025feature,
  title={Feature-Based Instance Neighbor Discovery: Advanced Stable Test-Time Adaptation in Dynamic World},
  author={Jiang, Qinting and Ye, Chuyang and Wei, Dongyan and Wang, Bingli and Xue, Yuan and Jiang, Jingyan and Wang, Zhi},
  journal={arXiv preprint arXiv:2506.06782},
  year={2025}
}

@inproceedings{gan2023decorate,
  title={Decorate the newcomers: Visual domain prompt for continual test time adaptation},
  author={Gan, Yulu and Bai, Yan and Lou, Yihang and Ma, Xianzheng and Zhang, Renrui and Shi, Nian and Luo, Lin},
  booktitle={Proceedings of the AAAI conference on artificial intelligence},
  volume={37},
  pages={7595--7603},
  year={2023}
}

@article{zhang2024dpcore,
  title={Dpcore: Dynamic prompt coreset for continual test-time adaptation},
  author={Zhang, Yunbei and Mehra, Akshay and Niu, Shuaicheng and Hamm, Jihun},
  journal={arXiv preprint arXiv:2406.10737},
  year={2024}
}

@article{liu2023vida,
  title={Vida: Homeostatic visual domain adapter for continual test time adaptation},
  author={Liu, Jiaming and Yang, Senqiao and Jia, Peidong and Zhang, Renrui and Lu, Ming and Guo, Yandong and Xue, Wei and Zhang, Shanghang},
  journal={arXiv preprint arXiv:2306.04344},
  year={2023}
}

@article{wang2025paid,
  title={PAID: Pairwise Angular-Invariant Decomposition for Continual Test-Time Adaptation},
  author={Wang, Kunyu and Fu, Xueyang and Bao, Yuanfei and Ge, Chengjie and Cao, Chengzhi and Zhai, Wei and Zha, Zheng-Jun},
  journal={arXiv preprint arXiv:2506.02453},
  year={2025}
}

@inproceedings{ni2025maintaining,
  title={Maintaining consistent inter-class topology in continual test-time adaptation},
  author={Ni, Chenggong and Lyu, Fan and Tan, Jiayao and Hu, Fuyuan and Yao, Rui and Zhou, Tao},
  booktitle={Proceedings of the Computer Vision and Pattern Recognition Conference},
  pages={15319--15328},
  year={2025}
}

@inproceedings{parkrethinking,
  title={Rethinking Entropy in Test-Time Adaptation: The Missing Piece from Energy Duality},
  author={Park, Mincheol and Won, Heeji and Ro, Won Woo and Kim, Suhyun},
  booktitle={The Thirty-ninth Annual Conference on Neural Information Processing Systems},
  year={2025}
}

@article{liu2025test,
  title={Test-Time Adaptation by Causal Trimming},
  author={Liu, Yingnan and Qiao, Rui and Lee, Mong Li and Hsu, Wynne},
  journal={arXiv preprint arXiv:2510.11133},
  year={2025}
}

@article{Maharana2024PALMPA,
  title={PALM: Pushing Adaptive Learning Rate Mechanisms for Continual Test-Time Adaptation},
  author={Sarthak Kumar Maharana and Baoming Zhang and Yunhui Guo},
  journal={ArXiv},
  year={2024},
  volume={abs/2403.10650},
  url={https://api.semanticscholar.org/CorpusID:268513104}
}

@article{
maharana2026continual,
title={Continual Test-Time Adaptation in Computer Vision: Methods, Benchmarks, and Future Directions},
author={Sarthak Kumar Maharana and Shambhavi Mishra and Yunbei Zhang and Shuaicheng Niu and Taki Hasan Rafi and Jihun Hamm and Marco Pedersoli and Jose Dolz and Yunhui Guo},
journal={Transactions on Machine Learning Research},
issn={2835-8856},
year={2026},
url={https://openreview.net/forum?id=mM3r03Xw1V},
note={Survey Certification}
}

@inproceedings{yu2024stamp,
  title={Stamp: Outlier-aware test-time adaptation with stable memory replay},
  author={Yu, Yongcan and Sheng, Lijun and He, Ran and Liang, Jian},
  booktitle={European Conference on Computer Vision},
  pages={375--392},
  year={2024},
  organization={Springer}
}

@inproceedings{lee2023towards,
  title={Towards open-set test-time adaptation utilizing the wisdom of crowds in entropy minimization},
  author={Lee, Jungsoo and Das, Debasmit and Choo, Jaegul and Choi, Sungha},
  booktitle={Proceedings of the IEEE/CVF International Conference on Computer Vision},
  pages={16380--16389},
  year={2023}
}

@inproceedings{gao2024unified,
  title={Unified entropy optimization for open-set test-time adaptation},
  author={Gao, Zhengqing and Zhang, Xu-Yao and Liu, Cheng-Lin},
  booktitle={Proceedings of the IEEE/CVF Conference on Computer Vision and Pattern Recognition},
  pages={23975--23984},
  year={2024}
}

@article{zhang2024come,
  title={COME: Test-time adaption by conservatively minimizing entropy},
  author={Zhang, Qingyang and Bian, Yatao and Kong, Xinke and Zhao, Peilin and Zhang, Changqing},
  journal={arXiv preprint arXiv:2410.10894},
  year={2024}
}

@article{cao2025noisy,
  title={Noisy Test-Time Adaptation in Vision-Language Models},
  author={Cao, Chentao and Zhong, Zhun and Zhou, Zhanke and Liu, Tongliang and Liu, Yang and Zhang, Kun and Han, Bo},
  journal={arXiv preprint arXiv:2502.14604},
  year={2025}
}

@article{sreenivas2024efficient,
  title={Efficient Open Set Single Image Test Time Adaptation of Vision Language Models},
  author={Sreenivas, Manogna and Biswas, Soma},
  journal={arXiv preprint arXiv:2406.00481},
  year={2024}
}

@inproceedings{li2023robustness,
  title={On the robustness of open-world test-time training: Self-training with dynamic prototype expansion},
  author={Li, Yushu and Xu, Xun and Su, Yongyi and Jia, Kui},
  booktitle={Proceedings of the IEEE/CVF International Conference on Computer Vision},
  pages={11836--11846},
  year={2023}
}

@article{Zhao2026MitigatingTI,
  title={Mitigating the ID–OOD Tradeoff in Open-Set Test-Time Adaptation},
  author={Wenjie Zhao and Jia Li and Xinyao Dong and Yapeng Tian and Yu Xiang and Yunhui Guo},
  journal={2026 IEEE/CVF Conference on Computer Vision and Pattern Recognition- FINDINGS Track (CVPRF)},
  year={2026},
  pages={6792-6801},
  url={https://api.semanticscholar.org/CorpusID:287072352}
}

@InProceedings{Yang_2026_CVPR,
    author    = {Yang, Yingkai and Chen, Chaoqi and Huang, Hui},
    title     = {Back to Source: Open-Set Continual Test-Time Adaptation via Domain Compensation},
    booktitle = {Proceedings of the IEEE/CVF Conference on Computer Vision and Pattern Recognition (CVPR)},
    month     = {June},
    year      = {2026},
    pages     = {7957-7966}
}

@inproceedings{
zou2026good,
title={{GOOD}: Geometry-guided Out-of-Distribution Modeling for Open-set Test-time Adaptation in Point Cloud Semantic Segmentation},
author={Tianpei Zou and Guo Yu and Ya Wu and Fan Lu and Zhongcong Xu and Zhang Bo and Ziqiao Wang and Sanqing Qu and Guang Chen},
booktitle={The Fourteenth International Conference on Learning Representations},
year={2026},
url={https://openreview.net/forum?id=HyNWlZd4iO}
}

@article{eckart1936approximation,
  title={The approximation of one matrix by another of lower rank},
  author={Eckart, Carl and Young, Gale},
  journal={Psychometrika},
  volume={1},
  number={3},
  pages={211--218},
  year={1936},
  publisher={Springer-Verlag}
}

@book{woodbury1950inverting,
  title={Inverting modified matrices},
  author={Woodbury, Max A},
  year={1950},
  publisher={Department of Statistics, Princeton University}
}

@article{henderson1981deriving,
  title={On deriving the inverse of a sum of matrices},
  author={Henderson, Harold V and Searle, Shayle R},
  journal={SIAM review},
  volume={23},
  number={1},
  pages={53--60},
  year={1981},
  publisher={SIAM}
}

@book{higham2002accuracy,
  title={Accuracy and stability of numerical algorithms},
  author={Higham, Nicholas J},
  year={2002},
  publisher={SIAM}
}

@article{hastie2005elements,
  title={The elements of statistical learning: data mining, inference and prediction},
  author={Hastie, Trevor and Tibshirani, Robert and Friedman, Jerome and Franklin, James},
  journal={The Mathematical Intelligencer},
  volume={27},
  number={2},
  pages={83--85},
  year={2005},
  publisher={Springer}
}

@inproceedings{xiao2010sun,
  title={Sun database: Large-scale scene recognition from abbey to zoo},
  author={Xiao, Jianxiong and Hays, James and Ehinger, Krista A and Oliva, Aude and Torralba, Antonio},
  booktitle={2010 IEEE computer society conference on computer vision and pattern recognition},
  pages={3485--3492},
  year={2010},
  organization={IEEE}
}

@article{hendrycks2019robustness,
  title={Benchmarking Neural Network Robustness to Common Corruptions and Perturbations},
  author={Dan Hendrycks and Thomas Dietterich},
  journal={Proceedings of the International Conference on Learning Representations},
  year={2019}
}

@article{li2025laion,
  title={LAION-C: An Out-of-Distribution Benchmark for Web-Scale Vision Models},
  author={Li, Fanfei and Klein, Thomas and Brendel, Wieland and Geirhos, Robert and Zimmermann, Roland S},
  journal={arXiv preprint arXiv:2506.16950},
  year={2025}
}

@article{liu2020energy,
  title={Energy-based out-of-distribution detection},
  author={Liu, Weitang and Wang, Xiaoyun and Owens, John and Li, Yixuan},
  journal={Advances in neural information processing systems},
  volume={33},
  pages={21464--21475},
  year={2020}
}

@inproceedings{cimpoi2014describing,
  title={Describing textures in the wild},
  author={Cimpoi, Mircea and Maji, Subhransu and Kokkinos, Iasonas and Mohamed, Sammy and Vedaldi, Andrea},
  booktitle={Proceedings of the IEEE conference on computer vision and pattern recognition},
  pages={3606--3613},
  year={2014}
}

@inproceedings{van2018inaturalist,
  title={The inaturalist species classification and detection dataset},
  author={Van Horn, Grant and Mac Aodha, Oisin and Song, Yang and Cui, Yin and Sun, Chen and Shepard, Alex and Adam, Hartwig and Perona, Pietro and Belongie, Serge},
  booktitle={Proceedings of the IEEE conference on computer vision and pattern recognition},
  pages={8769--8778},
  year={2018}
}

@article{zhou2017places,
  title={Places: A 10 million image database for scene recognition},
  author={Zhou, Bolei and Lapedriza, Agata and Khosla, Aditya and Oliva, Aude and Torralba, Antonio},
  journal={IEEE transactions on pattern analysis and machine intelligence},
  volume={40},
  number={6},
  pages={1452--1464},
  year={2017},
  publisher={IEEE}
}

@article{bitterwolf2023or,
  title={In or out? fixing imagenet out-of-distribution detection evaluation},
  author={Bitterwolf, Julian and Mueller, Maximilian and Hein, Matthias},
  journal={arXiv preprint arXiv:2306.00826},
  year={2023}
}

@InProceedings{vaze2022openset,
               title={Open-Set Recognition: a Good Closed-Set Classifier is All You Need?},
               author={Sagar Vaze and Kai Han and Andrea Vedaldi and Andrew Zisserman},
               booktitle={International Conference on Learning Representations},
               year={2022}}

@InProceedings{Yun_2025_CVPR,
    author    = {Yun, Seokju and Chae, Seunghye and Lee, Dongheon and Ro, Youngmin},
    title     = {SoMA: Singular Value Decomposed Minor Components Adaptation for Domain Generalizable Representation Learning},
    booktitle = {Proceedings of the Computer Vision and Pattern Recognition Conference (CVPR)},
    month     = {June},
    year      = {2025},
    pages     = {25602-25612}
}

@inproceedings{adachi2025testtime,
title={Test-time Adaptation for Regression by Subspace Alignment},
author={Kazuki Adachi and Shin'ya Yamaguchi and Atsutoshi Kumagai and Tomoki Hamagami},
booktitle={The Thirteenth International Conference on Learning Representations},
year={2025},
url={https://openreview.net/forum?id=SXtl7NRyE5}
}
